\newcommand{\repr}[1]{\llbracket #1\rrbracket}
\begin{document}
\title{Automata Learning from Preference and Equivalence Queries}

\author{Eric Hsiung\orcidID{0000-0002-4188-4127} \and
Joydeep Biswas\orcidID{0000-0002-1211-1731} \and
Swarat Chaudhuri\orcidID{0000-0002-6859-1391}}
%
%
\institute{Univesity of Texas at Austin, Austin TX 78712, USA\\
\email{\{ehsiung,joydeepb,swarat\}@cs.utexas.edu}}

\authorrunning{E. Hsiung et al.}
%

\maketitle

\newcommand{\ouralgorithm}{\textsc{Remap}}
\newcommand{\obstable}{\langle S, E, T; \mathcal{C}, \Gamma\rangle}
\newcommand{\boldemph}[1]{\textbf{\emph{#1}}}
\newcommand{\orow}[1]{\textbf{\textit{row}}(#1)}
\newcommand{\orows}[1]{\textbf{\textit{rows}}(#1)}
\newcommand{\lstar}{\ensuremath{\textrm{L}^*}}
\begin{abstract}
Active automata learning from membership and equivalence queries is a foundational problem with numerous applications. We propose a novel variant of the active automata learning problem:
actively learn finite automata using \emph{preference queries}---i.e., queries about the relative position of two sequences in a total preorder---instead of membership queries. 
Our solution is \ouralgorithm{}, a novel algorithm 
which leverages a symbolic observation table along with unification and constraint solving to navigate a space of symbolic hypotheses (each representing a set of automata), and uses satisfiability-solving to construct a concrete automaton (specifically a Moore machine) from a symbolic hypothesis.
\ouralgorithm{} is guaranteed to correctly infer the minimal automaton with polynomial query complexity under exact equivalence queries, and achieves PAC--identification ($\varepsilon$-approximate, with high probability) of the minimal automaton using sampling-based equivalence queries.
Our empirical evaluations of \ouralgorithm{} on the task of learning reward machines for two reinforcement learning domains indicate \ouralgorithm{} scales to large automata and is effective at learning correct automata from consistent teachers, under both exact and sampling-based equivalence queries.
\end{abstract}

\section{Introduction}

Active automata learning has applications from software engineering~\cite{Schuts2016Refactoring,Aarts2012LearningAT} and verification~\cite{Lin2014Learning} to interpretable machine learning~\cite{Weiss2019} and learning reward machines~\cite{tappler2019based,GaonB20_nonmarkovian,xu_lstar,dohmen-2022-icaps}. The classical problem formulation involves a teacher with access to a regular language and a learner which asks \emph{membership} and \emph{equivalence} queries to infer a finite automaton describing the regular language~\cite{Angluin87,Gold1978ComplexityOA}. 

Consider an alternative formulation: \emph{learning a finite automaton from 
preference and 
equivalence queries}.\footnote{\citet{shah2023learning} investigates choosing between membership and preference queries.} A preference query resolves the relative position of two sequences in a total preorder available to the teacher. The motivation for learning from preferences stems from leveraging human preferences as a rich source of information. In fact, comparative feedback, such as preferences, is a modality which humans are apt at providing in comparison to giving specific numerical values, as shown by \citet{macglashan2017coach} and \citet{christiano2017}, likely due to \textit{choice overload} \cite{iyengar2000choice}. But preferences need not be directly obtained from individual humans: preferences can also be obtained from automated systems in a frequency-based sense. For example, consider an obstacle avoidance scenario of a vehicle avoiding large debris in the middle of a roadway. Consider a dataset of a population of driver responses (vehicle trajectories) to such a scenario, procured from dashcam or traffic camera footage. The population preference of a given driving response can be automatically determined by ranking each response by the frequency of occurrence in the dataset. In fact, learning from human preference data has applications in fine-tuning language models~\cite{Ouyang2022TrainingLM}, learning conditional preference networks~\cite{Koriche2009LearningCP, Guerin2013LearningCP}, inferring reinforcement learning policies~\cite{Christiano2017DeepRL}, and learning Markovian reward functions~\cite{biyki2022aprel,Sadigh2017ActivePL,Bewley2021InterpretablePR, Kalra2023CanDD}. Possible applications for learning finite automata from preferences over sequences include inferring sequence classifications (e.g. program executions~\cite{Giannakopoulou2012SymbolicLO}, vehicle maneuvers, human-robot interactions) using ordered classes (e.g. \emph{safe, risky, dangerous, fatal}) with each automaton state labeled by a class, 
distilling interpretable preference models~\cite{Weiss2019}, and inferring reward machines. However, no method currently exists for learning finite automata from preferences with the termination and correctness guarantees enjoyed by classical automata learning algorithms such as \lstar{}~\cite{Angluin87}.
\begin{figure}[t]
    \centering
    \includegraphics[width=0.65\columnwidth]{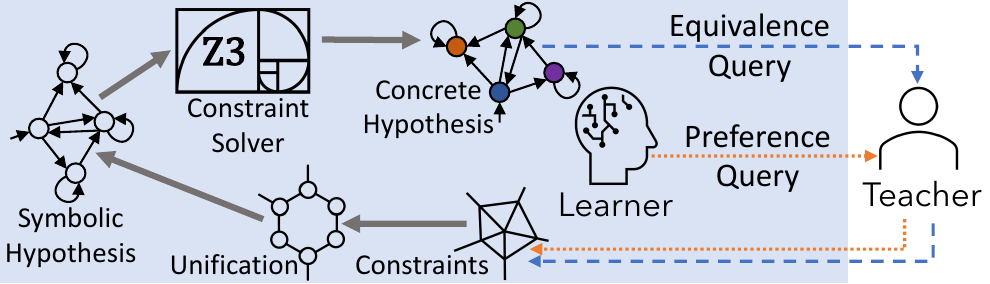}
    \caption{REMAP Algorithm Overview}
    \label{fig:diagram}
\end{figure}

Unfortunately, adapting \lstar{} to the preference-based setting is challenging. Preference queries do not directly provide the concrete observations available from membership queries, as required by \lstar{}, so our solution \ouralgorithm{} addresses this challenge through a \emph{symbolic approach} to \lstar{}.

\ouralgorithm{}\footnote{Code available at \url{https://eric-hsiung.github.io/remap/}, as well as supplementary material.} (Figure \ref{fig:diagram}) features termination and correctness guarantees for \emph{exact} and \emph{probably approximately correct} \cite{valiant1984} (PAC) \emph{identification} \cite{angluin1988queries} of the desired automaton, with a strong learner capable of symbolic reasoning and constraint-solving to offset the weaker preference-based signals from the teacher. While approximate learners have been proposed to learn from a combination of membership and preference queries~\cite{shah2023learning}, \ouralgorithm{} is the first exact learner with formal guarantees of minimalism, correctness, and query complexity. By using unification\footnote{Symbolically obtaining sets of equivalent variables from sets of equations, and substituting variables in the equations with their representatives. See Section \hyperref[sec:unification]{4.2}.} to navigate the symbolic space of hypotheses, and constraint-solving to construct a concrete automaton from a symbolic hypothesis, \ouralgorithm{} identifies, in a polynomial number of queries, the minimal Moore machine isomorphic to one describing the teacher’s total preorder over input sequences when using \emph{exact} equivalence queries. However, exact equivalence queries may be infeasible if the learner and teacher lack a common representation.\footnote{Either a pair of identical representations, or between which a translation exists.} This motivates \emph{sampling-based} equivalence queries under \ouralgorithm{}, which achieves \emph{PAC--identification}.\footnote{See Definition \ref{def:pacid} for PAC--identification and related Theorems \ref{thm:pacid1} and \ref{thm:pacid2}.} Our empirical evaluations apply \ouralgorithm{} to learn reward machines for sequential-decision making domains in the reward machine literature. We measure query complexity for the exact and PAC--identification settings, and measure empirical correctness for the PAC--identification setting.

We contribute (a) \ouralgorithm{}, a novel \lstar{} style algorithm for learning Moore machines using preference and equivalence queries; under \emph{exact} and \textit{PAC-identifica-tion} settings we provide (b) theoretical analysis of query complexity, correctness, and minimalism, and (c) supporting empirical results, demonstrating the efficacy of the algorithm's ability to learn reward machines from preference and equivalence queries.

\section{Background}
Prior to introducing \ouralgorithm{}, we provide preliminaries on orders and finite automata, followed by a discussion of Angluin's \lstar{} algorithm in order to highlight how \ouralgorithm{} differs from \lstar{}.

\begin{definition}[Total Preorder] The binary relation $\precsim$ on a set $A$ is a \emph{total preorder} if $\precsim$ satisfies: $a\precsim a$ for all $a\in A$ \emph{(reflexivity)}; $a\precsim b \land b\precsim c \implies a\prec c$ for all $a,b,c \in A$ \emph{(transitivity)}; and $a\precsim b \lor b\precsim a$ for all $a,b\in A$ \emph{(total)}.
\end{definition}
\begin{definition}[Total Order]  The binary relation $\precsim$ on a set $A$ is a \emph{total order} if $\precsim$ satisfies: total preorder conditions; and $a\precsim b \land b\precsim a\implies a=b$ for all $a,b\in A$ \emph{(antisymmetric)}; $a=b$ means $a$ and $b$ are the same element from $A$.
\end{definition}

\boldemph{Finite Automata} \ Automata describe sets of sequences. An \textit{alphabet} $\Sigma$ is a set whose elements can be used to construct sequences; $\Sigma^*$ represents the set of sequences of any length created from elements of $\Sigma$. A sequence $s\in \Sigma^*$ has integer length $|s|\geq 0$; if $|s|=0$, then $s$ is the empty sequence $\varepsilon$. An element $\sigma\in\Sigma$ has length 1; if $s,t\in\Sigma^*$, then $s\cdot t$ represents $s$ concatenated with $t$, with length $|s|+|t|$. 
Different types of finite automata have different semantics.
Deterministic automata feature a deterministic transition function $\delta$ defined over a set of states $Q$ and an input alphabet $\Sigma^I$. An output alphabet $\Sigma^O$ may be present to label states or transitions using a labeling function $L$. 
The tuple $\langle Q, q_0, \Sigma^I, \Sigma^O, \delta, L\rangle$ is a \textit{Moore machine}~\cite{Moore56}, where $q_0 \in Q$ is the initial state, $\delta: Q\times \Sigma^I \rightarrow Q$ describes transitions, and $L: Q\rightarrow \Sigma^O$ associates outputs with states. Extended, $\delta: Q\times (\Sigma^I)^*\rightarrow Q$, where $\delta(q,\varepsilon)=q$ and $\delta(q,\sigma\cdot s)=\delta(\delta(q,\sigma), s)$. In \textit{Mealy machines}~\cite{mealy}, outputs are associated with transitions instead, so $L: Q\times \Sigma^I\rightarrow \Sigma^O$. Mealy and Moore machines are equivalent~\cite{FLEISCHNER19771} and can be converted between one another.
Reward machines are an application of Mealy machines in reinforcement learning and are used to express a class of non-Markovian reward functions.

\boldemph{Active Automaton Learning} \ Consider the problem of actively learning a Moore machine $\langle Q, q_0, \Sigma^I, \Sigma^O, \delta, L\rangle$ to exactly model a function $f: (\Sigma^I)^*\rightarrow \Sigma^O$, where $\Sigma^I$ and $\Sigma^O$ are input and output alphabets of finite size known to both teacher and learner. We desire a learner which learns a model $\hat{f}$ of $f$ exactly; that is for all $s\in (\Sigma^I)^*$, we require $\hat{f}(s)=f(s)$, where $\hat{f}(s)=L(\delta(q_0,s))$, with the assistance of a teacher $\mathcal{T}$ that can answer questions about $f$.

In Angluin's seminal active learning algorithm, \lstar{}~\cite{Angluin87}, 
the learner learns $\hat{f}$ as a binary classifier, where $|\Sigma^O|=2$, to determine sequence membership of a regular language by querying $\mathcal{T}$ with: \textit{(i) membership queries}, where the learner asks $\mathcal{T}$ for the value of $f(s)$ for a particular sequence $s$, and \textit{(ii) equivalence queries}, where the learner asks $\mathcal{T}$ to evaluate whether for all $s\in(\Sigma^I)^*$, $\hat{f}(s)=f(s)$. For the latter query, $\mathcal{T}$ returns True if the statement holds; otherwise a counterexample $c$ for which~$\hat{f}(c)\neq f(c)$ is returned. An \textit{observation table} $\langle S, E, T\rangle$ records the concrete observations acquired by the learner's queries (see Figure \ref{fig:example-lstar}). Here, $S$ is a set of prefixes, $E$ is a set of suffixes, and $T$ is the empirical observation function that maps sequences to output values---$T: (S \cup (S \cdot \Sigma^I))\cdot E \rightarrow \Sigma^O$. The observation table is a two-dimensional array; $s\in(S\cup (S\cdot \Sigma^I))$ indexes rows, and $e\in E$ indexes columns, with entries given by $T(s\cdot e)$. Any proposed hypothesis $\hat{f}$ must be consistent with $T$. The algorithm operates by construction: a deterministic transition function must be found exhibiting consistency (deterministic transitions) and operates in a closed manner over the set of states. If the consistency or closure requirements are violated, then membership queries are executed to expand the observation table. Once a suitable transition function is found, a hypothesis $\hat{f}$ can be made and checked via the equivalence query. The algorithm terminates if $\hat{f}(s)=f(s)$ for all $s\in(\Sigma^I)^*$; otherwise \lstar{} continues on by adding counterexample $c$ and all its prefixes to the table, and finds another transition function satisfying the consistency and closure requirements.
\begin{figure*}[t]
    \centering
    \includegraphics[width=0.75\textwidth]{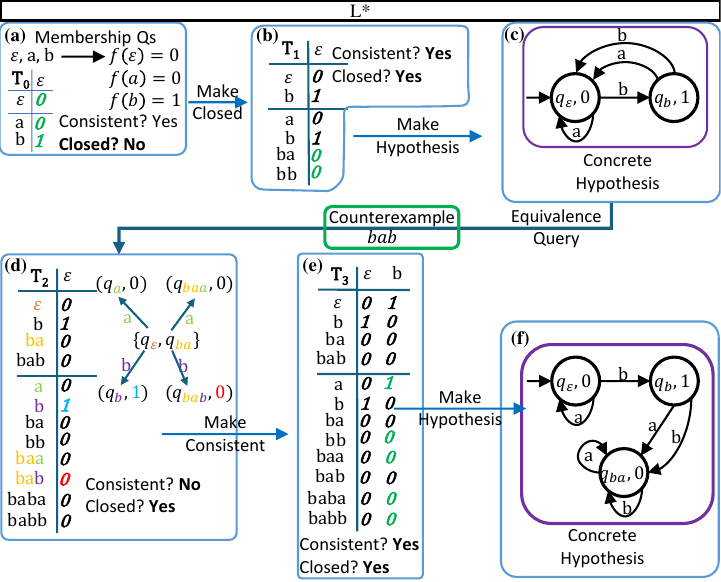}\\
    \caption{\lstar{} example. Figure \ref{fig:example} illustrates \ouralgorithm{}. \lstar{} records concrete values from membership queries in observation table. Green symbolizes new information. Colors in \emph{(d)} visually highlight transition inconsistencies.}
    \label{fig:example-lstar}
\end{figure*}

Consequently, we consider how \lstar{}-style learning can be used to learn Moore machines from \emph{preference queries} over sequences, as a foray into understanding how finite automaton structure can be learned from comparison information.

\section{Problem Statement} We consider how a Moore machine $\langle Q, q_0, \Sigma^I, \Sigma^O, \delta, L\rangle$ can be actively learned from \emph{preference queries} over $(\Sigma^I)^*$. 
We focus on the case of a finite sized $\Sigma^I$ and $\Sigma^O$ known to both teacher and learner. 
With a \emph{preference query}, the learner asks the teacher $\mathcal{T}$ which of two sequences $s_1$ and $s_2$ is preferred, or if both are equally preferable.
This requires a preference model, which we assume represents a total preordering over $(\Sigma^I)^*$; we also assume $\Sigma^O$ is totally ordered. Thus, we consider a preference model for $\mathcal{T}$ where $f:(\Sigma^I)^*\rightarrow\Sigma^O$ is consistent with both orderings, i.e., $\mathcal{T}$ prefers $s_1$ over $s_2$ if $f(s_1)>f(s_2)$, or otherwise has equal preference if $f(s_1)=f(s_2)$.

Several options for evaluating hypothesis equivalence (is $\hat{f}\equiv f$?) can be defined for this problem formulation. We first review the definition of exact equivalence, followed by an alternative notion of equivalence which respects ordering:
\begin{definition}[Exact Equivalence] Given a hypothesis $\hat{f}$ and a reference $f$, $\hat{f}$ is \emph{exactly equivalent} to $f$ if $\hat{f}(s)=f(s)$ for all $s$ in $(\Sigma^I)^*$.
\end{definition}
\begin{definition}[Order-Respecting Equivalence] Given a hypothesis $\hat{f}$ and a reference $f$, $\hat{f}$ is \emph{order-respecting equivalent} to $f$ if, for all $s,t$ in $(\Sigma^I)^*$, there exists a relation $\mathbf{R}_{s,t}\in\{=,>,<\}$ such that $\hat{f}(s)\mathbf{R}_{s,t}\hat{f}(t) \iff f(s)\mathbf{R}_{s,t}f(t)$.
\end{definition}
Since a hypothesis $\hat{f}$ satisfying exact equivalence is also always order-respect-ing, we focus on exact equivalence queries where $\mathcal{T}$ returns feedback along with counterexample $c$. 
\begin{definition}[Exact Equivalence Query with Feedback]\label{exact-eq-def} Given a hypothesis $\hat{f}$ and a reference $f$, an exact equivalence query EQ returns the following triple: \[\text{\boldemph{EQ}}(\hat{f})=\langle\forall s\in(\Sigma^I)^*:\hat{f}(s)=f(s), c : \exists c \text{ \emph{s.t.} } \hat{f}(c)\neq f(c), \phi(c)\rangle\] where $c$ is a counterexample if one exists, and $\phi(c)$ is feedback associated with the counterexample, interpreted as a constraint.
\end{definition}

The strength of feedback $\phi$ directly impacts how many hypotheses per counterexample can be eliminated by the learner: returning $\phi(c):=\hat{f}(c)\neq f(c)$, which means the value $f(c)$ is not $\hat{f}(c)$, is weak feedback compared to returning $\phi(c):=\hat{f}(c)=f(c)$, which means the value of $\hat{f}(c)$ should be $f(c)$, or returning $\phi(c):=\hat{f}(c)\in X$ (where $X\subset\Sigma^O$, which means the value of $\hat{f}(c)$ should be in a subset of $\Sigma^O$). 
Although \ouralgorithm{} outputs a concrete Moore machine, \ouralgorithm{} navigates symbolic Moore machine\footnote{i.e., a Moore machine with symbolic values as outputs.} space using solely preference information, while concrete information assists in selecting the concrete hypothesis. 
Thus, our theoretical analysis 
considers equivalence queries that provide counterexamples with strong feedback: $\phi(c) := \hat{f}(c) = f(c)$.

\section{The \ouralgorithm{} Algorithm}
\ouralgorithm{} is a \lstar{}-based algorithm employing preference and equivalence queries to gather constraints. By first leveraging unification to navigate the symbolic hypothesis space of Moore machines, solving the constraints yields a concrete Moore machine. In particular, \ouralgorithm{} (Algorithm~\ref{alg:pref-demo}) learns a Moore machine $\langle \hat{Q}, \hat{q}_0, \Sigma^I, \Sigma^O, \hat{\delta}, \hat{L}\rangle$, representing a multiclass classifier $\hat{f}(s)=\hat{L}(\hat{\delta}(\hat{q}_0,s))$.
\begin{figure*}[tp]
    \centering
    \includegraphics[width=0.9\textwidth]{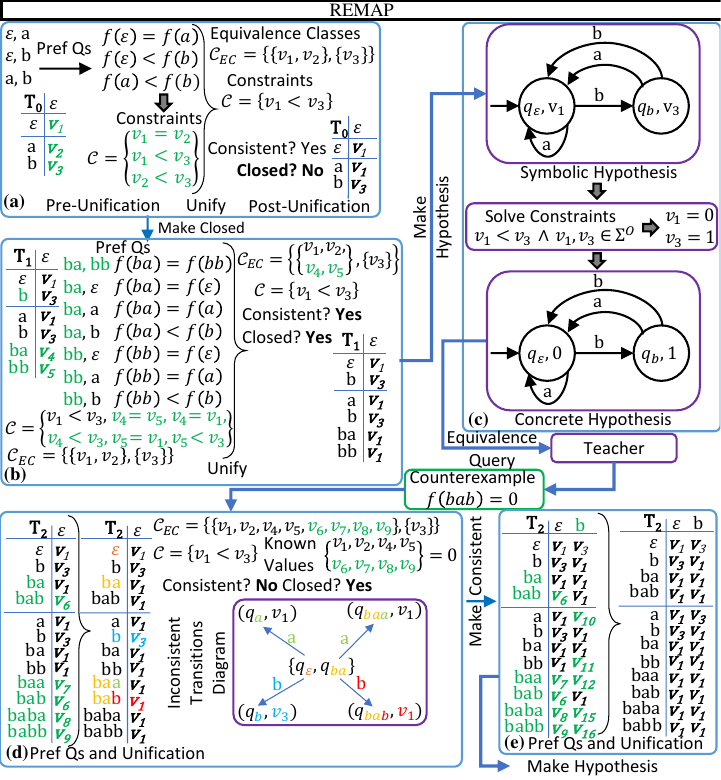}\\
    \caption{\ouralgorithm{} example. Figure \ref{fig:example-lstar} illustrates \lstar{}. \ouralgorithm{} performs preference queries and records variables in symbolic observation table $\obstable$ within a \textsc{SymbolicFill}. \emph{(a)} initializes $\obstable$; \emph{(b)} expands $S$ with sequence $b$, followed by \textsc{SymbolicFill}; yields unified, closed, and consistent $\obstable$; \emph{(c)} \textsc{MakeHypothesis} yields concrete hypothesis $h_1$ from symbolic hypothesis and constraints in $\mathcal{C}$; \emph{(d)} submits $h_1$ via equivalence query; receives and processes counterexample $bab$ with feedback $f(bab)=0$, adding $bab$, $ba$, and $b$ to $S$, performs a \textsc{SymbolicFill}, sets the value of equivalence class for $T(bab)$ to $f(bab)$; yields an inconsistent table. \emph{(e)} shows resulting consistent table. Figure \ref{fig:example-lstar}f shows ground truth Moore machine with $\Sigma^I=\{a,b\}, \Sigma^O=\{0,1\}$. Green symbolizes new information. Colors in \emph{(d)} visually highlight transition inconsistencies.}
    \label{fig:example}
\end{figure*}
Central to \ouralgorithm{} are four core components: \textbf{(1)} a new construct called a \textit{symbolic observation table} (Section \hyperref[sec:obstable]{4.1}), shown in Figure \ref{fig:example}a, \ref{fig:example}b, \ref{fig:example}d, and \ref{fig:example}e; \textbf{(2)} as well as an associated algorithm for \textit{unification}~\cite{Robinson1965AML} (Section \hyperref[sec:unification]{4.2}) inspired by~\citet{Martelli1976UnificationIL} to contain the fresh variable explosion; both enable the learner to \textbf{(3)} generate symbolic hypotheses (Section \hyperref[sec:symbolic-hypothesis]{4.3}) purely from observed symbolic constraints, along with \textbf{(4)} a constraint solver for obtaining a concrete hypothesis (Section \hyperref[sec:concrete-hypothesis]{4.4}). We first discuss the four components and how they fit together into \ouralgorithm{} (Section \hyperref[sec:remap]{4.5}). Afterwards, we illustrate the correctness and termination guarantees.

\subsubsection{4.1 Symbolic Observation Tables}\label{sec:obstable} Recall that in classic \lstar{}, the observation table entries are concrete values obtained from membership queries (Figure \ref{fig:example-lstar}a). However, observations obtained from \textit{preference queries} are constraints, rather than concrete values, indicating that for a pair of sequences $s_1$ and $s_2$, one of $f(s_1)>f(s_2)$, $f(s_1)<f(s_2)$, or $f(s_1)=f(s_2)$ holds. We therefore introduce the \boldemph{symbolic observation table} $\obstable$, where $S$ is a set of prefixes, $E$ is a set of suffixes, $\mathcal{C}$ is the set of known constraints, $\Gamma$ is a \boldemph{context} which uniquely maps sequences to variables, with the set of variables $\mathcal{V}$ in the context $\Gamma$ given by $\mathcal{V}=\{\Gamma[s\cdot e]|s\cdot e \in (S\cup (S \cdot \Sigma^I))\cdot E\}$, and $T: (S \cup (S \cdot \Sigma^I))\cdot E \rightarrow \mathcal{V}$ maps queried sequences to variables. Thus, in a symbolic observation table, the entry for each prefix-suffix pair is a variable, rather than a concrete value; and the constraints over those variables are stored in $\mathcal{C}$ (Figure \ref{fig:example}a, \ref{fig:example}b, \ref{fig:example}d, \ref{fig:example}e). The constraints from the preferences of $\mathcal{T}$ over $f$ about $s_1$ and $s_2$ correspond with $T(s_1)>T(s_2)$, $T(s_1)<T(s_2)$, or $T(s_1)=T(s_2)$, respectively.
\begin{definition} An \boldemph{equivalence class} $\mathbb{C}$ of variables is a set with the property that all members of $\mathbb{C}$ are equivalent to each other. The \boldemph{representative} $\repr{\mathbb{C}}$ of $\mathbb{C}$ is a deterministically elected member of $\mathbb{C}$. The set of variables $\mathcal{V}$ can be partitioned into disjoint equivalence classes. The set of equivalence classes $\mathcal{C}_{EC}$ corresponds with the partitioning of $\mathcal{V}$ into the smallest possible number of equivalence classes consistent with equality constraints. Let $\mathcal{R}=\{\repr{\mathbb{C}}|\mathbb{C}\in\mathcal{C}_{EC}\}$ be the set of representatives.
\end{definition}

\subsubsection{4.2 Unification and Constraints}\label{sec:unification} 
Since \textit{preference queries} return observations comparing the values of two sequences, we leverage a simple unification algorithm to ensure the number of unique variables in the table remains bounded by $|\Sigma^O|$. As a reminder, unification involves symbolically solving for representatives and rewriting all constraints and variables in terms of those representatives. Whenever we observe the constraint $f(s_1)=f(s_2)$, this implies $T(s_1)=T(s_2)$, so we add $\Gamma[s_1]$ and $\Gamma[s_2]$ to an \boldemph{equivalence class} $\mathbb{C}$, and elect a \boldemph{representative} from $\mathbb{C}$. The unification algorithm is presented in Appendix Algorithm \ref{alg:unification-a}.

If $\Gamma[s_1]$ already belongs to an equivalence class $\mathbb{C}\in\mathcal{C}_{EC}$, but $\Gamma[s_2]$ is an \textit{orphan variable}---belonging to no class in $\mathcal{C}_{EC}$---then $\Gamma[s_2]$ is merged into $\mathbb{C}$. Swap $s_1$ and $s_2$ for the other case. If $\Gamma[s_1]$ and $\Gamma[s_2]$ belong to separate classes $\mathbb{C}_1$ and $\mathbb{C}_2$, with $\mathbb{C}_1\neq\mathbb{C}_2$, then $\mathbb{C}_1$ and $\mathbb{C}_2$ are merged into one via $\mathbb{C}\leftarrow\mathbb{C}_1\cup\mathbb{C}_2$, and one of $\repr{\mathbb{C}_1}$ and $\repr{\mathbb{C}_2}$ is deterministically elected as the representative~$\repr{\mathbb{C}}$.

When this unification process is applied to a symbolic observation table $\obstable$, we are unifying $\mathcal{V}$ (the set of variables in the context $\Gamma$) according to the set of known equivalence constraints $\mathcal{C}_{EQ}$ and equivalence classes $\mathcal{C}_{EC}$ available in $\mathcal{C}$; and each variable in the table is replaced with its equivalence class representative (see Figure \ref{fig:example}abde after the large right curly brace). That is, for all $s$ in $(S\cup S\cdot \Sigma^I)\cdot E$, we substitute $T(s)\leftarrow\repr{\mathbb{C}}$ if $T(s)\in\mathbb{C}$. In the resulting \textit{unified} symbolic observation table, $T$ maps queried sequences to~the set of representatives $\mathcal{R}$. Note $|\mathcal{R}|=|\mathcal{C}_{EC}|\leq |\Sigma^O|$.

Besides constraints from preferences queries, the learner obtains constraints about the value of $T(c)$ from \textit{equivalence queries} (Figure \ref{fig:example}c to \ref{fig:example}d) and adds them to $\mathcal{C}$. 
When first obtained, these constraints may possibly be expressed in terms of orphan variables, but during the process of unification, each orphan variable joins an equivalence class and is then replaced by its equivalence class representative in the constraint. Thus, after unification, all constraints in $\mathcal{C}$ are expressed in terms of equivalence class representatives. 

Finally, unifying the symbolic observation table is critical for making a symbolic hypothesis without knowledge of concrete value assignments: unification permits $\obstable$ to become \textit{closed} and \textit{consistent}---prerequisites for generating a symbolic hypothesis.
\begin{definition} Let $\orows{S}=\{\orow{s}|s\in S\}$, where the row in $\obstable$ indexed by $s$ is $\orow{s}$. $\obstable$ is \boldemph{closed} if $\orows{S\cdot \Sigma^I} \subseteq \orows{S}$.
\end{definition}
\begin{definition} $\obstable$ is \boldemph{consistent} if for all sequence pairs $s_1$ and $s_2$ where $\orow{s_1}\equiv\orow{s_2}$, all their transitions also remain equivalent with each other: $\orow{s_1\cdot \sigma}\equiv\orow{s_2\cdot\sigma}$ for all $\sigma \in \Sigma^I$.
\end{definition}
\begin{definition} Table $\obstable$ is \boldemph{unified} if $T(s\cdot e) \in \mathcal{R}$ for all $s\cdot e \in (S\cup (S\cdot \Sigma^I)) \cdot E$.
\end{definition}
\subsubsection{4.3 Making a Symbolic Hypothesis}\label{sec:symbolic-hypothesis}
If symbolic observation table $\obstable$ is \textit{unified}, \textit{closed}, and \textit{consistent}, a symbolic hypothesis can be made (Figure \ref{fig:example}c). 
This construction is identical to \lstar{}, except that the outputs are symbolic:
\begin{align*}
\hat{Q} &= \{\orow{s} | \forall s\in S\} \text{ \ is the set of states}\\
\hat{q}_0 &= \orow{\varepsilon} \text{ \ is the initial state}\\
\hat{\delta}(\orow{s},\sigma) &= \orow{s\cdot \sigma} \text{ \ for all $s\in S$ and $\sigma \in \Sigma^I$}\\
\hat{L}(\orow{s}) &= T(s\cdot\varepsilon) \text{ \ is the sequence to output function}\\
&\langle \hat{Q}, \hat{q}_0, \Sigma^I, \Sigma^O, \hat{\delta}, \hat{L}\rangle \text{ is a symbolic hypothesis.}
\end{align*}
\subsubsection{4.4 Making a Concrete Hypothesis}\label{sec:concrete-hypothesis} 
The learner finds a satisfying solution $\Lambda$ to the set of constraints $\mathcal{C}$, while subject to the global constraint requiring the value of each representative to be in $\Sigma^O$ (Figure \ref{fig:example}c). Thus, $\hat{L}$ becomes concrete.
\begin{align}
&\Lambda\longleftarrow\textsc{FindSolution}(\obstable, \Sigma^O)\\
&\hat{L}(\orow{s}) = \Lambda[T(s\cdot\varepsilon)]
\end{align} In particular, $\Lambda$ finds satisfying values for each member of $\mathcal{R}$, the set of equivalence class representatives. Since $\obstable$ is unified, we are guaranteed that $T$ maps from queried sequences to $\mathcal{R}$; hence $\Lambda[T(s\cdot \varepsilon)]$ is guaranteed to resolve to a concrete value as long as the teacher provides consistent preferences.
\begin{algorithm}[t]
\caption{\ouralgorithm{}}
\label{alg:pref-demo}
\textbf{Input}: Alphabets $\Sigma^I$ (input) and $\Sigma^O$ (output), teacher $\mathcal{T}$\\
\textbf{Output}: Moore Machine $\mathcal{H} = \langle \hat{Q}, \Sigma^I, \Sigma^O, \hat{q}_0, \hat{\delta}, \hat{L}\rangle$\\
\begin{algorithmic}[1]
\STATE Initialize $\mathcal{O} = \obstable$ with $S=\{\varepsilon\},E=\{\varepsilon\}$, $\mathcal{C}=\{\}$, $\Gamma=\emptyset$\\
\STATE $\mathcal{O} \longleftarrow$ \textsc{SymbolicFill}$\left(\mathcal{O}| \mathcal{T}\right)$
\REPEAT
\STATE $\mathcal{O} \longleftarrow$ \textsc{MakeClosedAndConsistent}$\left(\mathcal{O}| \mathcal{T}\right)$
\STATE $\mathcal{H}\longleftarrow$\textsc{MakeHypothesis}$\left(\mathcal{O}, \Sigma^I, \Sigma^O\right)$
\STATE \textit{result} $\longleftarrow$ \textsc{EquivalenceQuery}$(\mathcal{H}| \mathcal{T})$
\STATE $\mathcal{O}\longleftarrow$\textsc{ProcessCex}$(\mathcal{O},\text{\textit{result}})$ \textbf{if} \textit{result} $\neq$ \textit{correct}
\UNTIL{\textit{result} $=$ \textit{correct}}
\RETURN $\mathcal{H}$
\end{algorithmic}
\end{algorithm}
\begin{algorithm}[t]
\caption{A Query Efficient Symbolic Fill Procedure}
\label{alg:efficient-symbolic-fill-abbrv}
\textbf{procedure}~\textsc{SymbolicFill}$\left(\obstable| \mathcal{T}\right)$\\
\begin{algorithmic}[1]
    \STATE \textit{seqs} $= \{\}$; Let $\mathcal{O}=\obstable$; let \textit{oldsortedseqs} be a sorted list of sequences.
    \STATE \textit{seqs} $=$ \textsc{PopulateMissingFreshVars}($\mathcal{O}$)
    \STATE $\mathcal{O}$ $\longleftarrow$ \textsc{PrefQsByRandomizedQuicksortFollowedByLinearMerge}(\textit{seqs, oldsortedseqs}, $\mathcal{O}|\mathcal{T}$)
    \STATE $\mathcal{O}\longleftarrow$ \textsc{Unification}$(\mathcal{O})$
    \RETURN $\mathcal{O}$
\end{algorithmic}
\end{algorithm}

\subsubsection{4.5 \ouralgorithm{}}\label{sec:remap} We now describe \ouralgorithm{} (Algorithm~\ref{alg:pref-demo}) in terms of the previously discussed components. In order to make a symbolic hypothesis, \ouralgorithm{} must first obtain a \textit{unified, closed, and consistent} $\obstable$. To perform closed and consistency checks, the table must be unified. Therefore, \ouralgorithm{} must \textit{symbolically fill} $\obstable$ by asking preference queries and performing unification to obtain a unified table. If the unified table is not closed or not consistent, then the table is alternately expanded and symbolically filled until the table becomes unified, closed, and consistent.

\textbf{\textsc{SymbolicFill}} \ A symbolic fill (Algorithm~\ref{alg:efficient-symbolic-fill-abbrv} and Appendix Algorithm \ref{alg:efficient-symbolic-fill} and \ref{alg:symbolic-fill}) (\emph{i}) creates fresh variables for empty entries in the table, (\emph{ii}) asks preference queries, and (\emph{iii}) performs unification. If a sequence $s\cdot e\in(S\cup(S\cdot\Sigma^I))\cdot\,E$ does not have an associated variable in the context $\Gamma$, then a fresh variable $\Gamma[s\cdot e]$ is created. Preference queries are executed to obtain the total preordering of $(S \cup (S\cdot \Sigma^I))\cdot E$. In our implementation, we use preference queries in place of comparisons in randomized quicksort and linear merge. Once every sequence in $(S \cup (S\cdot \Sigma^I))\cdot E$ has a variable, and once the preference queries have been completed, unification is performed. \textsc{SymbolicFill} is called on lines 2, 4 (in \textsc{MakeClosedAndConsistent}), and 7 (in \textsc{ProcessCex}.). Unification is shown in Appendix Algorithm~\ref{alg:unification-a}.

\boldemph{Ensuring Consistency} \  If the unified table is not consistent, then there exists a pair $s_1, s_2\in S$ and $\sigma\in\Sigma^I$ for which $\orow{s_1}\equiv\orow{s_2}$ and $\orow{s_1\cdot\sigma}\not\equiv\orow{s_2\cdot\sigma}$, implying there is an $e\in E$ such that $T(s_1\cdot\sigma\cdot e)\not\equiv T(s_2\cdot\sigma\cdot e)$. To attempt to make the table consistent, add $\sigma \cdot e$ to $E$, and then perform a symbolic fill. Figure \ref{fig:example}d shows inconsistency, and Figure \ref{fig:example}e shows a table made consistent through expansion of the suffix set.

\boldemph{Ensuring Closedness} \  If the unified table is not closed, then $\orows{S\cdot \Sigma^I}\not\subseteq \orows{S}$. To attempt to make the table closed, find a row $\orow{s'}$ in $\orows{S\cdot\Sigma^I}$ but not in $\orows{S}$. Add $s'$ to $S$, update $S\cdot \Sigma^I$, then fill symbolically. Figure \ref{fig:example}a to \ref{fig:example}b shows a closure process.

The closed and consistency checks occur in a loop (consistency first, closed second) inside \textsc{MakeClosedAndConsistent} until the table becomes unified, closed, and consistent. Then hypothesis $h=\langle \hat{Q}, \hat{q}_0, \Sigma^I, \Sigma^O, \hat{\delta}, \hat{L}\rangle$ is generated by \textsc{MakeHypothesis} (Figure \ref{fig:example}c) and is sent to the teacher via \textsc{EquivalenceQuery} (Figure \ref{fig:example}c to \ref{fig:example}d). If $h$ is wrong, then a counterexample $c$ is returned, as well as feedback $\phi(c)$ which is interpreted as a new constraint on the value of $\hat{f}(c)$. The counterexample $c$ and all its prefixes are added to $S$, and then a symbolic fill is performed. Then the constraint on the value of $\hat{f}(c)$ is added to $\mathcal{C}$ as a constraint on the value of the representative at $T(c)$.

\section{Theoretical Guarantees of \ouralgorithm{}}\label{theory}
We now cover the algorithmic guarantees of \ouralgorithm{} when $\mathcal{T}$ uses exact equivalence queries, and show how sampling-based equivalence queries achieves PAC--identification.
We first detail how \ouralgorithm{} guarantees termination and yields a correct, minimal Moore machine that classifies sequence equivalently to $f$.
If \ouralgorithm{} terminates, then the final hypothesis must be correct, since termination occurs only if no counterexamples exist for the final hypothesis. Therefore, if the hypothesized Moore machine classifies all sequences correctly according to the teacher, it must be correct. Thus, proving termination implies correctness. See Appendix for sketches and proofs. Here, we assume the teacher provides feedback $\hat{f}(c)=f(c)$ with counterexample $c$.

\begin{theorem} If $\obstable$ is unified, closed, and consistent, and the range of \textsc{MakeHypothesis}$(\obstable)$ is $\mathcal{H}$, then every hypothesis $h \in \mathcal{H}$ is consistent with constraints $\mathcal{C}$. Any other hypothesis consistent with $\mathcal{C}$, but not contained in $\mathcal{H}$, must have more states.
\end{theorem}
\begin{proof} (Sketch) A given unified, closed, and consistent symbolic observation table $\obstable$ corresponds to $(\mathcal{S}, \mathcal{R}, \mathcal{C})$, where $\mathcal{S}$ is a symbolic hypothesis, $\mathcal{R}$ is the set of representatives used in the table, and $\mathcal{C}$ are the constraints expressed over $\mathcal{R}$. All hypotheses in $\mathcal{H}$ have states and transitions identical to $\mathcal{S}$. Each satisfying solution $\Lambda$ to $\mathcal{C}$ corresponds to a unique concrete hypothesis in $\mathcal{H}$. Therefore every concrete hypothesis in $\mathcal{H}$ is consistent with $\mathcal{C}$. Let $|h|$ represent the number of states in $h$. We know for all $h\in \mathcal{H}$, $|h|=|\mathcal{S}|$. Let $\overline{\mathcal{H}}$ be the set of concrete hypotheses \emph{not in} $\mathcal{H}$. Note $\overline{\mathcal{H}}$ can be partitioned into three sets---concrete hypotheses with (a) fewer states than $\mathcal{S}$, (b) more states than $\mathcal{S}$, and (c) same number of states as $\mathcal{S}$ but inconsistent with $\mathcal{C}$. We ignore (c) because we care only about hypotheses consistent with $\mathcal{C}$. Consider any concrete hypothesis $h$ in $\mathcal{H}$ and its corresponding satisfying solution $\Lambda$. Suppose we desire another hypothesis $h'$ to be consistent with $h$. If $|h'|<|h|$, then $h'$ cannot be consistent with $h$ because at least one sequence will be misclassified. Therefore, if $h'$ must be consistent with $h$, then we require $|h'|\geq |h|$. Thus, any other hypothesis consistent with $\mathcal{C}$, but not in $\mathcal{H}$, must have more states.\qed\end{proof}

Theorem 1 establishes that the output of \ouralgorithm{} will be the smallest Moore machine consistent with all the constraints in $\mathcal{C}$. This is necessary to prove termination.

\begin{lemma}
Whenever a counterexample $c$ is processed, either $0$ or $1$ additional representative values becomes known.
\end{lemma}
\begin{theorem}
Suppose $\obstable$ is unified, closed, and consistent. Let $\hat{h}=\textsc{MakeHypothesis}(\obstable)$ be the hypothesis induced by $\Lambda$, a satisfying solution to $\mathcal{C}$. If the teacher returns a counterexample $c$ as the result of an equivalence query on $\hat{h}$, then at least one of the following is true about $\hat{h}$: (a) $\hat{h}$ contains too few states, or (b) the satisfying solution $\Lambda$ inducing $\hat{h}$ is either incomplete or incorrect.
\end{theorem}
\begin{corollary} \ouralgorithm{} must terminate when the number of states and number of known representative values in a concrete hypothesis reach their respective upper bounds.
\end{corollary}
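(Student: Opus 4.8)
The plan is to attach to the runtime state of \ouralgorithm{} two nonnegative integers that never decrease, are bounded above, and whose simultaneous saturation forces the next equivalence query to answer \emph{correct}. The two quantities are (i) the number of states $|\hat{Q}| = |\orows{S}|$ of the current hypothesis, and (ii) the number of representative variables of $\mathcal{R}$ whose value is already forced by the constraint set $\mathcal{C}$. Write $N$ for the number of states of the (unique) minimal Moore machine computing $f$; since $\Sigma^O$ is finite and $f$ is realised by the teacher's reward machine, $N$ is finite. The two upper bounds are $N$ for quantity (i) and $|\mathcal{R}| \le |\Sigma^O|$ for quantity (ii). The corollary then reduces to: neither quantity can exceed its bound; and once both attain their bound, the \textbf{repeat} loop of Algorithm~\ref{alg:pref-demo} exits.

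First I would establish monotonicity and the bounds. Quantity (ii) is nondecreasing because $\mathcal{C}$ only grows (line~11) and unification merely rewrites constraints in terms of representatives; it is bounded by $|\mathcal{R}|$ by definition. Quantity (i) is nondecreasing because processing a counterexample only adds sequences to $S$ (line~9), and \textsc{SymbolicFill} together with \textsc{MakeClosedAndConsistent} never identifies two rows that the constraints keep distinct; it is bounded by $N$ via Theorem~1, since the genuine $N$-state Moore machine for $f$ is consistent with every constraint in $\mathcal{C}$ (each constraint is read off $f$), hence the \emph{smallest} hypothesis consistent with $\mathcal{C}$---which is what \textsc{MakeHypothesis} returns---has at most $N$ states.

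The core step is termination once $|\hat{Q}| = N$ and all of $\mathcal{R}$ is forced. Suppose for contradiction that the equivalence query on the induced hypothesis $\hat{h}$ returns a counterexample $c$. By Theorem~2, either (a) $\hat{h}$ has too few states, or (b) the satisfying solution $\Lambda$ inducing $\hat{h}$ is incomplete or incorrect. Alternative~(a) is impossible: $\hat{h}$ already has $N$ states, and---by closedness and consistency, via the standard $L^*$ correspondence sharpened by Theorem~1---a unified, closed, consistent table with $N$ distinct rows realises the Myhill--Nerode partition of $f$, so its structure is exactly that of the minimal Moore machine for $f$. Alternative~(b) is impossible: all representatives are forced by $\mathcal{C}$, so $\Lambda$ is complete, and since the true labelling of the representatives satisfies $\mathcal{C}$, each forced value equals the true value, so $\Lambda$ is correct. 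Both horns of Theorem~2 fail, contradicting the existence of $c$; hence \textsc{EquivalenceQuery} returns \emph{correct} and \ouralgorithm{} halts. To close the argument that the two bounds are reached after finitely many iterations, I would observe that every counterexample forces strict progress in at least one of the quantities: by Theorem~2 a counterexample not accounted for by case~(a) (which would strictly increase quantity (i) after re-closing) is accounted for by case~(b), and then the new constraint $T(c) = f(c)$ pins a previously unforced representative---Lemma~1's ``one additional representative'' alternative---strictly increasing quantity (ii). As both quantities are bounded, only finitely many counterexamples can occur.

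The step I expect to be the main obstacle is making case~(a) of Theorem~2 genuinely impossible at the bound: one must argue that $|\hat{Q}| = N$, a value reachable only because the table is closed and consistent, already forces the \emph{transition} structure (not just the state count) to coincide with the minimal Moore machine---this is where the classic $L^*$ Myhill--Nerode argument must be invoked carefully. A secondary subtlety is the bookkeeping that reconciles Lemma~1's ``$0$ additional representatives'' alternative with strict progress, i.e.\ confirming that a counterexample whose label is already correctly forced can only be caused by a missing state and is therefore exactly case~(a).
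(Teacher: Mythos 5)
Your proposal is correct and follows essentially the same route as the paper: it uses the dichotomy of Theorem~2 to show that every counterexample strictly increases either the number of states or the number of known representative values, both of which are monotone and bounded (by the minimal machine's state count via Theorem~1 and by $|\Sigma^O|$ respectively), so only finitely many counterexamples can occur. Your explicit contradiction argument showing that saturation of both quantities defeats both horns of Theorem~2 (forcing the equivalence query to return \emph{correct}) is a slightly more careful spelling-out of what the paper dispatches with ``clearly, the algorithm must terminate when $n=n^*$ and $|\mathcal{R}|=V^*$,'' but it is not a different proof.
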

\begin{proof} (Sketch) This sketch applies to the above lemma, theorem, and corollary about termination. Consider the sequence $\dots,h_{k-1}, h_k,\dots$ of hypotheses that \ouralgorithm{} makes. For a given pair of consecutive hypotheses $(h_{k-1}, h_k)$, consider how the number of states $n$, and the number of \emph{known} representative values $n_\bullet$ changes. Let $n^*$ be the number of states of the minimal Moore machine correctly classifying all sequences. Let $V^*\leq |\Sigma^O|$ be the upper bound on $|\mathcal{R}|$. Note that $0\leq n_\bullet \leq |\mathcal{R}| \leq V^* \leq |\Sigma^O|$ always holds. Through detailed case analysis on returned counterexamples, we can show that the change in $n_\bullet$, denoted by $\Delta n_\bullet$, must always be either $0$ or $1$, and furthermore, if $\Delta n_\bullet = 0$, then we must have $\Delta n \geq 1$. By the case analysis and tracking $n$ and $n_\bullet$, observe that if a counterexample $c$ is received from the teacher due to hypothesis $h$, then \emph{at least one of} (a) $n < n^*$ or (b) $n_\bullet < V^*$ must be true. Since $\Delta n_\bullet$ and $\Delta n$ cannot simultaneously be $0$, whenever a new hypothesis is made, progress must be made towards the upper bound of $(n^*, V^*)$. If the upper bound is reached, then the algorithm must terminate, since it is impossible to progress from the point $(n^*, V^*)$.\qed
\end{proof}
\begin{theorem}[Query Complexity] If $n$ is the number of states of the minimal automaton isomorphic to the target automaton, and $m$ is the maximum length of any counterexample sequence that the teacher returns, then (a) \ouralgorithm{} executes at most $n+|\Sigma^O|-1$ equivalence queries, and (b) the preference query complexity is $\mathcal{O}(mn^2 \ln (mn^2))$, which is polynomial in the number of unique sequences queried.
\end{theorem}
\begin{proof} Based on Theorem 2, we know that the maximum number of equivalence queries is the taxi distance from the point $(1,0)$ to $(n, |\Sigma^O|)$, which is $n+|\Sigma^O|-1$. From counterexample processing, we know there will be at most $m(n+|\Sigma^O|-1)$ sequences added to the prefix set $S$, since a counterexample $c$ of length $m$ results in at most $m$ sequences added to the prefix set $S$. The maximum number of times the table can be found inconsistent is at most $n-1$ times, since there can be at most $n$ states, and the learner starts with $1$ state. Whenever a sequence is added to the suffix set $E$, the maximum length of sequences in $E$ increases by at most $1$, implying the maximum sequence length in $E$ is $n-1$. Similarly, closure operations can be performed at most $n-1$ times, so the total number of sequences in $E$ is at most $n$; the maximum number of sequences in $S$ is $n+m(n+|\Sigma^O|-1)$. The maximum number of unique sequences queried in the table is the maximum cardinality of $(S\cup S\cdot \Sigma^I)\cdot E$, which is $$(n+m(n+|\Sigma^O|-1))(1+|\Sigma^I|)n = \mathcal{O}(mn^2).$$
Therefore, the preference query complexity of \ouralgorithm{} is $\mathcal{O}(mn^2\ln(mn^2))$ due to randomized quicksort.\qed
\end{proof}

Lemma 1, Theorem 2, and Corollary 1 imply \ouralgorithm{} makes progress towards termination with every hypothesis, and termination occurs when specific conditions are satisfied; therefore its output must be correct. Theorem 3 indicates that REMAP learns the correct minimal automaton isomorphic to the target automaton in polynomial time.
Next, we show how \ouralgorithm{} achieves PAC--identification when sampling-based equivalence queries are used.

\begin{definition}[Probably Approximately Correct Identification]\label{def:pacid} \ Given Moore machine $M=\langle Q, q_0, \Sigma^I, \Sigma^O, \delta, L\rangle$, let the classification function $f:(\Sigma^I)^*\rightarrow\Sigma^O$ be represented by $f(s)=L(\delta(q_0,s))$ for all $s\in(\Sigma^I)^*$. Let $\mathcal{D}$ be any probability distribution over $(\Sigma^I)^*$. An algorithm $\mathcal{A}$ probably approximately correctly identifies $f$ if and only if for any choice of $0< \epsilon \leq 1$ and $0 < d < 1$, $\mathcal{A}$ always terminates and outputs an $\epsilon$-approximate sequence classifier $\hat{f}:(\Sigma^I)^*\rightarrow\Sigma^O$, such that with probability at least $1-d$, the probability of misclassification is $P(\hat{f}(s)\neq f(s)) \le \epsilon$ when $s$ is drawn according to the distribution $\mathcal{D}$.
\end{definition}
\begin{theorem}\label{thm:pacid1} \ouralgorithm{} achieves probably approximately correct identification of any Moore machine when the teacher $\mathcal{T}$ uses sampling-based equivalence queries with at least $m_k \geq \left\lceil\frac{1}{\epsilon}\left(\ln\frac{1}{d}+k\ln 2\right)\right\rceil$ samples drawn i.i.d. from $\mathcal{D}$ for the $k$th equivalence query.
\end{theorem}
\begin{proof} (Sketch) The probability $1-\epsilon_k$ of a sequence sampled from an arbitrary distribution $\mathcal{D}$ over $(\Sigma^I)^*$ depends upon the distribution and the intersections of sets of sequences of the teacher and the learner's $k$th hypothesis with the same classification values. The probability that the $k$th hypothesis misclassifies a sequence is $\epsilon_k$. If the teacher samples $m_k$ samples for the $k$th equivalence query, then an upper bound can be established for the case when $\epsilon_k\leq \epsilon$ for a given $\epsilon$. Since we know \ouralgorithm{} executes at most $n+|\Sigma^O|-1$ equivalence queries, one can upper bound the probability that \ouralgorithm{} terminates with an error by summing all probabilities of events that the teacher does not detect an error in at most $n+|\Sigma^O|-1$ equivalence queries. An exponential decaying upper bound can be found, and a lower bound for $m_k$ can be found in terms of $\epsilon,d,$ and $k$.\qed
\end{proof}
\begin{theorem}\label{thm:pacid2} To achieve PAC-identification under \ouralgorithm{}, given parameters $\epsilon$ and $d$, and if $f$ can be represented by a minimal Moore machine with $n$ states and $|\Sigma^O|$ classes, then teacher $\mathcal{T}$ needs to sample at least $$\mathcal{O}\left(n+|\Sigma^O| + \frac{1}{\epsilon}\left((n+|\Sigma^O|)\ln\frac{1}{d} + (n+|\Sigma^O|)^2\right)\right)$$ sequences i.i.d. from $\mathcal{D}$ over the entire run of \ouralgorithm{}.
\end{theorem}
\begin{proof}
Since for the $k$th equivalence query, the teacher must sample at least $m_k\geq \left\lceil \frac{1}{\epsilon}(\ln\frac{1}{d} + k\ln2)\right\rceil$ sequences in order to achieve PAC-identification, if the total number of samples is to be minimized while still achieving PAC-identification, then the teacher can just sample a quantity of sequences i.i.d. from $\mathcal{D}$ equal to the following total
\[\sum_{k=1}^{n+|\Sigma^O|-1}\left[\frac{1}{\epsilon}(\ln\frac{1}{d} + k\ln2) + 1\right]\]\[
=n+|\Sigma^O|-1+\frac{1}{\epsilon}\left[(\ln\frac{1}{d})(n+|\Sigma^O|-1) + \ln2\sum_{k=1}^{n+|\Sigma^O|-1}k\right]\]\[
=\mathcal{O}\left((n+|\Sigma^O|)+\frac{1}{\epsilon}((n+|\Sigma^O|)\ln\frac{1}{d} + (n+|\Sigma^O|)^2)\right)
\]\qed
\end{proof}

Theorem 4 and Theorem 5 imply \ouralgorithm{} achieves PAC--identification for a choice of $\epsilon$ and $d$ as long as the teacher samples sufficient sequences per equivalence query. In particular, Theorem 5 indicates the total quantity of sequences sampled by the teacher to achieve PAC-identification depends on both $n$ the number of states, and $|\Sigma^O|$ the number of output classes. This contrasts with the result for PAC-identification of DFAs (Theorem 7 \cite{Angluin87}), which depends only on $n$. Finally, since \ouralgorithm{} outputs a Moore machine, one can leverage Moore and Mealy machine equivalence in order to convert the final hypothesis into a reward machine, as defined and covered in the next section.

\section{Learning Reward Machines from Preferences}\label{rl}

We consider applying \ouralgorithm{} to learn reward machines from preferences. Reward machines are Mealy machines with propositional and reward semantics. Equivalence between Mealy and Moore machines allows the output of \ouralgorithm{} to be converted to a reward machine.
We first review reinforcement learning and reward machine semantics.

\boldemph{Markov Decision Processes} \ Decision making problems are often modeled by a Markov Decision Process (MDP), which is a tuple $\langle \mathcal{S},\mathcal{A},P,R,\gamma\rangle$ where $\mathcal{S}$ is the set of states, $\mathcal{A}$ is the set of actions, $P:\mathcal{S}\times \mathcal{A}\times \mathcal{S}\rightarrow [0,1]$ represents the transition probability from state $s$ to $s'$ via action $a$. The reward function $R:\mathcal{S}\times \mathcal{A}\times\mathcal{S}\rightarrow \mathbb{R}$ provides the associated scalar reward, and $0\leq\gamma \leq 1$ is a discount factor. In MDPs, the Markovian assumption is  that transitions and rewards depend only upon the current state-action pair and the next state. However, not all tasks are expressible using Markovian reward~\cite{abel2021}.

\boldemph{Non-Markovian Reward} \ Non-Markovian Reward Decision Processes are identical to MDPs, except that $R: (\mathcal{S}\times \mathcal{A})^*\rightarrow \mathbb{R}$ is non-Markovian a reward function that depends on state-action history. This allows reward machines to model a class of non-Markovian reward functions.

\boldemph{Reward Machines} \ A reward machine (RM)
is a Mealy machine where $\Sigma^I=2^\mathcal{P}$, $\Sigma^O$ is a set of reward emitting objects, and $\mathcal{P}$ is a set of propositions describing states and actions. A \emph{labeling} function $\mathbb{L}:\mathcal{S}\times \mathcal{A}\times \mathcal{S}\rightarrow 2^\mathcal{P}$ with $\mathbb{L}(s_{k-1},a_k,s_k)=l_k$ labels a state-action sequence $s_0a_1s_1a_2s_2\dots a_ns_n$ with label sequence $l_1l_2\dots l_n$. Thus, reward machines operate over label sequences. 

A single disjunctive normal formula (DNF) labeled transition can summarize multiple transitions with identical $\Sigma^O$ labels (connecting a pair of states), since the elements of $2^\mathcal{P}$ are sets of propositions. Reward machines map label sequences to reward outputs and can be represented as $f: (2^\mathcal{P})^*\rightarrow \Sigma^O$, so \ouralgorithm{} learns a reward machine by converting the output Moore machine to a reward machine. 

\boldemph{Sequential Tasks in OfficeWorld and CraftWorld}\label{rmdetails} \ \citet{icarte2018} and \citet{andreas17a} introduced the OfficeWorld and CraftWorld gridworld domains, respectively, and 
feature sequential tasks encoded as reward machines. OfficeWorld features 4 sequential tasks across several rooms with various objects available for an agent to interact with. Example tasks include (1) picking up coffee and mail and delivering them to a certain room, or (2) continuously patrolling between a set of rooms. CraftWorld is a 2D version of MineCraft, where the 10 sequential tasks involve the agent collecting materials and constructing tools or objects in a certain order while avoiding hazards.

\section{Empirical Results}
We evaluate the exact and PAC--identification (PAC-ID) versions of \ouralgorithm{} and consider: first, \boldemph{how often is PAC-ID \ouralgorithm{} correct?} (Exact \ouralgorithm{} is guaranteed to be correct). To answer this, we run experiments by applying PAC-ID \ouralgorithm{} to learn reward machines (RMs), by converting the Moore machine into a RM. We measure empirical correctness with \emph{empirical probability of isomorphism} and \emph{average regret}. Second, \boldemph{how do exact and PAC-ID \ouralgorithm{} scale}? We measure preference query complexity as a function of the number of unique sequences queried, and present an example phase diagram of algorithm execution. We use Z3~\cite{demoura-z3-2008} for the constraint solver.

\boldemph{Setup} \ We investigate these questions on 14 sequential tasks in the OfficeWorld and CraftWorld domains. 
The Appendix~\ref{appendix:experimentalsetup} contains domain specific details. We implement exact equivalence queries (Definition \ref{exact-eq-def}) using a variant of the Hopcroft-Karp algorithm~\cite{Almeida2009TestingTE, Hopcroft1971ALA}. We implement i.i.d. sequence sampling in sampling-based equivalence queries with the following process per sample: sample a length $L$ from a geometric distribution; then, construct an $L$-length sequence by drawing $L$ elements i.i.d. from a uniform distribution over $\Sigma^I$. In both the exact and sampling-based equivalence queries, strong feedback is used.

\subsection{PAC--Identification Correctness Experiments}
\boldemph{Reproducibility} \ PAC-ID \ouralgorithm{} was run 100 times per ground truth reward machine. We measure correctness based on (1) empirical probability that the learned RM is isomorphic to the ground truth RM, based on classification accuracy, and (2) average policy regret between the learned RM policy and the ground truth RM policy.

\begin{figure}[t]
    \centering
    \includegraphics[height=0.29\columnwidth]{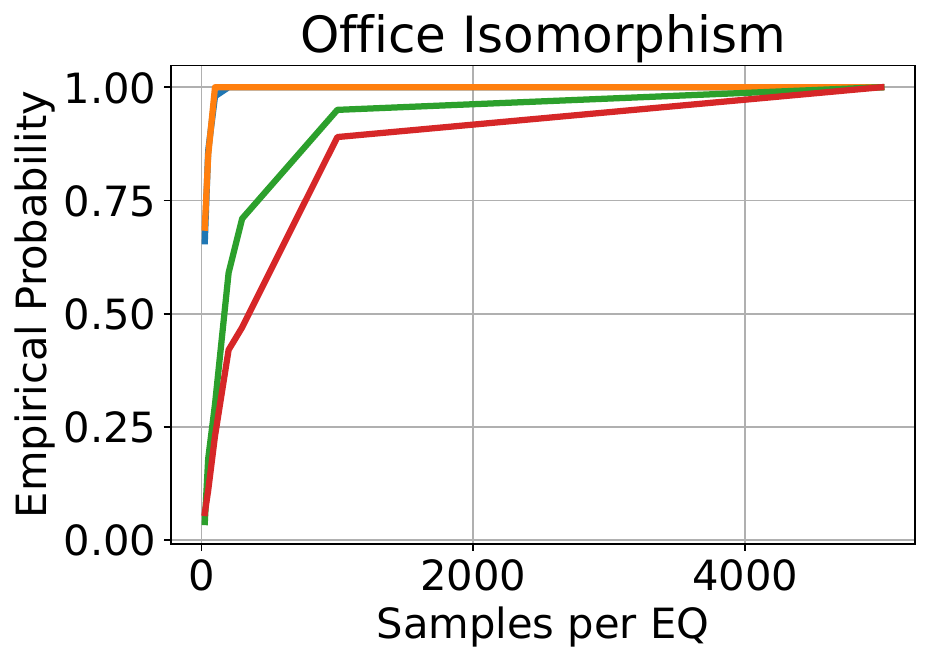}
    \includegraphics[height=0.29\columnwidth]{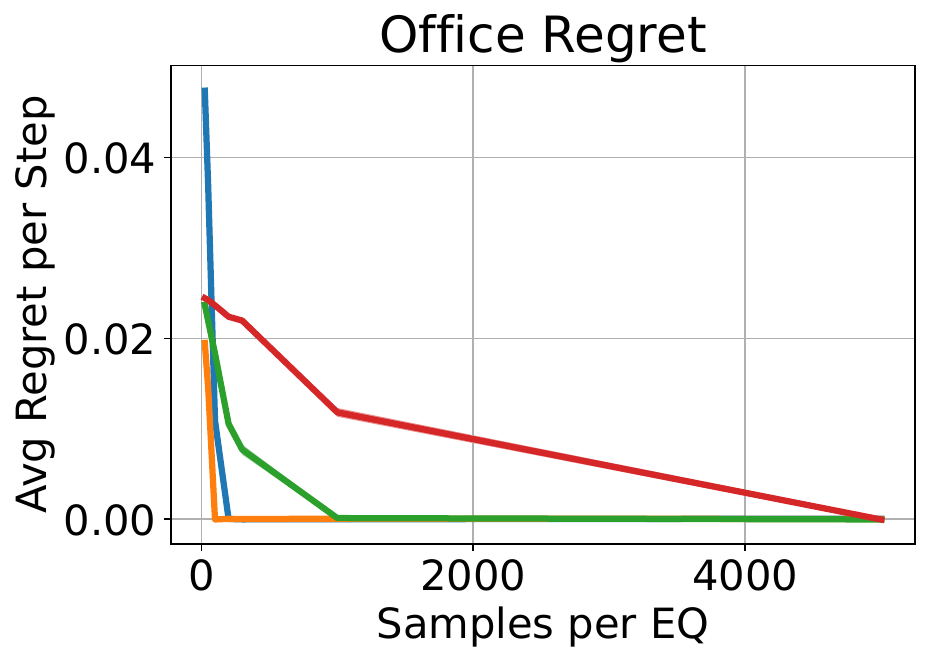}\\
    \includegraphics[height=0.29\columnwidth]{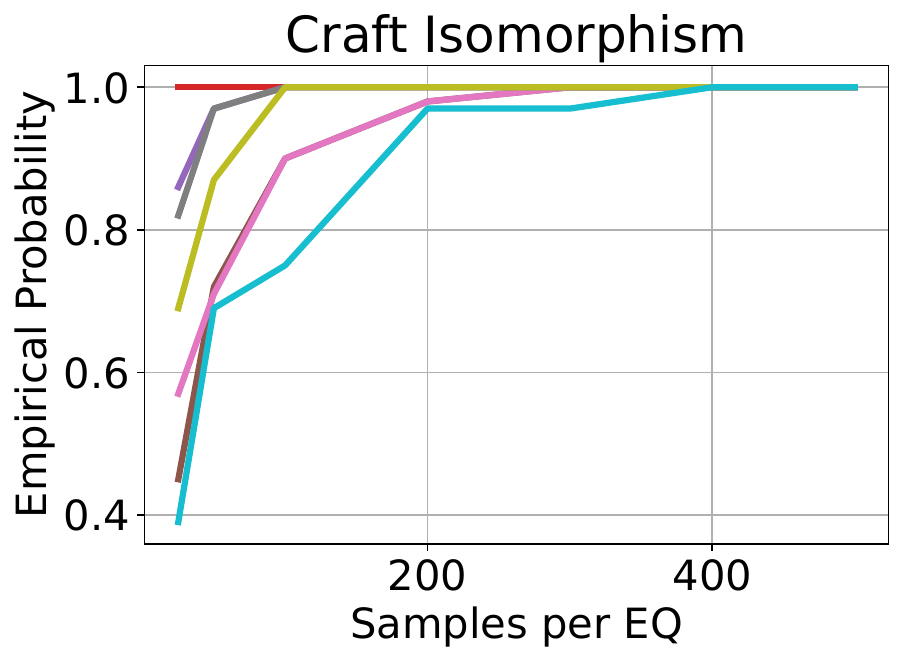}
    \includegraphics[height=0.29\columnwidth]{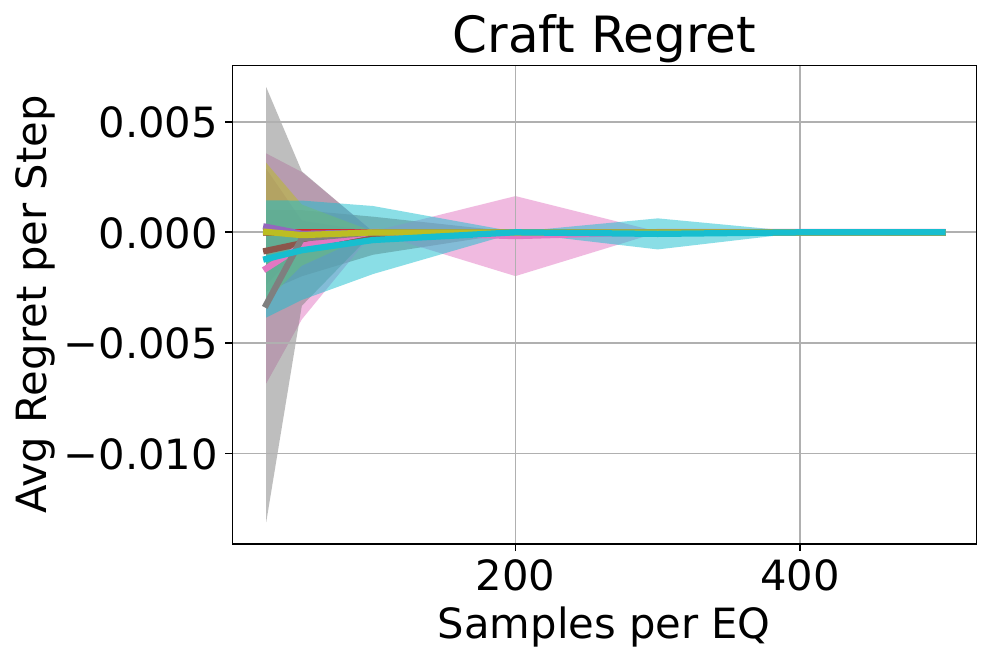}\\
    \includegraphics[width=0.49\columnwidth]{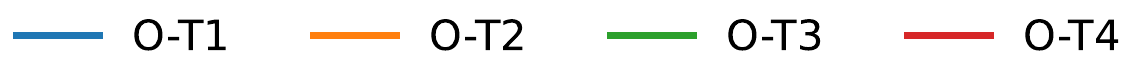}
    \includegraphics[width=0.49\columnwidth]{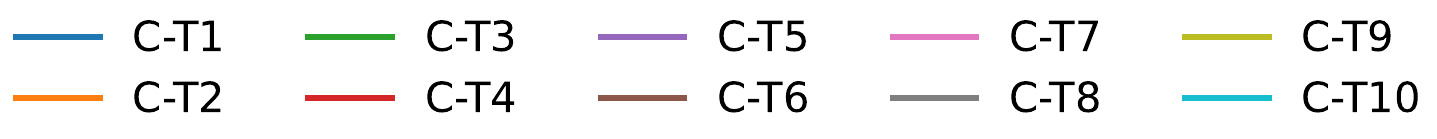}
    \caption{PAC--identification \ouralgorithm{}: (\emph{left}) empirical isomorphism probability, (\emph{right}) 
    average regret as functions of the number of samples per equivalence query for 4 OfficeWorld tasks (O-T1-4) and 10 CraftWorld tasks (C-T1-10).}
    \label{fig:correctness}
\end{figure}
\begin{figure}[!t]
    \centering
    \includegraphics[height=0.34\columnwidth]{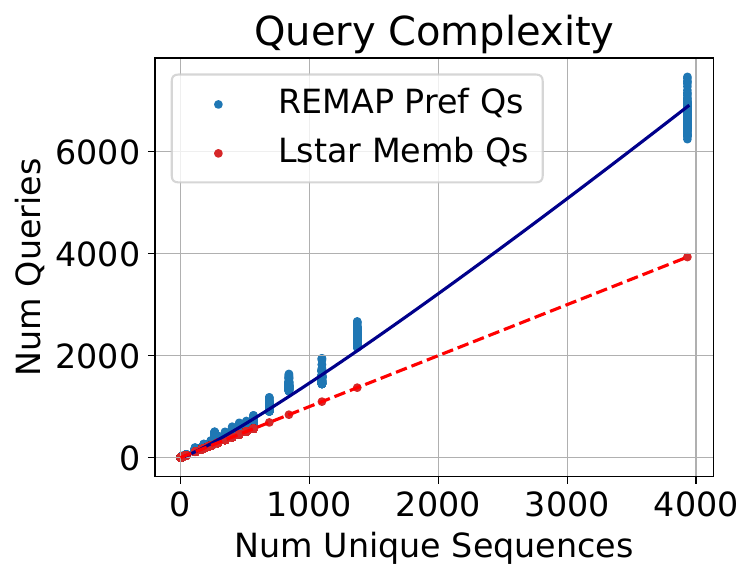}\hfill
    \includegraphics[height=0.34\columnwidth]{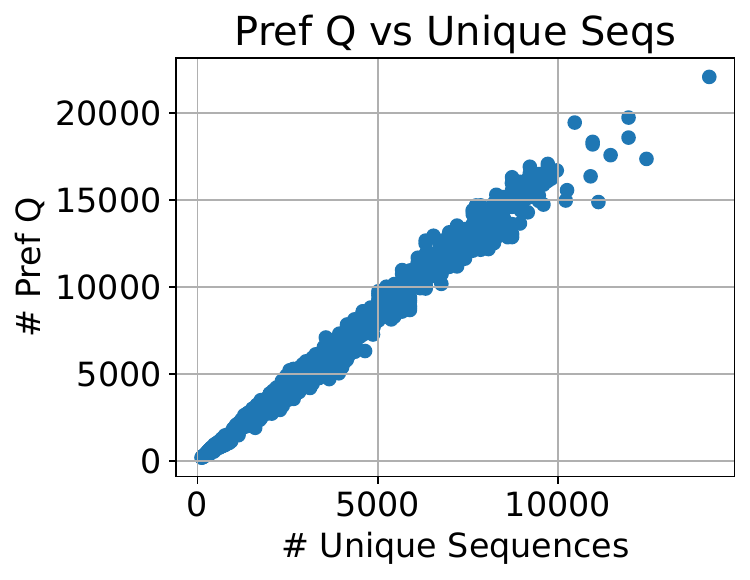}\\
    \includegraphics[height=0.34\columnwidth]{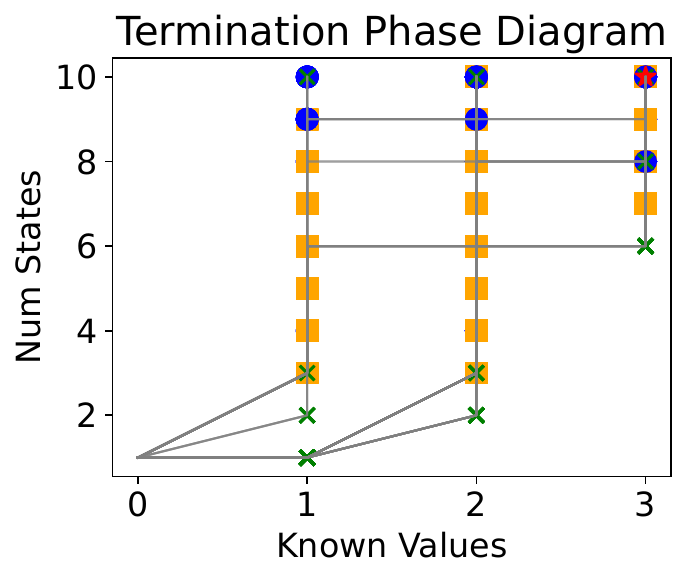}
    \includegraphics[height=0.19\columnwidth]{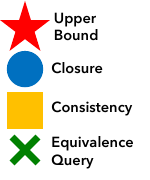}
    \includegraphics[height=0.34\columnwidth]{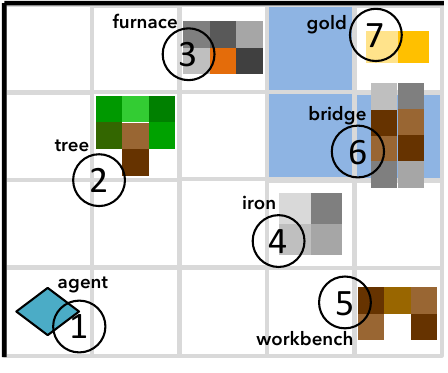}
    \caption{Query Complexity. \textit{Top left}: Exact \ouralgorithm{} preference query complexity. Mean of 100 trials per ground truth reward machine (blue dots) $\pm1$ standard deviation (orange, grey bars). 
    \textit{Top right}: PAC-ID \ouralgorithm{}, preference query complexity is $\mathcal{O}(n\ln n)$ in the number of unique sequences in the table. \textit{Bottom left}: Example termination phase diagram. \textit{Bottom right}: CraftWorld environment depiction.}
    \label{fig:query-complexity}
\end{figure}

\boldemph{Empirical Probability of Isomorphism} is the \emph{fraction of learned RMs with 100\% classification accuracy}. As the number of sample sequences tested by the teacher per equivalence query increases, the probability that the learner outputs a RM isomorphic to the ground truth RM upon termination goes to~$1$ (Figure \ref{fig:correctness}, left column). Classification accuracy is defined as the \emph{fraction of a test set of sequences that are identically classified} by the learned and ground truth RMs. The Appendix describes the distribution over $(\Sigma^I)^*$ that the test set is drawn from.

\boldemph{Average Policy Regret} \ We employ \textit{Q-learning with counterfactual experiences for reward machines} (CRM)~\cite{toro-etal-jair22} to obtain optimal policies for ground truth and learned RMs. We measured the empirical expected return of optimal policies learned from each type of RM. Average regret for a given task was measured as the \emph{difference between the empirical return under the ground truth RM for that task} (averaged over 100 CRM trials) \emph{and the empirical return under the learned RM} (with 10 CRM trials per learned RM, then averaging all $100\times 10 = 1000$ trials). Regret goes to 0 as the number of samples tested by the teacher per equivalence query increases (Figure~\ref{fig:correctness}, right column). The Appendix \ref{regretcomputation} describes regret computation details.

\boldemph{Correctness Conclusion} \ Exact \ouralgorithm{} learns the correct automaton 100\% of the time. Additionally, PAC-ID \ouralgorithm{} is more likely to be correct as the number of samples per equivalence query increases: isomorphism probability goes to 1 and regret goes to 0 for all tasks in both domains.

\subsection{Scaling Experiments} Figure \ref{fig:query-complexity} shows query complexity results. We measure preference query complexity of exact and PAC-ID \ouralgorithm{}, as a function of the number of unique sequences stored in the table upon termination. Exact \ouralgorithm{} (upper left) displays a trendline of $C=0.2114 N\ln N$ with $R^2=0.99268$, where $C$ is the number of queries, and $N$ is the number of unique sequences in the observation table. PAC-ID \ouralgorithm{} (upper right) tends to make significantly more preference queries about unique sequences compared to exact \ouralgorithm{} due to the sampling process. However, the number of preference queries is still $\mathcal{O}(N\ln N)$ 
due to randomized quicksort and linear merge comparison complexity. In comparison, an \lstar{}-based approach would use exactly $N$ membership queries (exactly linear in $N$ with a coefficient of $1$).

The maximum number of equivalence queries \ouralgorithm{} makes (Theorem 3) is the taxi distance from $(0,1)$ to $(|\Sigma^O|,n)$ in the termination phase diagram of Figure \ref{fig:query-complexity}. Progress (Lemma 1 and Theorem 2) towards termination (Corollary 1) occurs whenever a new hypothesis is made. \ouralgorithm{} can terminate early when all variables have correct values and the required number of states is reached.

\section{Related Work}
Active approaches for learning automata are variations or improvements of Angluin’s seminal \lstar{} algorithm~\cite{Angluin87}, featuring \emph{membership} and \emph{equivalence queries}. We consider an alternative formulation: \emph{actively learning automata from preference and equivalence queries} featuring feedback. We first discuss adaptations of \lstar{} for learning variants of finite automata, including reward machine variants.

\boldemph{Learning Finite Automata} \ \citet{Angluin87} introduced \lstar{} to learn deterministic finite automata (DFAs). \ouralgorithm{} has similar theoretical guarantees as \lstar{}, but utilizes a symbolic observation table, rather than an evidence-based one. Other algorithms adopt the evidence-based table of \lstar{} to learn: symbolic automata~\cite{symbolicAutomata,Argyros2018TheLO}, where Boolean predicates summarize state transitions; weighted automata~\cite{Bergadano1994LearningBO,Balle2015LearningWA} which feature valuation semantics for sequences on non-deterministic automata; probabilistic DFAs~\cite{Weiss2019}, a weighted automata that models distributions of sequences. None of these approaches uses preference queries.

However,~\citet{shah2023learning} considers active, cost-based selection between membership and preference queries to learn DFAs, relying on a satisfiability encoding of the problem. They assume a \emph{fixed hypothesis space} and have \emph{probabilistic} guarantees for termination and correctness. \ouralgorithm{}, through unification, \emph{navigates a sequence of hypothesis spaces}, each guaranteed to contain a concrete hypothesis satisfying current constraints, and has theoretical guarantees of correctness, minimalism, and termination under \emph{exact} and \emph{PAC--identification} settings.

Furthermore, learning finite automata from preference information relates to the novel problem of \emph{learning reward machines}~\cite{icarte2018} \emph{from preferences}. Learning Markovian reward functions from preferences has be studied extensively using neural~\cite{biyki2022aprel, Sadigh2017ActivePL} and interpretable decision tree~\cite{Bewley2021InterpretablePR,Kalra2023CanDD} representations, but approaches for learning reward machines primarily adapt evidence-based finite automata learning approaches.

\boldemph{Reward Machine Variants} \ Several reward machine (RM) variants have been proposed. Classical RMs~\cite{icarte2018,toro-icarte-2019-nips} have deterministic transitions and rewards; probabilistic RMs~\cite{dohmen-2022-icaps} model probabilistic transitions and deterministic rewards; and stochastic RMs~\cite{corazza-2022-aaai} pair deterministic transitions with stochastic rewards. Symbolic RMs~\cite{zhou-2022-icml} are deterministic like classical RMs, but feature symbolic reward values in place of concrete values. \citet{zhou-2022-icml} apply Bayesian inverse reinforcement learning (BIRL) to infer optimal reward values and actualize symbolic RMs into classical RMs, and require a symbolic RM sketch. \ouralgorithm{} requires no sketch, since it navigates over a hypothesis space of symbolic RMs and outputs a concrete classical RM upon termination.

\boldemph{Learning Reward Machines} \ Many RM learning algorithms assume access to explicit reward samples via environment interaction. Given a maximum RM size, \citet{toro-icarte-2019-nips} apply discrete optimization to arrive at a perfect classical RM. \citet{xu_jirp} learn a minimal classical RM by combining regular positive negative inference~\cite{Dupont94} with Q-learning for RMs~\cite{icarte2018}, and apply constraint solving to ensure each hypothesis RM is consistent with observed reward samples. \citet{corazza-2022-aaai} extended the method to learn stochastic RMs. \citet{topper2024bayesian} extends BIRL to learn classical RMs using simulated annealing, but needs the number of states to be supplied, and requires empirical tuning of hyperparameters. \lstar{} based approaches have also been used to learn classical~\cite{tappler2019based,GaonB20_nonmarkovian,xu_lstar} and probabilistic~\cite{dohmen-2022-icaps} RMs, relying on concrete observation tables.~\citet{GaonB20_nonmarkovian} and \citet{xu_lstar} use a binary observation table, while~\citet{tappler2019based} and \citet{dohmen-2022-icaps} record empirical reward distribution table entries.

In contrast, \ouralgorithm{} uses a symbolic observation table, and uses preferences information in place of explicit reward values. \ouralgorithm{} navigates symbolic hypothesis space, with constraint solving enabling a concrete classical RM.

\section{Conclusion}
We introduce the problem of learning Moore machines from preferences and propose \ouralgorithm{}, an \lstar{} based algorithm, wherein a strong learner with access to a constraint solver is paired with a weak teacher capable of answering preference queries and providing counterexample feedback in the form of a constraint. Unification applied to a symbolic observation table permits symbolic hypothesis space navigation; the constraint solver enables concrete hypotheses. \ouralgorithm{} has theoretical guarantees for correctness, termination, and minimalism under both exact and PAC--identification settings, and it has been empirically verified under both settings when applied to learning reward machines. Future work will expound on more realistic preference models,  variable strength feedback, and inconsistency.

\begin{credits}
\subsubsection{\discintname} The authors have no competing interests to declare that are relevant to the content of this article.
\end{credits}

%
%
%
\bibliographystyle{splncs04nat}
\bibliography{references}

\newpage
\appendix
\setcounter{theorem}{0}
\setcounter{lemma}{0}
\setcounter{corollary}{0}
\section{Technical Appendix}
In this technical appendix, we present empirical data, examples, experimental setup information, termination plots, detailed algorithms, and proofs. We also present the reproducibility checklist at the end of the appendix.

\subsection{Experimental Details}\label{appendix:experimentalsetup}
\subsubsection{Hardware}
Experiments were run on a server with 512GB of memory and 2 Intel(R) Xeon(R) Gold 6248R CPUs. Each experimental run was executed on a single core. The operating system was Ubuntu 22.04.4. The relevant software libraries are listed in the code appendix remap/README file.

\subsubsection{OfficeWorld and CraftWorld Domains}
The computational experiments in this paper involved the OfficeWorld and CraftWorld domains. Both are gridworld environments that have been used in the reward machine literature. They feature sequential tasks which can be represented as a reward machine. Figure \ref{fig:officeworldrm} illustrates an example OfficeWorld domain, along with a corresponding reward machine representing a task where the agent must bring paper to the desk while avoiding obstacles.

\subsubsection{Handling Reward Machine Incompleteness.} The classical reward machines specified by \citet{icarte2018} in the OfficeWorld and CraftWorld domains are incomplete automata, in that not all states have transitions defined. Specifically, classical reward machines have a terminal state from which no transitions can occur. However, an incomplete reward machine can be converted into a complete one by adding transitions from all terminal states to a single, special absorbing ``HALT'' state. 

Therefore, to handle reward machine incompleteness, we first convert the incomplete reward machines into complete Mealy machines with terminal states, and then convert the Mealy machine to a complete Moore machine with a single absorbing HALT state which all terminal states transition to. This latter Moore machine is used by the teacher as the ground truth non-Markovian reward function. Next, \ouralgorithm{} is run, resulting in a learned Moore machine. The learned Moore machine is converted to a Mealy machine, and the absorbing state is identified and removed. 

We do not merge terminal states in order to remain consistent with the original implementation reward machines, but we do collapse pairs of transitions into single summary transitions if a pair of transitions share the same start, end, and output values. This is accomplished by constructing a truth table, constructing a disjunctive normal formula (DNF) for the truth table, and then simplifying the DNF and using the result as the summarizing transition label. Once this process is complete, the result is a learned reward machine, which can now be evaluated.

\subsubsection{Handling Reward Machine Nondeterminism.} Classical reward machines have a deterministic definition. However, some of the classical reward machines specified by \citet{icarte2018} are \emph{nondeterministic}, in that from a given state, multiple transitions can be satisfied using under a given set of true propositions. This was the case for the reward machines for tasks 5 through 10 in the CraftWorld domain.

The common nondeterminism in those reward machines was the following type: assume the proposition set is $\mathcal{P}=\{a,b\}$, and the transitions have been summarized according to some set of Boolean formulae for which a subset is $\{\phi_a, \phi_b\}$, where $\phi_a$ is satisfied whenever $a$ holds, and $\phi_b$ is satisfied whenever $b$ holds. Suppose we have a state $q_1$, and two of its summarized transitions are the following: $q_2=\delta(q_1, \phi_a)$ and $q_3=\delta(q_1, \phi_b)$, meaning that a transition from $q_1$ to $q_2$ occurs if $\phi_a$ is satisfied, and a transition from $q_1$ to $q_3$ occurs if proposition $\phi_b$ holds. Clearly, $\phi_a$ and $\phi_b$ can simultaneously be satisfied if $a\land b$ holds, implying nondeterminism. Additionally, all the nondeterministic reward machine specifications also contained the following style of transitions: $q_4=\delta(q_1,\phi_a\phi_b)=\delta(q_1,\phi_b\phi_a)$, where $\phi_a\phi_b$ and $\phi_b\phi_a$ are sequences of length 2. This type of nondeterminism can be corrected by adding an additional state $q_5$ and modifying the transitions from $q_1$ to $q_2$ and $q_3$, while still maintaining the intended behabior of reaching state $q_4$. Specifically, the conversion is, given
\begin{figure}[t]
    \centering
    \includegraphics[width=0.495\columnwidth]{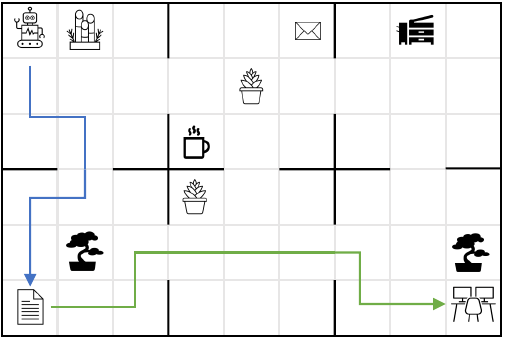}
    \includegraphics[width=0.495\columnwidth]{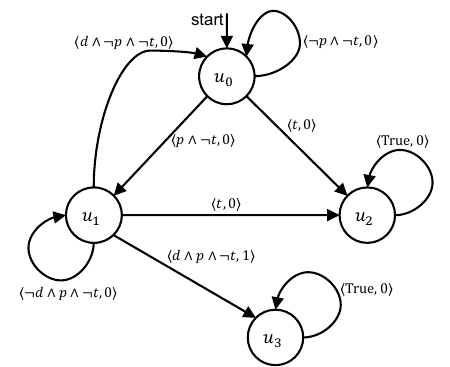}
    \caption{Example OfficeWorld domain \emph{(top)} and a reward machine \emph{(bottom)} encoding the sequential task ``\emph{bring the paper to the desk without running into any obstacles.}'' A transition from state $u_i$ to $u_j$, labeled by propositional formula $\phi$ and scalar reward $r$ as $\langle \phi, r\rangle$, occurs only if $\phi$ is satisfied; reward $r$ is emitted upon transition. Propositions $p, t$, and $d$ are the agent: possessing paper, running into an obstacle, and being located at the desk.}
    \label{fig:officeworldrm}
\end{figure}
\begin{align*}
q_1&=\delta(q_1,\varepsilon)=\delta(q_1, \neg\phi_a\land\neg\phi_b)\\
q_2&=\delta(q_1, \phi_a)=\delta(q_2,\neg\phi_b)\\
q_3&=\delta(q_1, \phi_b)=\delta(q_3,\neg\phi_a)\\
q_4&=\delta(q_1,\phi_a\phi_b)=\delta(q_1,\phi_b\phi_a),
\end{align*} we can make the following adjustments and additions:
\begin{align*}
q_2=\delta(q_1, \phi_a)&\Longrightarrow q_2=\delta(q_1, \phi_a\land \neg \phi_b)\\
q_3=\delta(q_1, \phi_b)&\Longrightarrow q_3=\delta(q_1, \phi_b\land \neg \phi_a)\\
\text{Add new state } &\Longrightarrow q_5=\delta(q_1, \phi_a\land \phi_b)\\ &\phantom{{}\Longrightarrow q_5}= \delta(q_5, \neg(\phi_a \lor \phi_b))\\
\text{Add new transition } &\Longrightarrow q_4=\delta(q_5, \phi_a \lor \phi_b).
\end{align*} These changes make the reward machine deterministic while still maintaining desired behavior by explicitly providing three different transitions away from state $q_1$ for processing the input proposition sets $\{a\},\{b\},$ and $\{a,b\}$ separately.

\begin{figure*}[!t]
    \centering
    \includegraphics[width=\textwidth]{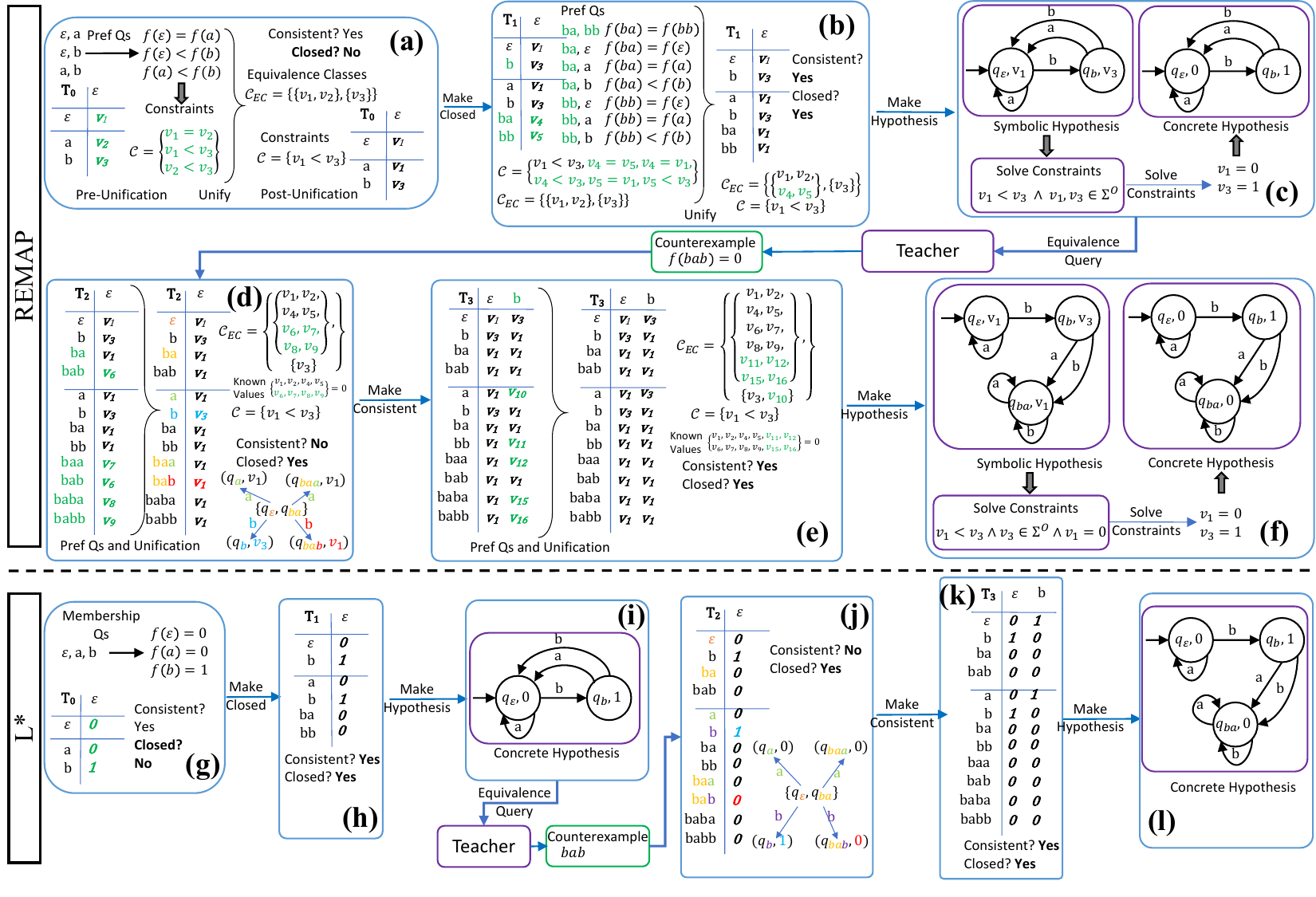}
    \caption{Learning a Moore machine with \ouralgorithm{} (top) vs \lstar{} (bottom). \lstar{} employs concrete values in its observation table, whereas \ouralgorithm{} uses a symbolic observation table $\obstable$. The function $f(s)$ returns $1$ if $s$ is in $a^*b$ and returns $0$ otherwise. (a) initializing $\obstable$ and performing a \textsc{SymbolicFill}, unclosed table; (b) expand $S$ with sequence $b$ to close the table, followed by a \textsc{SymbolicFill} yielding a unified, closed, and consistent $\obstable$; (c) \textsc{MakeHypothesis} yields concrete hypothesis $h_1$ from the symbolic hypothesis and solving constraints in $\mathcal{C}$; (d) submit $h_1$ via equivalence query, receive and process the counterexample $bab$ with feedback $f(bab)=0$ from teacher by adding $bab$, $ba$, and $b$ to $S$, perform a \textsc{SymbolicFill}, set the value equivalence class of $T(bab)$ to $f(bab)$, yields an inconsistent table; (e) expand suffixes $E$ with $b$ and perform a \textsc{SymbolicFill}, yielding a unified, closed, and consistent table; (f) \textsc{MakeHypothesis} yields concrete hypothesis $h_2$ for an equivalence query, wherein the teacher establishes $h_2$ is correct, and the algorithm terminates. Learning the target Moore machine with \lstar{} is shown is parts (g)-(l). (g) shows using membership queries to populate the table with concrete values, resulting in an unclosed table, which is then made closed in (h) by adding $b$ to $S$. Since the result is closed and consistent, a concrete hypothesis can be made in part (i). Sending this hypothesis to the teacher via an equivalence query results in the teacher sending a counterexample $bab$ back, which must be processed in (j). Here, the $bab$ and all its prefixes are added to $S$, resulting in an inconsistent table. Consistency is resolved by adding $b$ to $E$, resulting in a closed and consistent table. This allows the final hypothesis to be made in part $(l)$.}
    \label{fig:appendix_example}
\end{figure*}

\subsubsection{Test Set Generation for Classification Accuracy} Here, we describe the process inducing the distribution $\mathcal{D}$ over $(\Sigma^I)^*$ from which the test set is generated. To generate the test set, we follow the procedure described for sampling-based equivalence queries: we first sample a random variable $L$ from a geometric distribution to represent desired sequence length, and then populate each of the $L$ sequence elements i.i.d. from the uniform distribution over $\Sigma^I$. The geometric distribution we use is $\text{Pr}(L=l)=(1-p)^{l-1}p$, where $p=0.2$. Thus, the average sampled sequence length is 5.

Next, we amend this sample sequence set with the set of sequences guaranteed to be composed only from explicitly specified transitions in the incomplete ground truth reward machine. This latter sequence sample set is generated by finding all sequences of Boolean formula from the initial state to all other states via \emph{iterative deepening search}, resulting in all sequences with length \emph{at most} the number of states in the reward machine. Each Boolean formula sequence generates multiple sequences with elements from $2^\mathcal{P}$, by uniformly sampling elements from $2^\mathcal{P}$ that make the formula true. Explicitly, for a given path of length $d$ from the start state to depth $d$, where $d$ ranges from $2$ to $N$ the number of states in the automaton, we generate $d|2^\mathcal{P}|s$ sequences, where $s$ is a positive integer. Thus, for an given automaton with $N$ states, we have at most $\mathcal{O}(|2^\mathcal{P}|^N)$ paths, so we generate a test set of size $\mathcal{O}(|2^\mathcal{P}|^{N+1}ds)$. Each of the elements from the set $\{25,50,100,200,300,400,500, 1000, 2000, 5000\}$ was used as the value of $s$.

For tasks 1 through 4 of OfficeWorld and CraftWorld, this corresponded to between 35.4k to 36.2k samples. For tasks 5 through 10 of CraftWorld, this corresponded to 1.4m to 24.6m samples.

We evaluate each ground truth and learned reward machine pair with these sequences. Classification accuracy is the fraction of sequences in the sample set identically classified by the learned and ground truth reward machines.

\subsubsection{Regret Computation Details}\label{regretcomputation}
For the regret experiments, we needed to train and evaluate policies under the ground truth and learned reward machines. For a given reward machine, REMAP was run 100 times, producing a set of 100 reward machines per ground truth. Any differences in reward machines stemmed from how the teacher sampled test sequences and the order and length of counterexamples presented.

To compute regret, we computed \emph{empirical regret} between the optimal policy under the ground truth reward machine and the optimal policy of each of the learned reward machines.

We utilized Q-learning with counterfactual experiences for reward machines (CRM) \cite{toro-etal-jair22} to obtain optimal policies for each reward machine. We set the discount factor to $\gamma=0.9$, and for the OfficeWorld reward machines, each policy was trained for a total of $2\times 10^5$ steps, and $2\times 10^6$ steps for CraftWorld reward machines. Each reward machine was trained for at least 10 seeds. Observing the return curves, we concluded that by $1\times 10^5$ steps, the policy was already optimal for OfficeWorld domains, and by $1\times10^6$ steps for CraftWorld domains. We compute the average reward per step of the optimal policy by summing the total return of the policy over the last $10^5$ (OfficeWorld) or $10^6$ (CraftWorld) steps:\begin{align*}
\text{Average Reward per Step for Seed $k$} &= \frac{1}{\Delta s}\int_s^{s+\Delta s} r_{k,t}dt\\ &= \hat{R}_k\\
\text{Empirical Average Reward per Step} &= \frac{1}{N}\sum_{k=1}^N\hat{R}_k\\
&= \hat{\mathbb{E}}[\hat{R}],
\end{align*} where $r_{k,t}$ represents the reward received for seed $k$ at step $t$, and $s$ and $s+\Delta s$ represent the interval of steps the average is taken over, and $N$ represents the total number of seeds. Then, the average regret plotted in the paper was the difference between the empirical average reward per step of the optimal policy induced by the ground truth reward machine, and the empirical average reward per step of the optimal policies from the corresponding learned reward machines. Sample variance was computed for the ground truth reward machines via $$\text{Var}(\hat{R}) = \frac{1}{N-1}\sum_{k=1}^N (\hat{R}_k - \hat{\mathbb{E}}[\hat{R}])^2$$ with standard deviation computed via taking the square root of the sample variance.

\subsection{Example Comparing REMAP and \lstar{}}
Figures \ref{fig:example-lstar} and \ref{fig:example} of the main paper contained an abridged example comparing REMAP and \lstar{}. We show the full example in Figure \ref{fig:appendix_example} of the Appendix showing how REMAP and \lstar{} learn the Moore machine shown in part \emph{(l)}.

If the ground truth is to be interpreted as the Moore equivalent of a reward machine, then the ground truth Moore machine has 3 states, with $q_b$ as the terminal state, $q_{ba}$ is the absorbing HALT state, and $q_\varepsilon$ as the initial state; the corresponding ground truth reward machine would have only two states---$q_\varepsilon$ as the initial state, and $q_{b}$ as the terminal state, with no transitions out of $q_b$; there would be no absorbing $q_{ba}$ state. Additionally, the transition from $q_\varepsilon$ to $q_b$ via $b$ would have a reward of $1$ associated with it, and the self-transition $q_\varepsilon$ to $q_\varepsilon$ via $a$ would have a reward of $0$ associated with it; the states would no longer have rewards associated with them.

If the input alphabet is interpreted as the powerset of a proposition set $\mathcal{P}=\{p\}$, with $2^\mathcal{P}=\Sigma^I=\{\{\},\{p\}\}$, then we can use $a=\{\}$ and $b=\{p\}$ for convenience, and where $\Sigma^O=\{0,1\}$. As shown in the example, REMAP uses a symbolic observation table and performs preference queries, and closedness and consistency tests are evaluated with respect to a unified table. \lstar{} uses a concrete observation table and uses membership queries.
\begin{figure*}[t]
    \centering
    \includegraphics[height=0.24\textwidth]{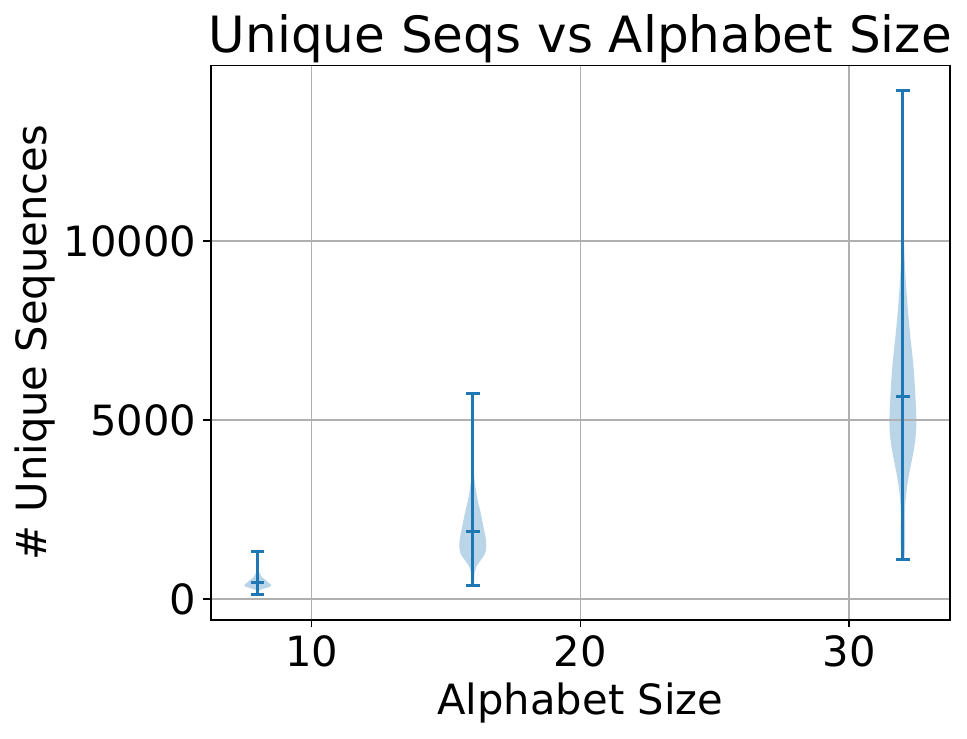}
    \includegraphics[height=0.24\textwidth]{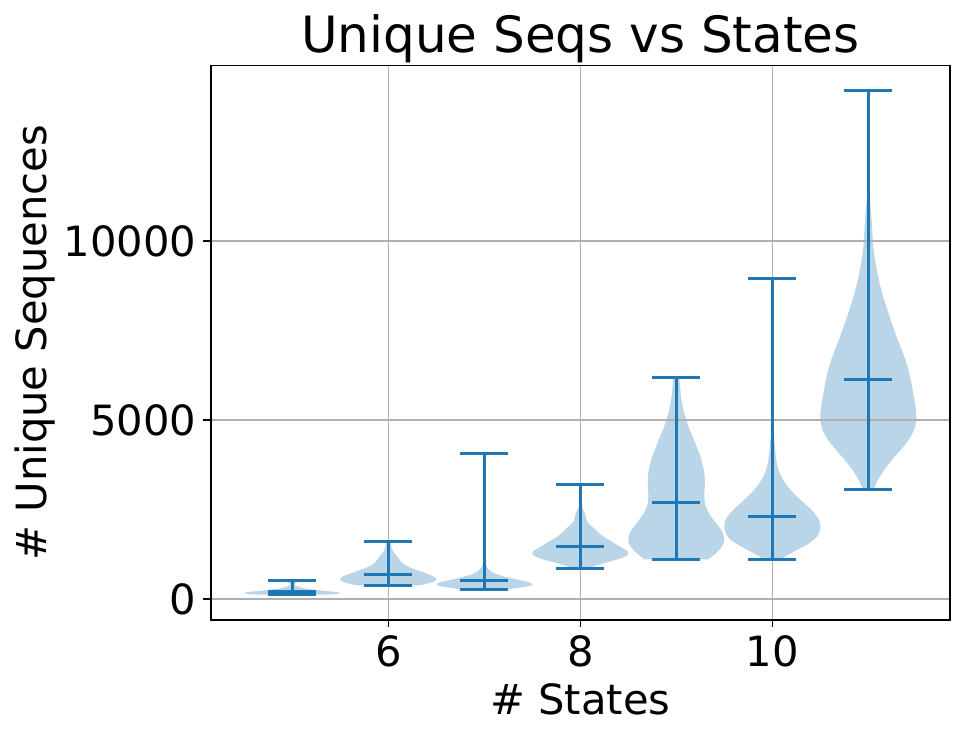}
    \includegraphics[height=0.24\textwidth]{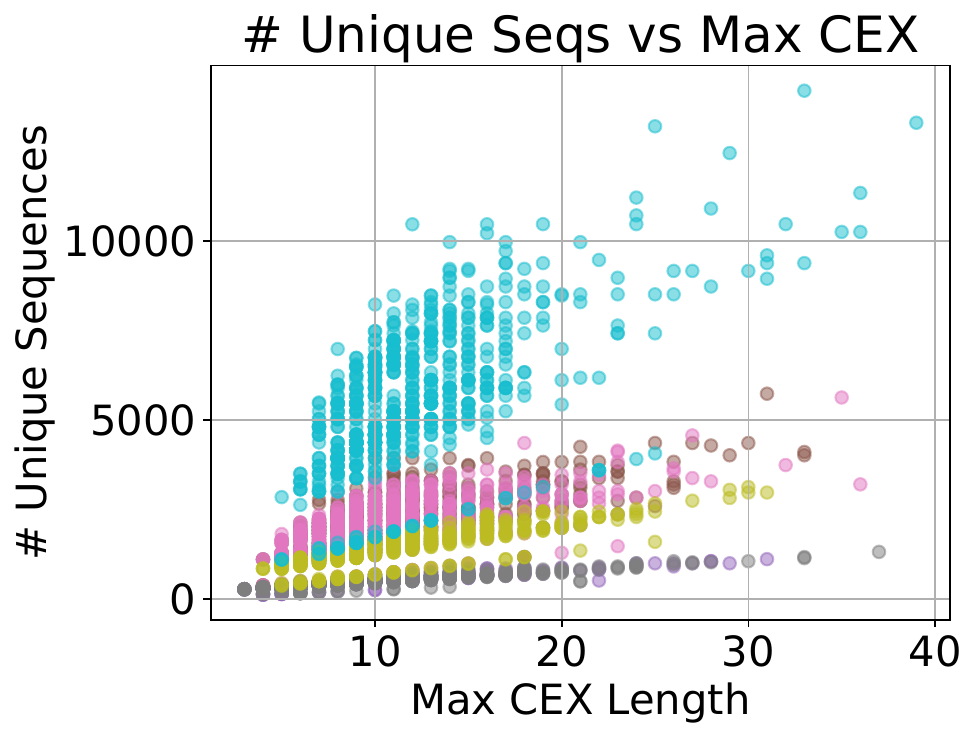}
    \caption{Empirical Scaling Plots. \textit{Left to right}: Under an inexact PAC-identification teacher, empirical distributions of how the total number of sequences in the table upon termination of \ouralgorithm{} depends on alphabet size, number of states, and maximum counterexample length (color coded using data from CraftWorld tasks C-T5 through C-T10, see Figure~\ref{fig:correctness} legend).}
    \label{fig:scaling}
\end{figure*}
\subsection{Additional Scaling Measurements}
Since we measured query complexity as a function of the number of unique sequences present in the table upon termination, we also measured the number of unique sequences as functions of input alphabet size, number of states, and length of the maximum counterexample received. These results are shown in Figure \ref{fig:scaling}.

\subsection{Termination Plots}
We present a full set of termination phase diagrams for learning the CraftWorld reward machines for tasks 5 through 10 in Figures \ref{fig:termination_plots_appendix_1}, \ref{fig:termination_plots_appendix_2}, \ref{fig:termination_plots_appendix_3}, and \ref{fig:termination_plots_appendix_4}. Columns correspond to tasks, while rows correspond to number of samples the teacher make per equivalence query. Each plot contains 100 paths through the termination phase space, where the $x$-axis is the number of known representative values, and the $y$-axis is the number of states in the hypothesis. Each node along the path corresponds with an event in \ouralgorithm{}. The start of each path always starts at $(0,1)$, since there is always an initial state, but no known representatives. Green $X$'s represent when an equivalence query is made. Blue circles represent the immediate result of a closure operation, while orange squares represent the immediate result of a consistency operation. The red star represents the upper bound on number of states and number of known representative values. Observe that between consecutive equivalence queries, at least one of the number of states or the number of known representative values must increase: in particular, if the number of known representative values does not increase as the result of an equivalence query, the number of states must increase. Additionally, it is possible for \ouralgorithm{} to terminate early, when the number of states has reached the upper bound, and when the satisfying solution to the constraints has all correct values.
\begin{figure*}
    \centering
    \includegraphics[height=0.245\textwidth]{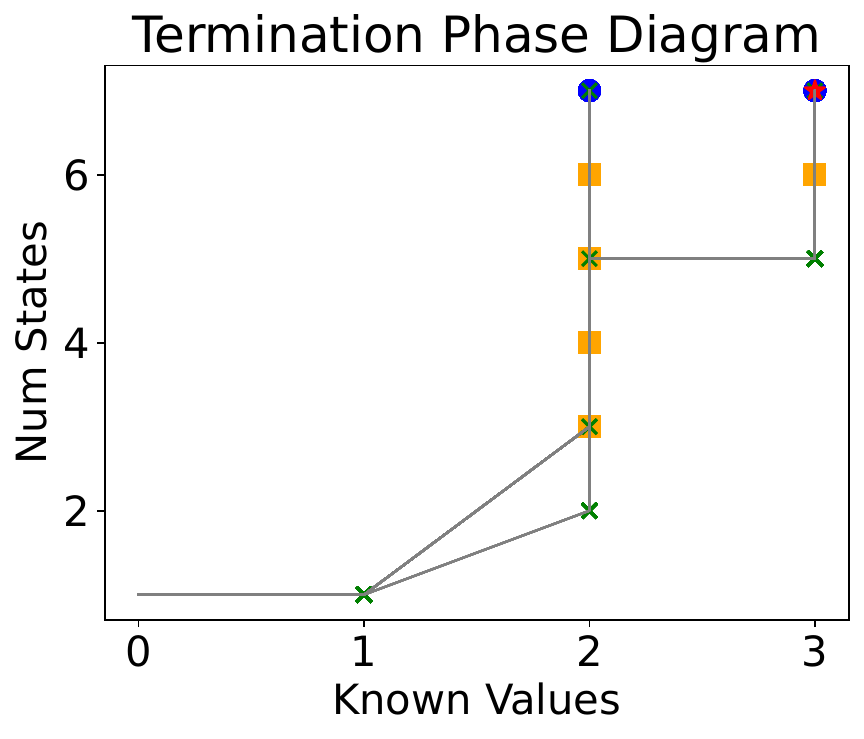}
    \includegraphics[height=0.245\textwidth]{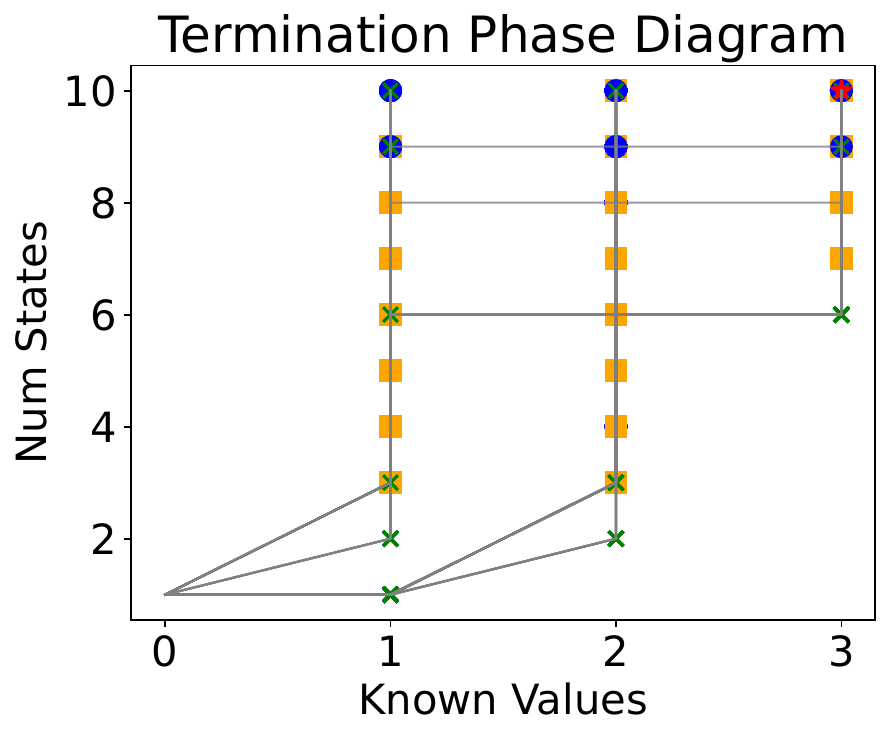}
    \includegraphics[height=0.245\textwidth]{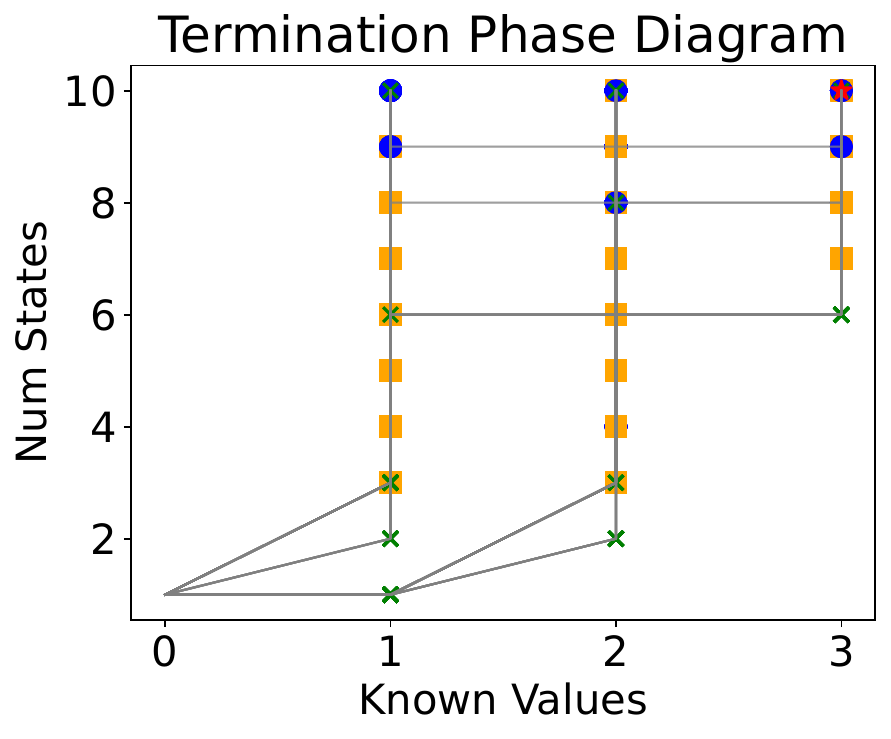}\\
    \includegraphics[height=0.245\textwidth]{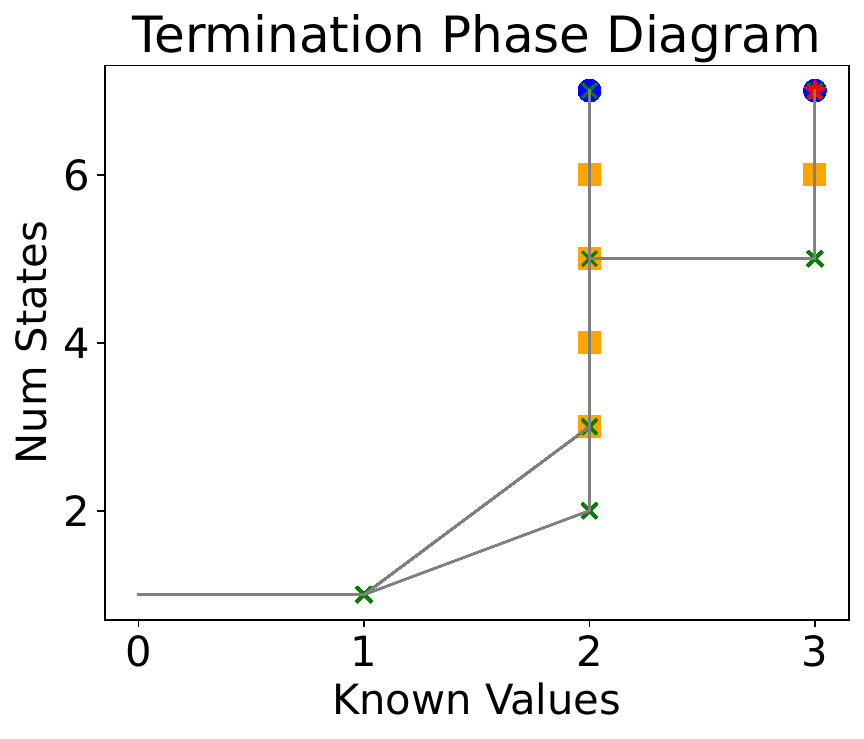}
    \includegraphics[height=0.245\textwidth]{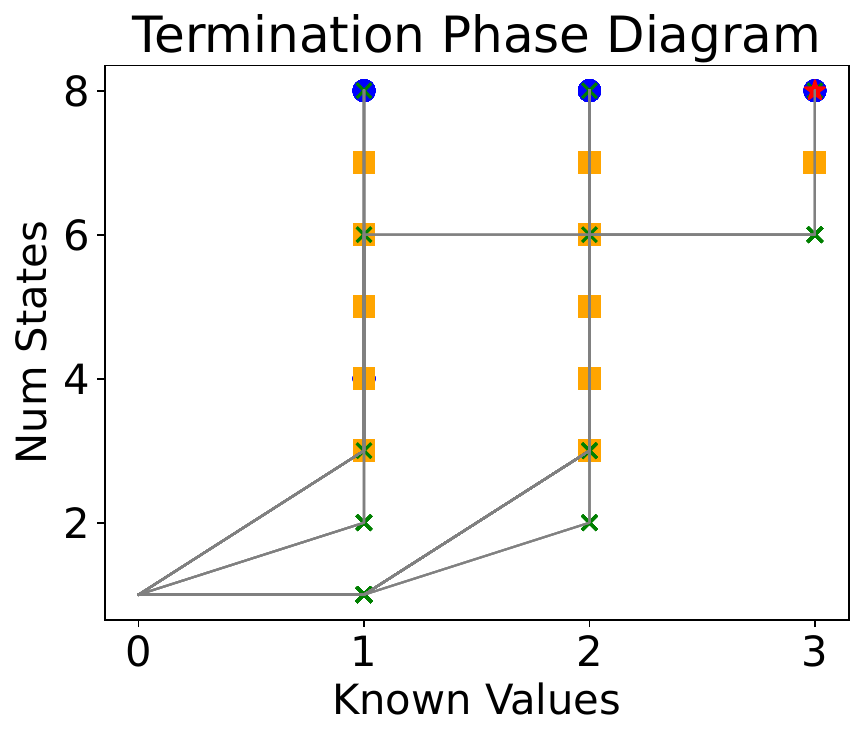}
    \includegraphics[height=0.245\textwidth]{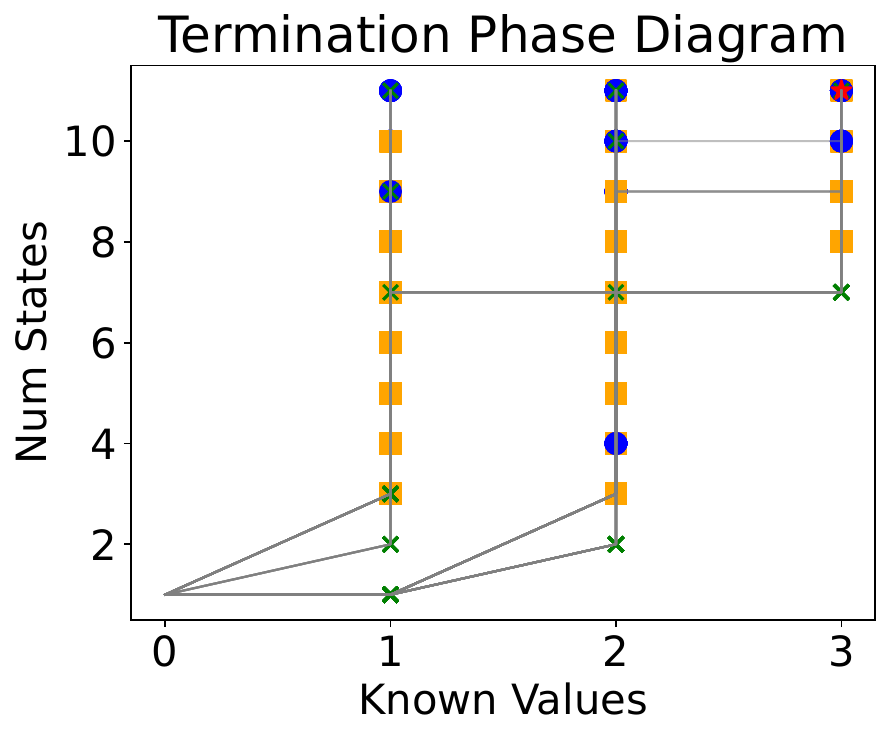}\\\rule{\textwidth}{1pt}
    \includegraphics[height=0.245\textwidth]{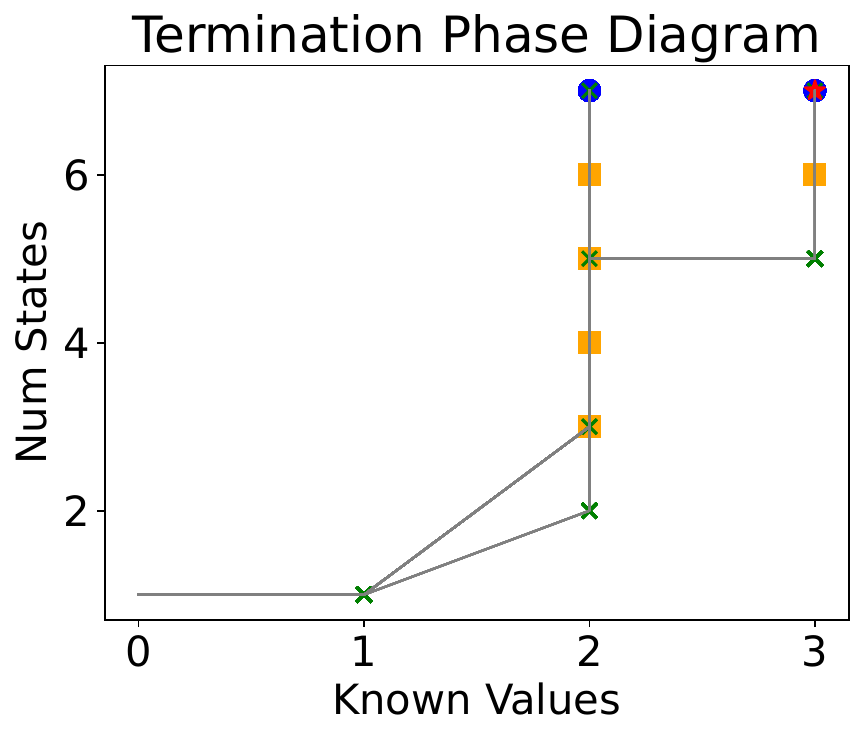}
    \includegraphics[height=0.245\textwidth]{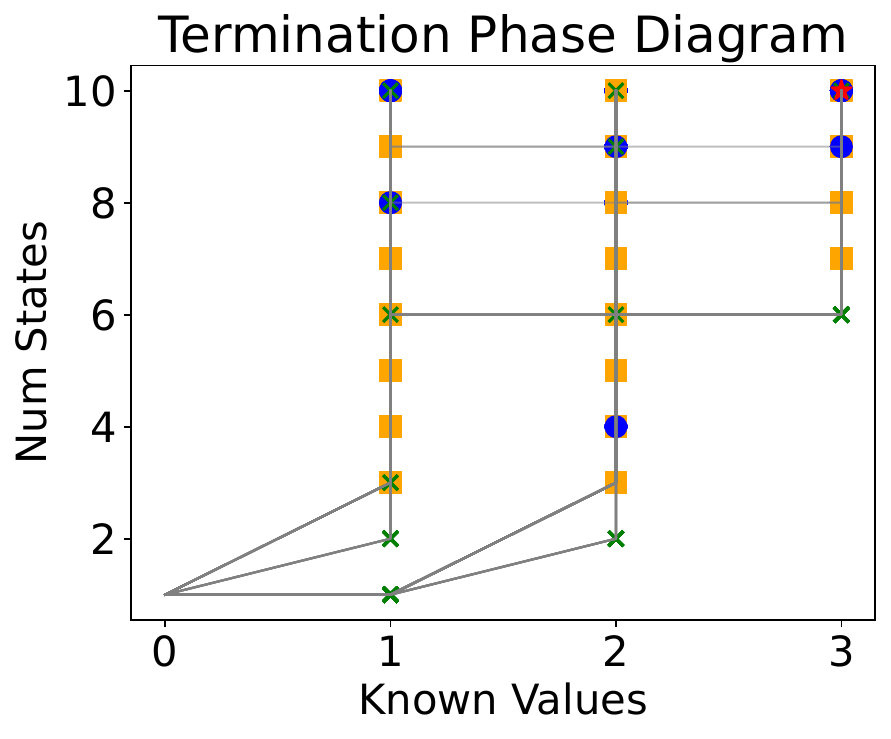}
    \includegraphics[height=0.245\textwidth]{figs/termination_plots/craft_termination_plot.50.t105.pdf}\\
    \includegraphics[height=0.245\textwidth]{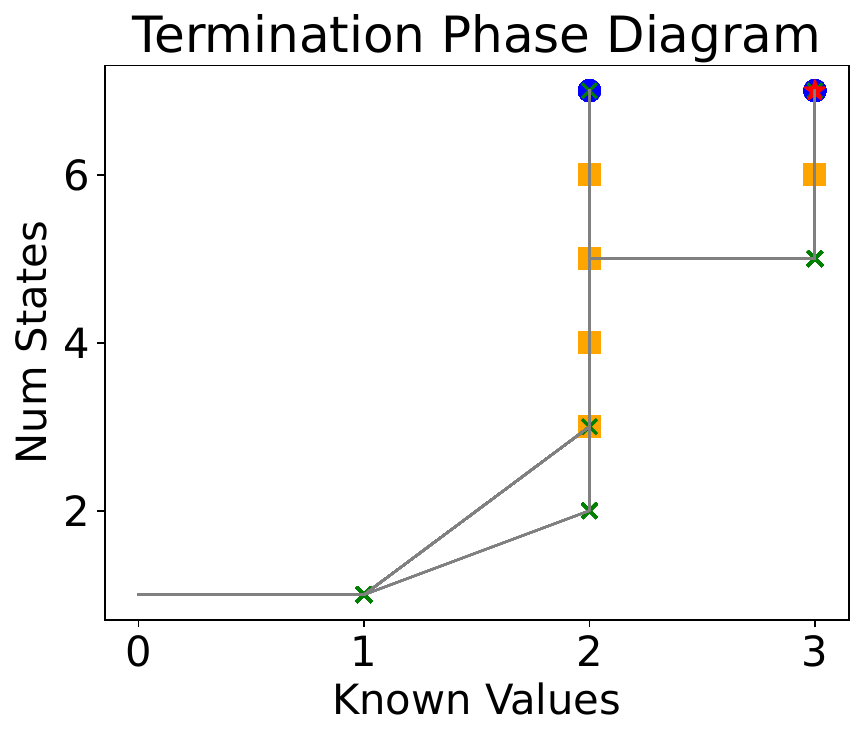}
    \includegraphics[height=0.245\textwidth]{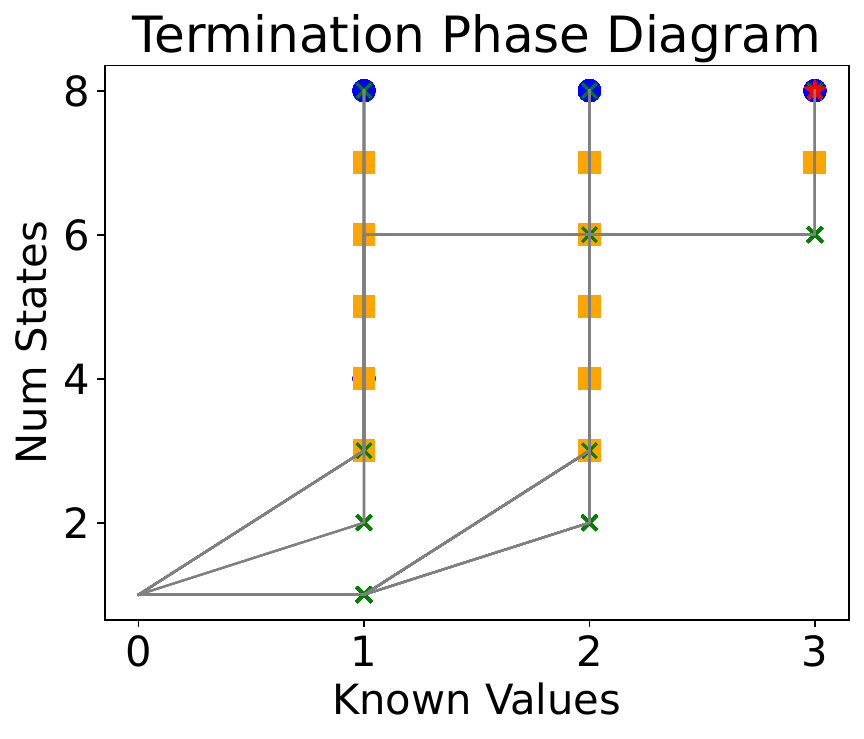}
    \includegraphics[height=0.245\textwidth]{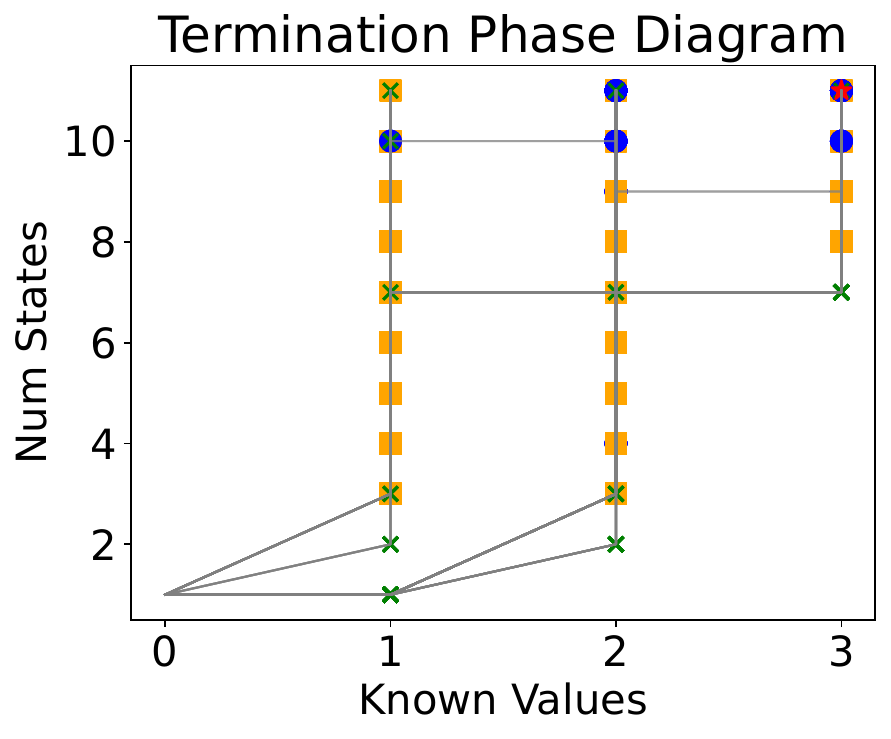}
    \caption{Phase diagram termination plots for CraftWorld tasks 5 through 10. Teacher tests 25 (top section) and 50 (bottom section) samples per equivalence query. Within each section: left to right (top row) Task 5, 6, 7; left to right (bottom row) Task 8, 9, 10. The X-axis is number of known representative values in the hypothesis, the Y-axis is the number of states in the hypothesis. Legend: blue circle is a closure operation, orange square is a consistency operation, green x is an equivalence query, and the red star represents the the upper bound termination condition. We observe it is possible to terminate early, prior to reaching the upper bound. Each plot contains 100 individual paths through phase space.}
    \label{fig:termination_plots_appendix_1}
\end{figure*}
\begin{figure*}
    \includegraphics[height=0.245\textwidth]{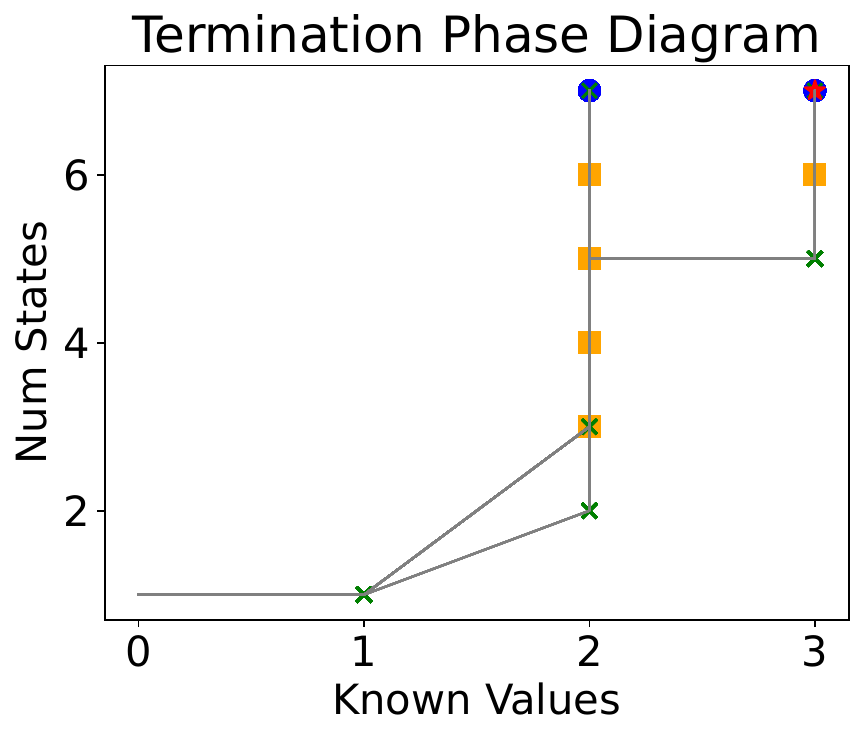}
    \includegraphics[height=0.245\textwidth]{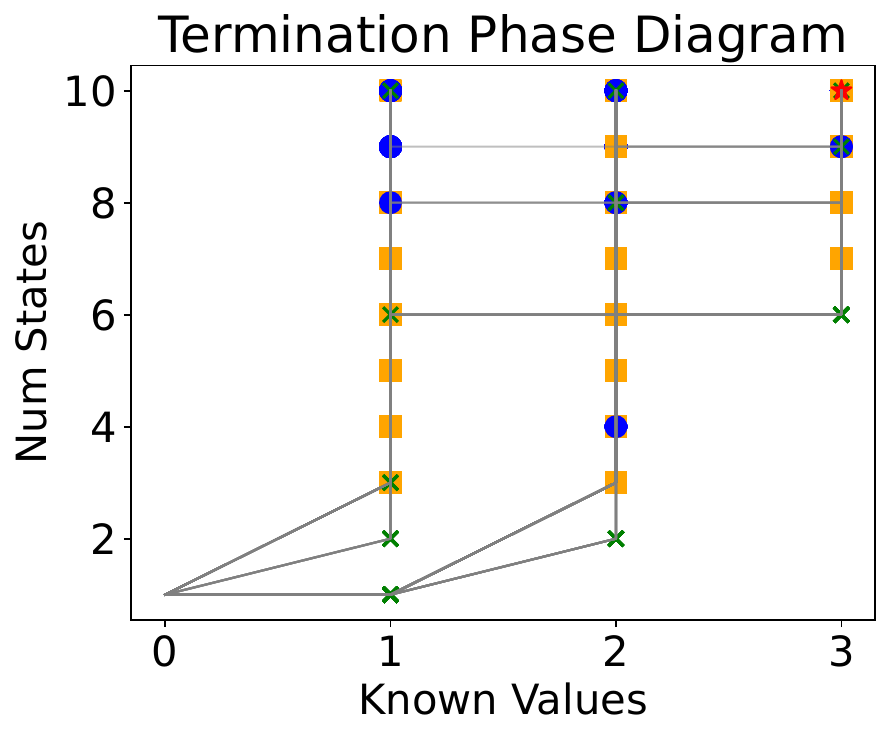}
    \includegraphics[height=0.245\textwidth]{figs/termination_plots/craft_termination_plot.100.t105.pdf}\\
    \includegraphics[height=0.245\textwidth]{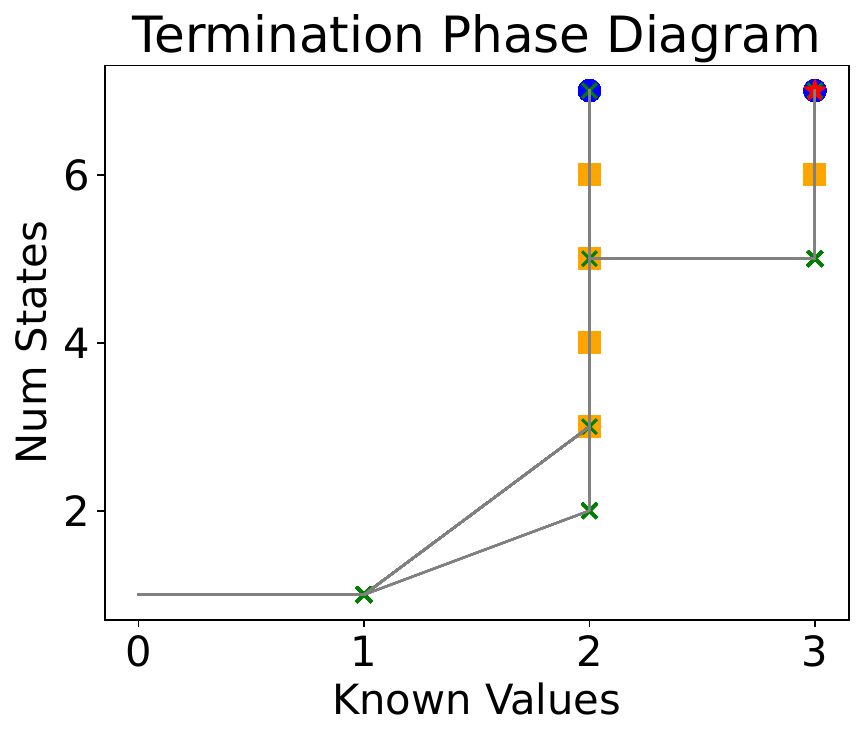}
    \includegraphics[height=0.245\textwidth]{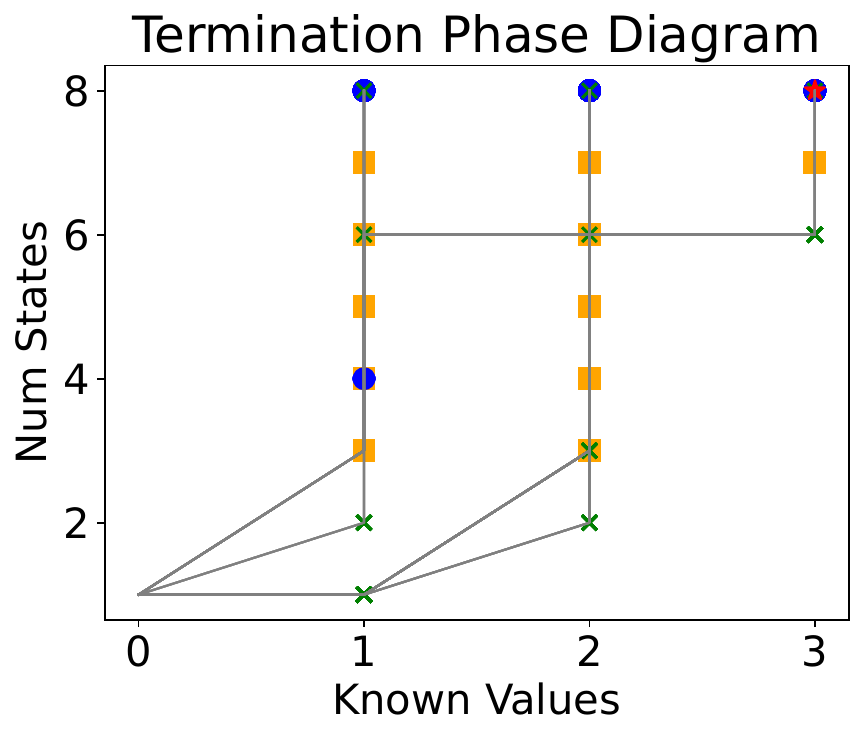}
    \includegraphics[height=0.245\textwidth]{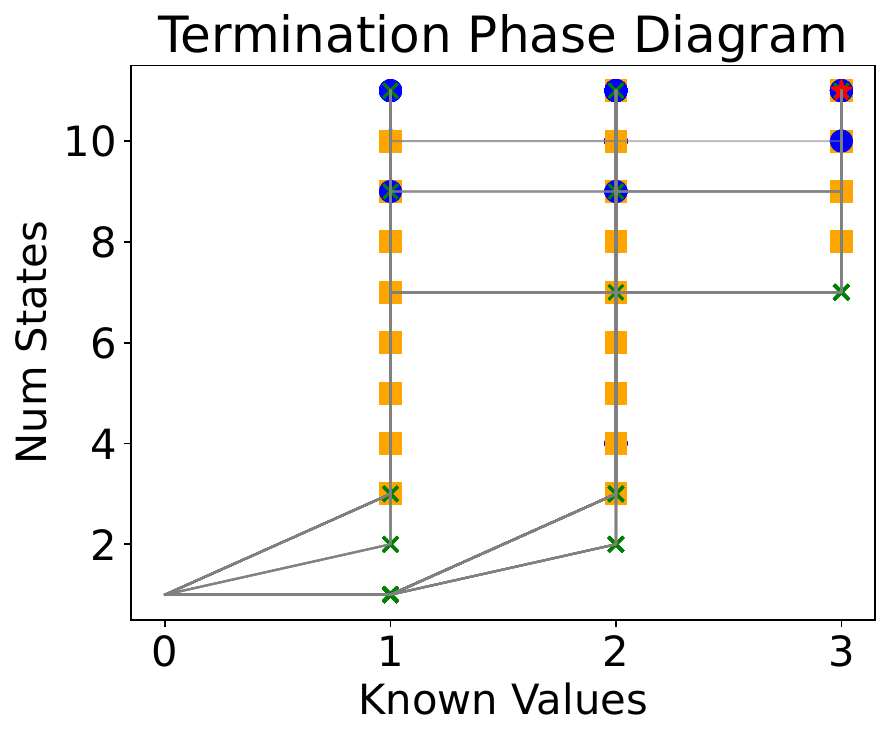}\\\rule{\textwidth}{1pt}
    \includegraphics[height=0.245\textwidth]{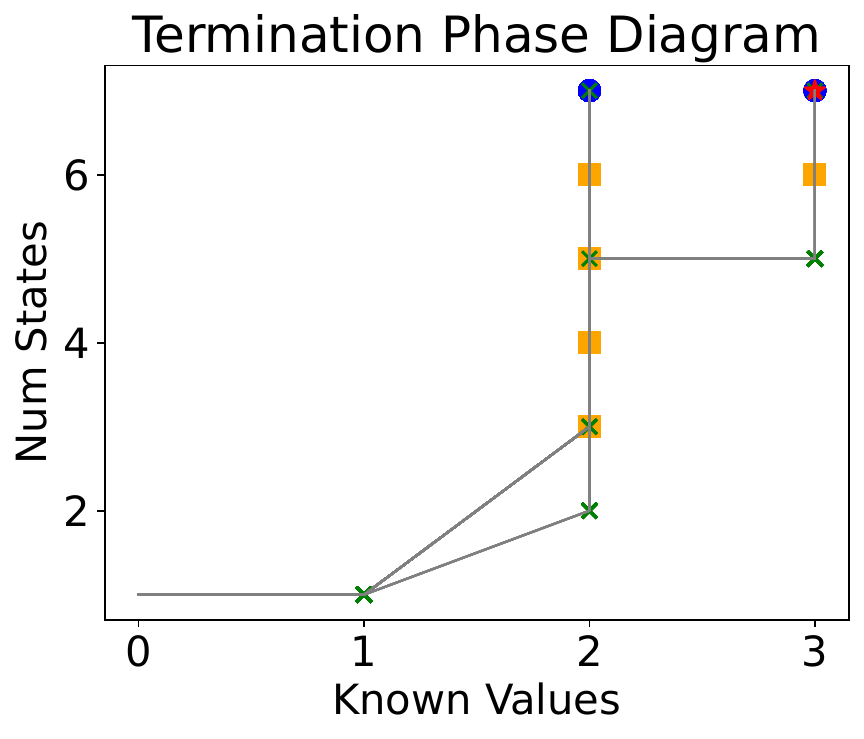}
    \includegraphics[height=0.245\textwidth]{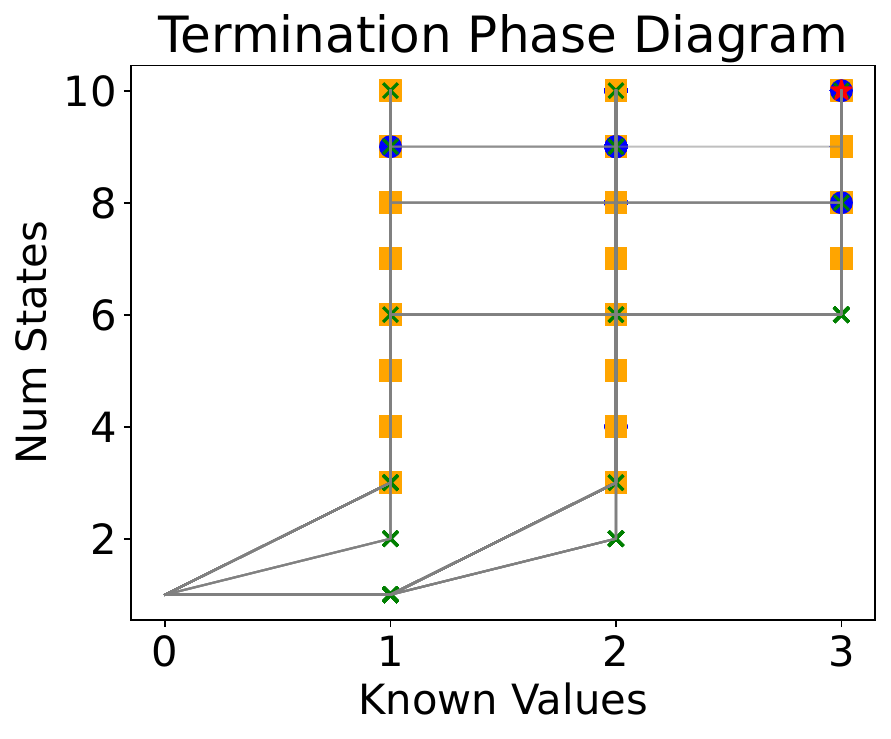}
    \includegraphics[height=0.245\textwidth]{figs/termination_plots/craft_termination_plot.200.t105.pdf}\\
    \includegraphics[height=0.245\textwidth]{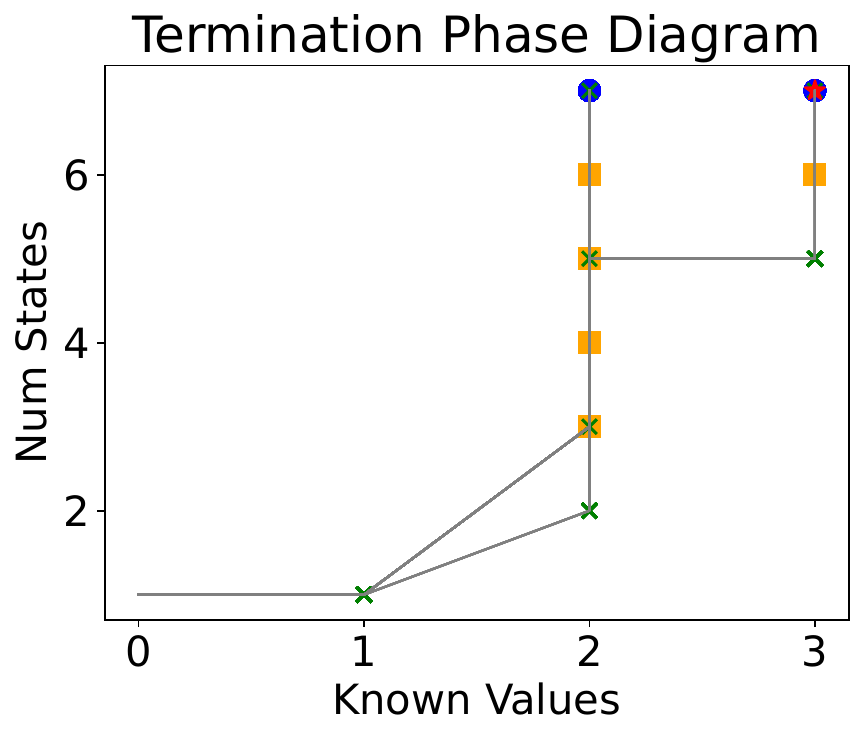}
    \includegraphics[height=0.245\textwidth]{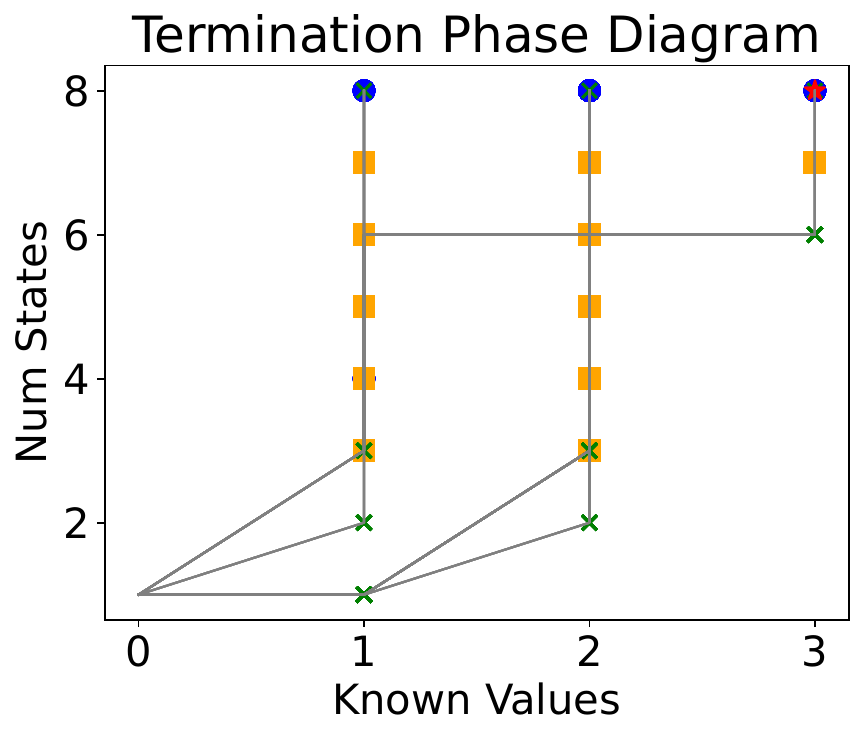}
    \includegraphics[height=0.245\textwidth]{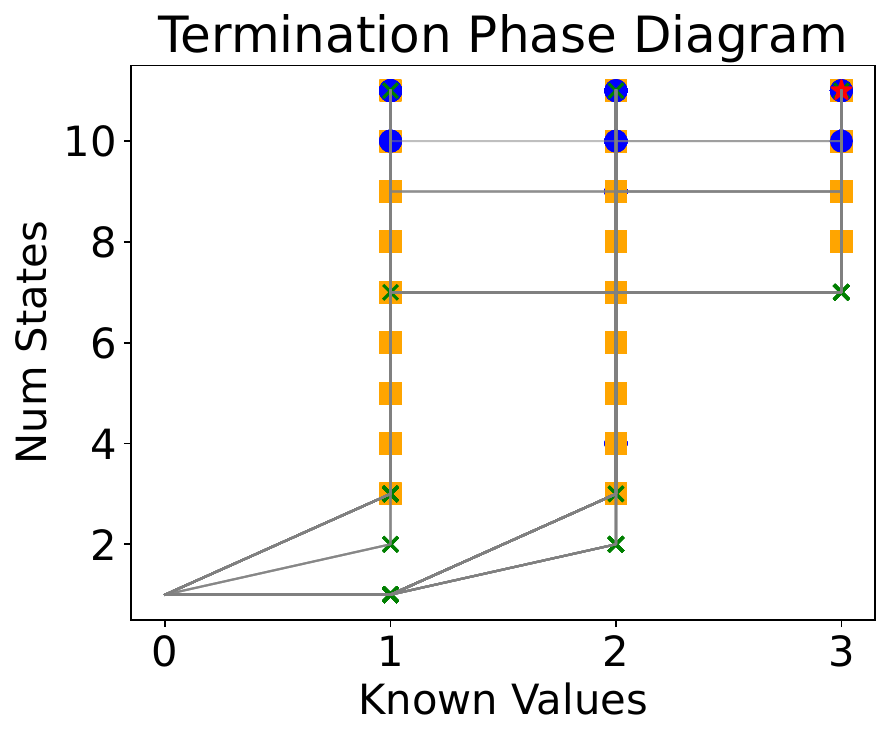}
    \caption{Phase diagram termination plots for CraftWorld tasks 5 through 10. Teacher tests 100 (top section) and 200 (bottom section) samples per equivalence query. Within each section: left to right (top row) Task 5, 6, 7; left to right (bottom row) Task 8, 9, 10. The X-axis is number of known representative values in the hypothesis, the Y-axis is the number of states in the hypothesis. Legend: blue circle is a closure operation, orange square is a consistency operation, green x is an equivalence query, and the red star represents the the upper bound termination condition. We observe it is possible to terminate early, prior to reaching the upper bound. Each plot contains 100 individual paths through phase space.}
    \label{fig:termination_plots_appendix_2}
\end{figure*}
\begin{figure*}
    \includegraphics[height=0.245\textwidth]{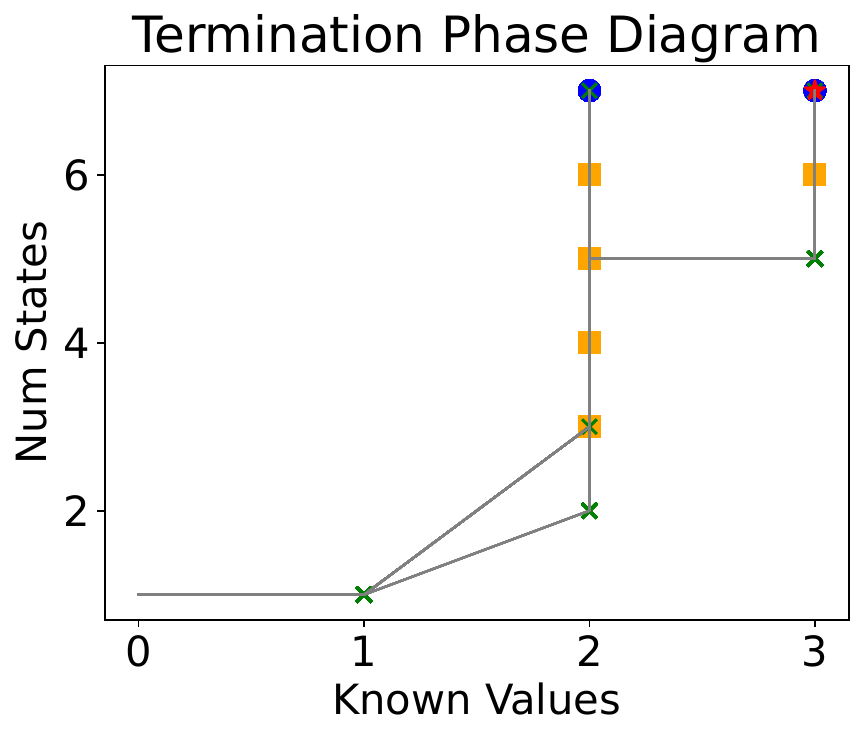}
    \includegraphics[height=0.245\textwidth]{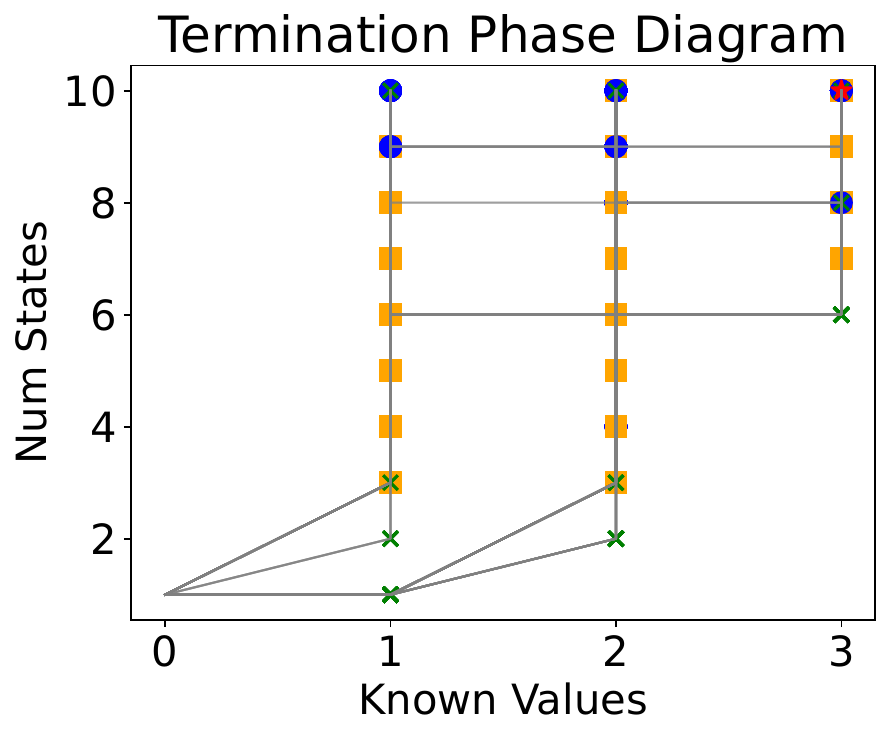}
    \includegraphics[height=0.245\textwidth]{figs/termination_plots/craft_termination_plot.300.t105.pdf}\\
    \includegraphics[height=0.245\textwidth]{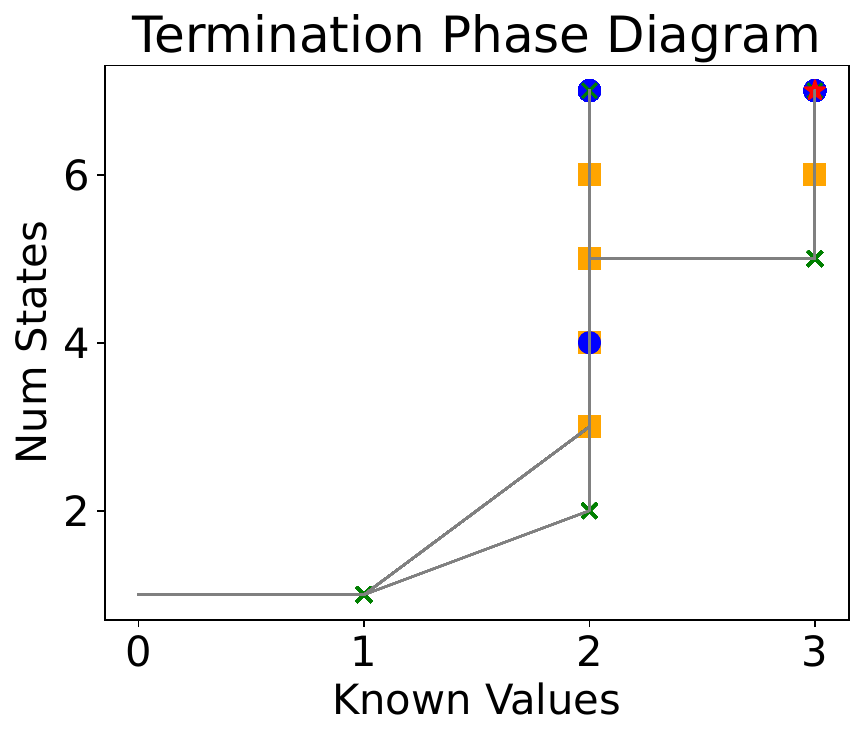}
    \includegraphics[height=0.245\textwidth]{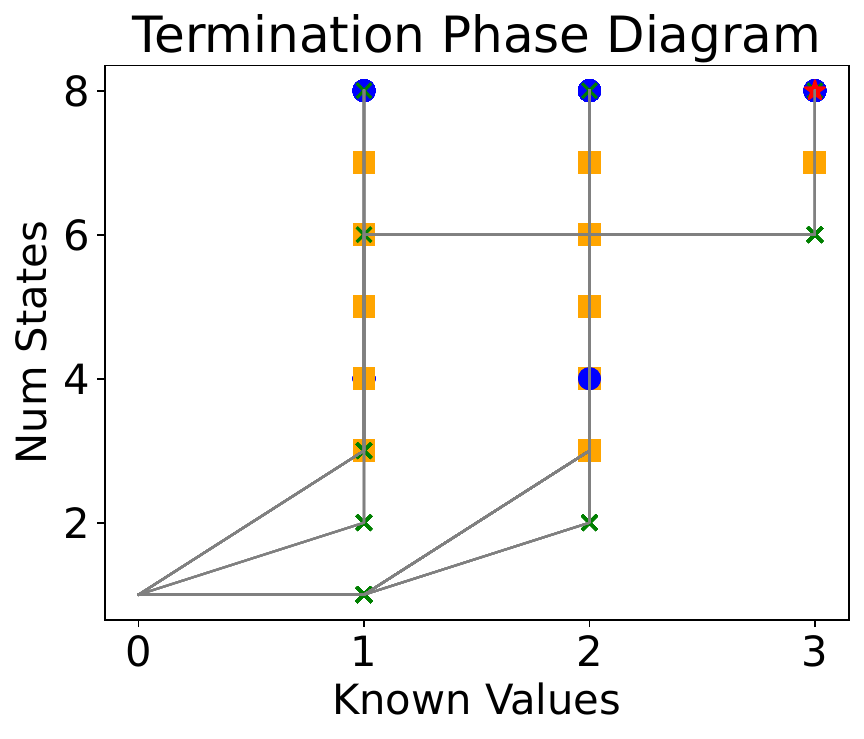}
    \includegraphics[height=0.245\textwidth]{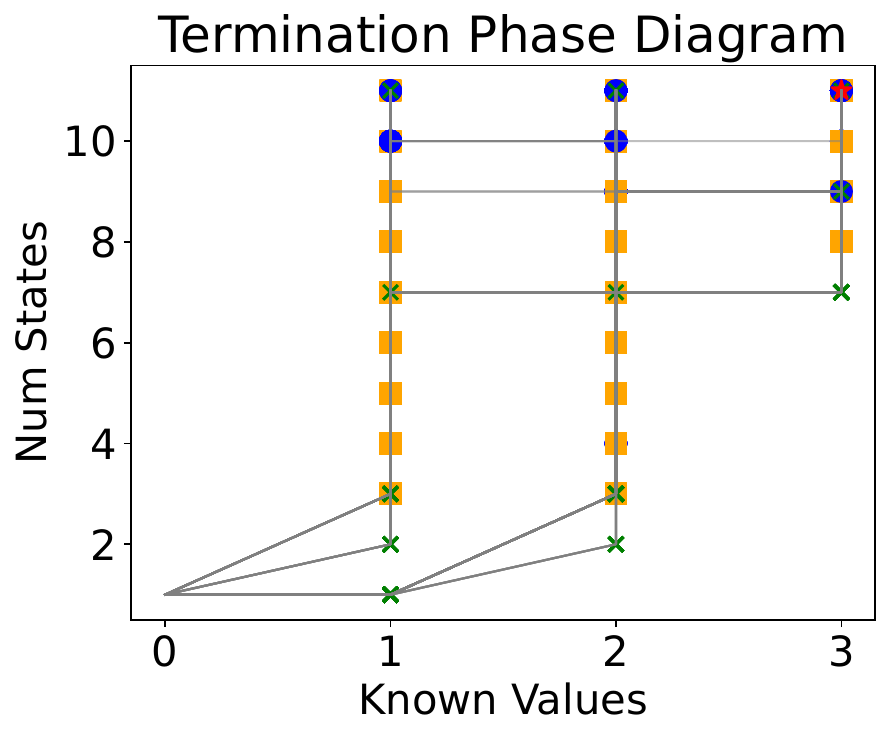}\\\rule{\textwidth}{1pt}
    \includegraphics[height=0.245\textwidth]{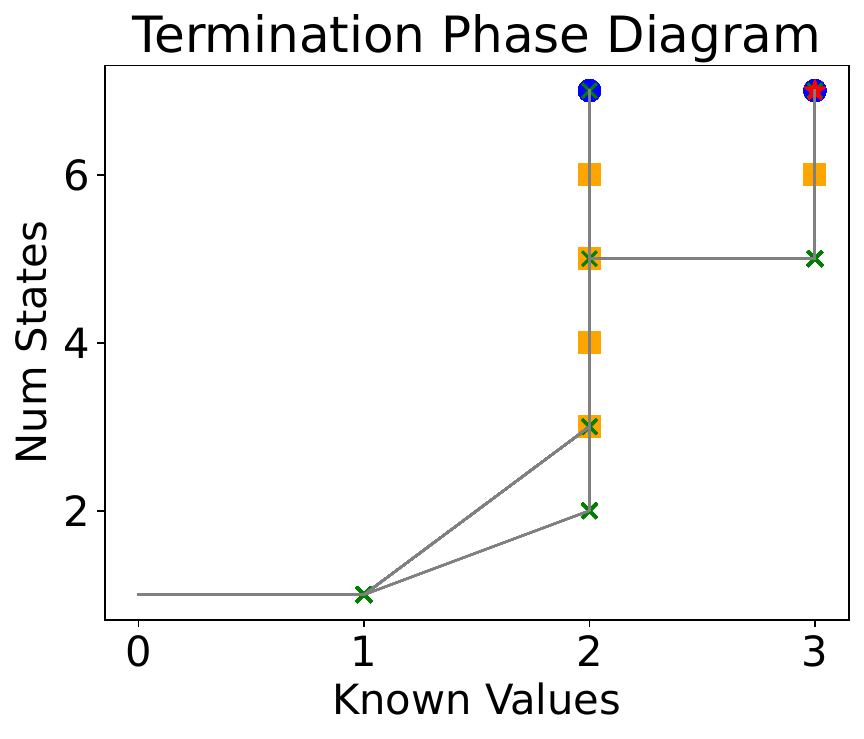}
    \includegraphics[height=0.245\textwidth]{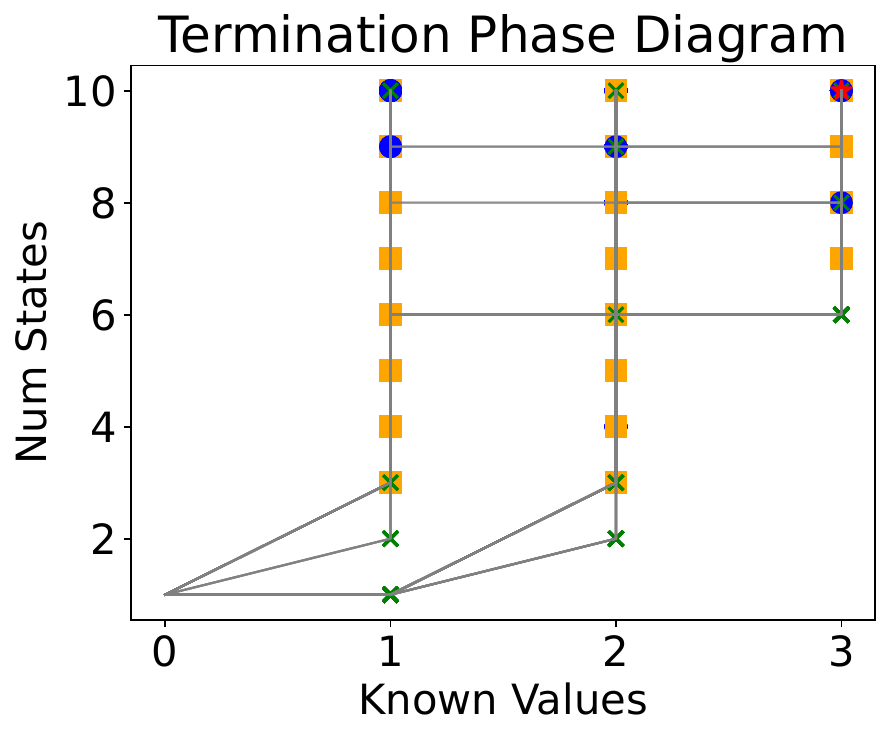}
    \includegraphics[height=0.245\textwidth]{figs/termination_plots/craft_termination_plot.400.t105.pdf}\\
    \includegraphics[height=0.245\textwidth]{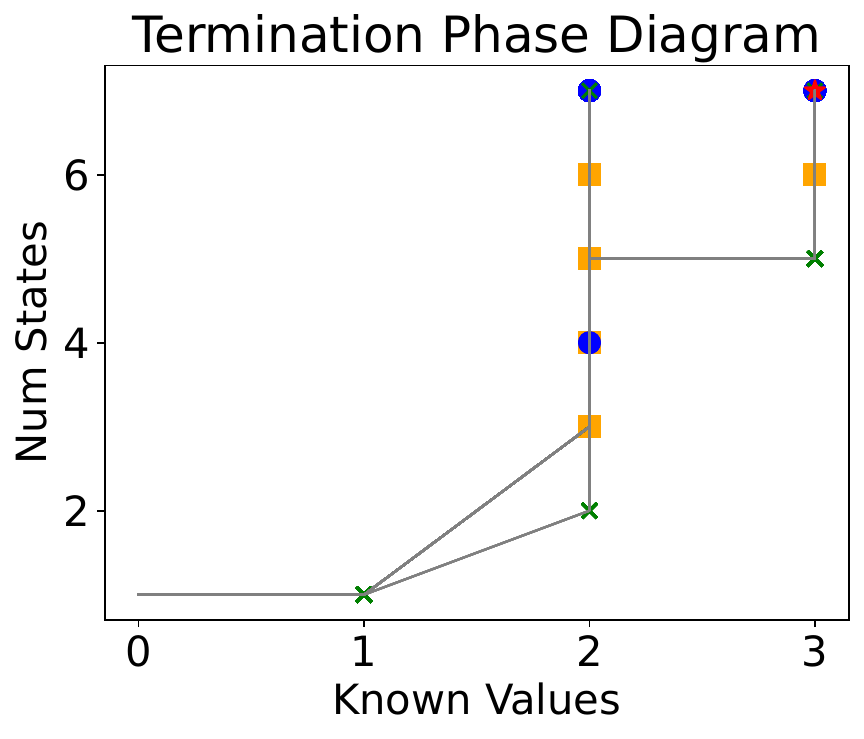}
    \includegraphics[height=0.245\textwidth]{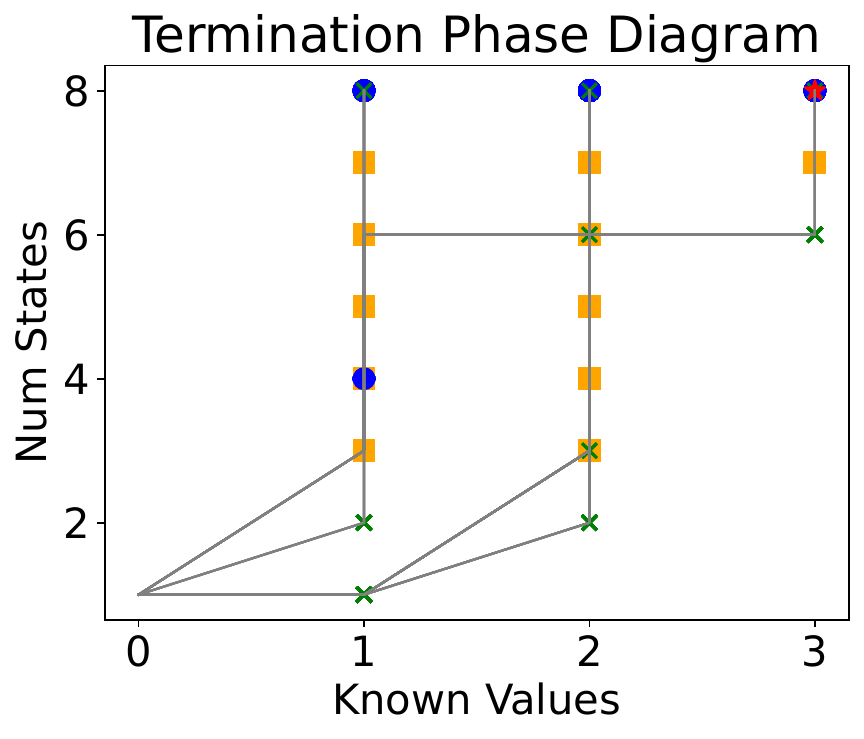}
    \includegraphics[height=0.245\textwidth]{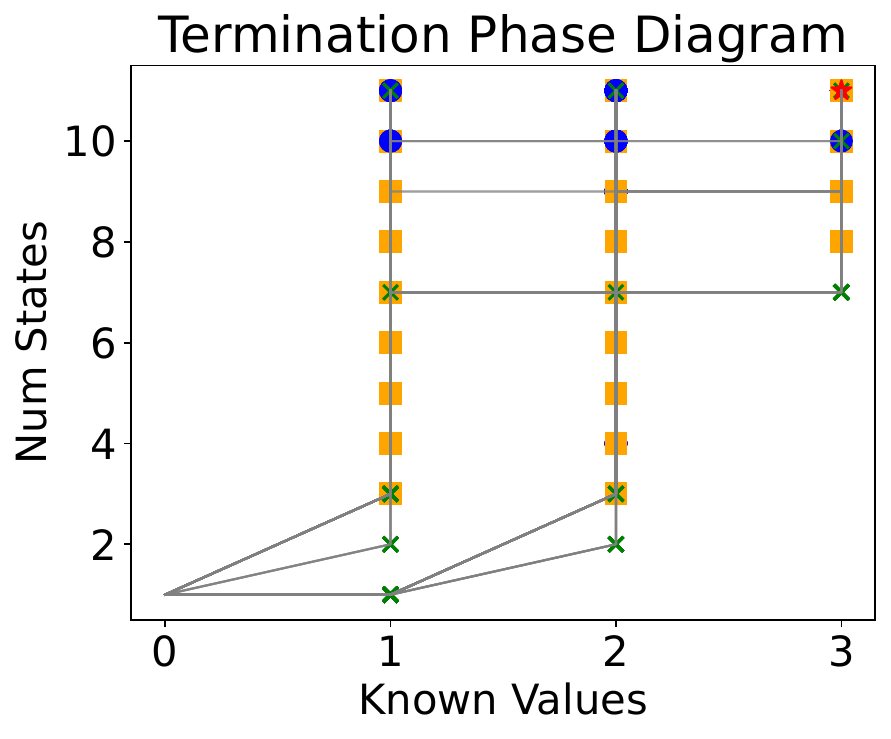}
    \caption{Phase diagram termination plots for CraftWorld tasks 5 through 10. Teacher tests 300 (top section) and 400 (bottom section) samples per equivalence query. Within each section: left to right (top row) Task 5, 6, 7; left to right (bottom row) Task 8, 9, 10. The X-axis is number of known representative values in the hypothesis, the Y-axis is the number of states in the hypothesis. Legend: blue circle is a closure operation, orange square is a consistency operation, green x is an equivalence query, and the red star represents the the upper bound termination condition. We observe it is possible to terminate early, prior to reaching the upper bound. Each plot contains 100 individual paths through phase space.}
    \label{fig:termination_plots_appendix_3}
\end{figure*}

\begin{figure*}
    \includegraphics[height=0.245\textwidth]{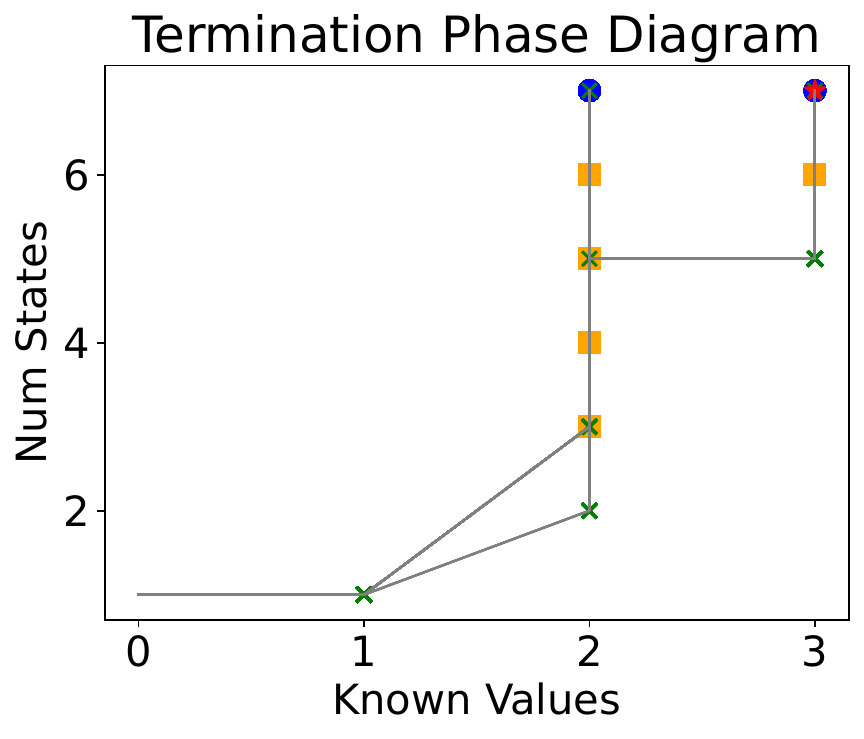}
    \includegraphics[height=0.245\textwidth]{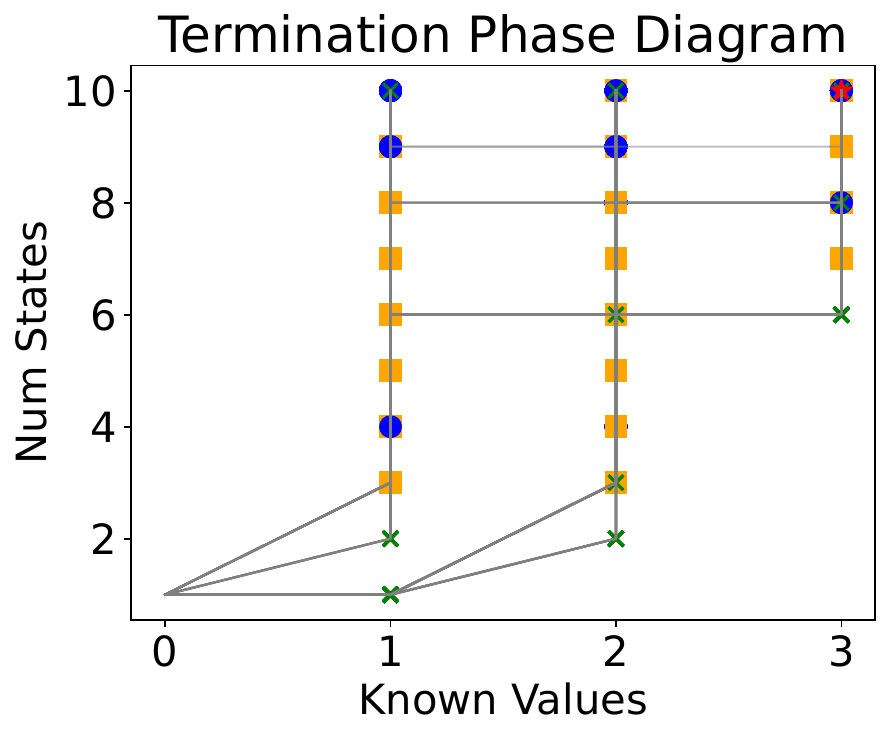}
    \includegraphics[height=0.245\textwidth]{figs/termination_plots/craft_termination_plot.500.t105.pdf}\\
    \includegraphics[height=0.245\textwidth]{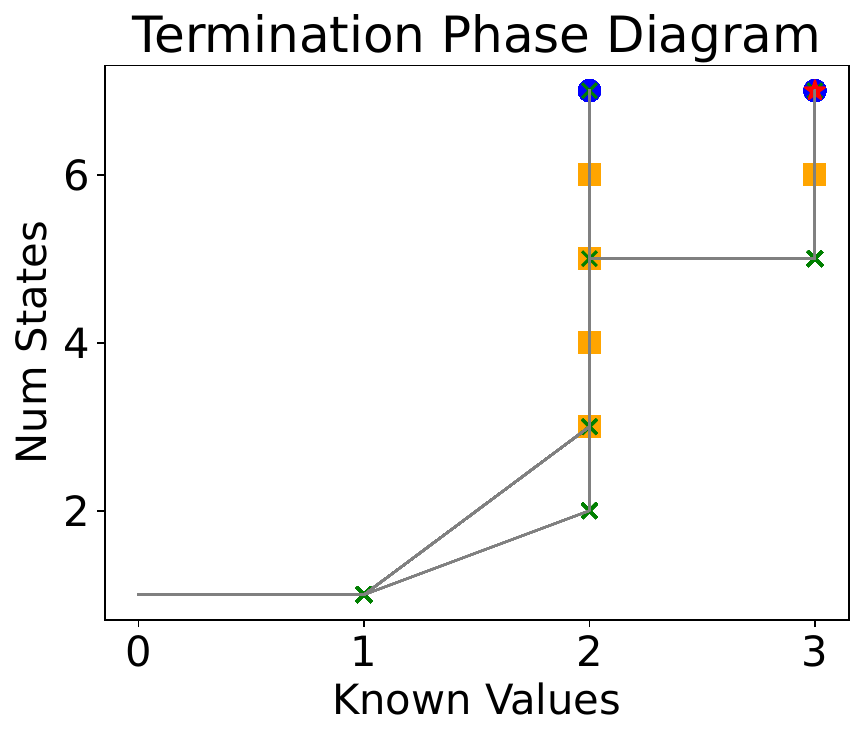}
    \includegraphics[height=0.245\textwidth]{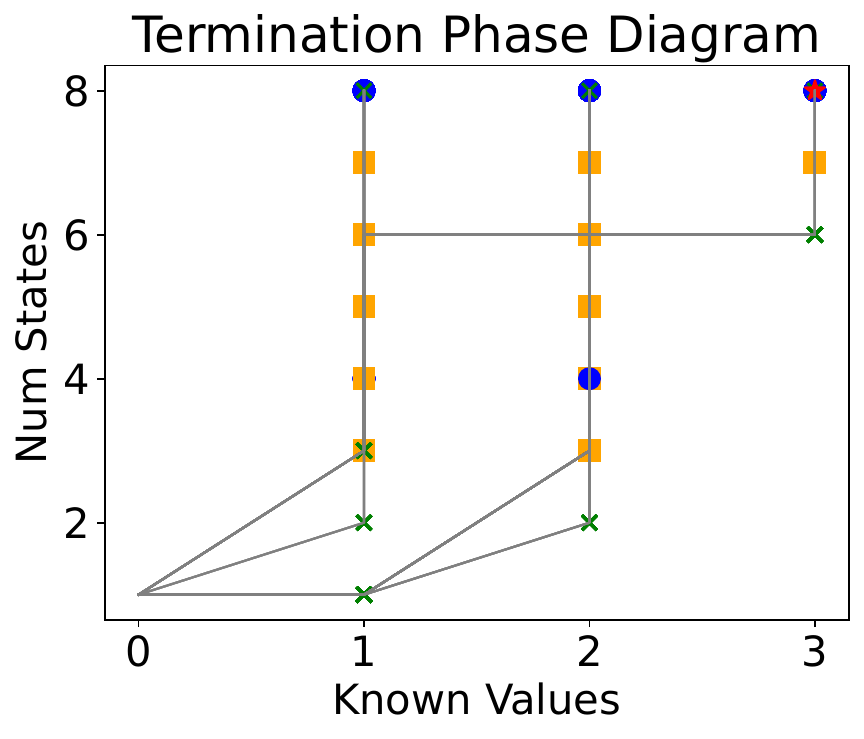}
    \includegraphics[height=0.245\textwidth]{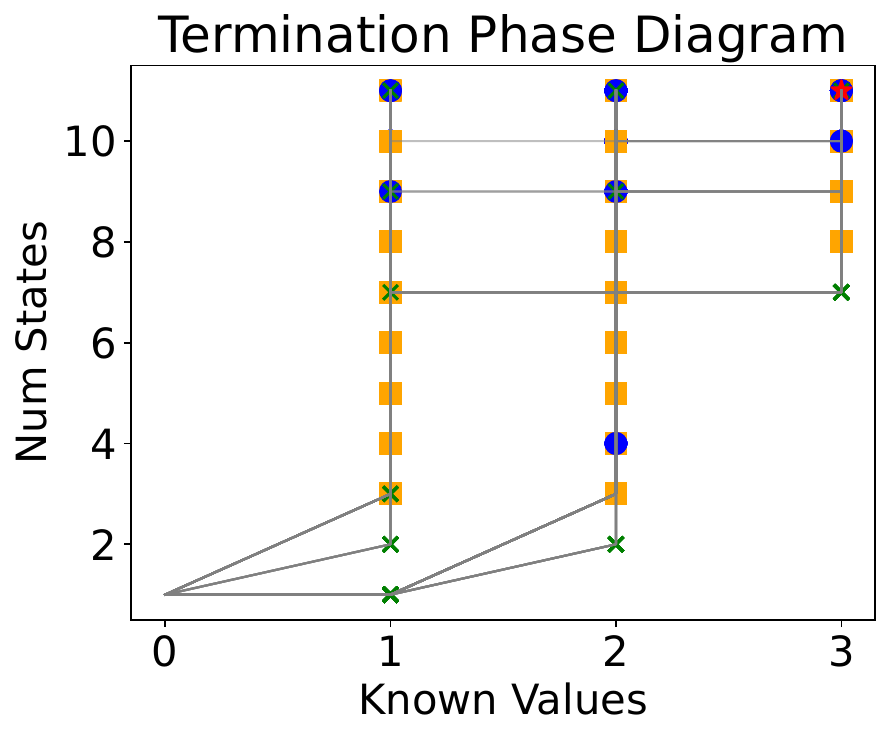}
    \caption{Phase diagram termination plots for CraftWorld tasks 5 through 10. Teacher tests 500 samples per equivalence query. Left to right (top row) Task 5, 6, 7; left to right (bottom row) Task 8, 9, 10. The X-axis is number of known representative values in the hypothesis, the Y-axis is the number of states in the hypothesis. Legend: blue circle is a closure operation, orange square is a consistency operation, green x is an equivalence query, and the red star represents the the upper bound termination condition. We observe it is possible to terminate early, prior to reaching the upper bound. Each plot contains 100 individual paths through phase space.}
    \label{fig:termination_plots_appendix_4}
\end{figure*}
\subsection{\ouralgorithm{} Algorithms}
In this section, we present the algorithms used in \ouralgorithm{}. Algorithm \ref{alg:pref-demo} presented in the body of the paper is an abbreviated version of Algorithm \ref{alg:full-pref-demo} (\ouralgorithm{}). In Algorithm \ref{alg:full-pref-demo}, we expand the function \textsc{MakeClosedAndConsistent} into loop for making the symbolic observation table unified, closed, and consistent. We also present the Symbolic Fill Procedure in Algorithm \ref{alg:symbolic-fill}, which is responsible for (1) creating fresh variables for sequences which have not been queried, (2) performing preference queries, and (3) performing unification. Algorithm \ref{alg:preference-query} illustrates how preference queries are performed as comparisons, and also shows how the constraint set is updated with the return information. Algorithm \ref{alg:make-hypothesis} is responsible for constructing a hypothesis from a unified, closed, and consistent observation table, and it includes the \textsc{FindSatisfyingSolution} (abbreviated as \textsc{FindSolution} in the main paper) procedure which encodes all the collected constraints, known representative values, and global constraints, and sends them to the solver. Algorithm \ref{alg:io-equiv-query} illustrates the generic equivalence query used by the teacher for probably approximately correct (PAC) identification. Note that in practice, for both the equivalence query and for obtaining classification accuracy, we collect all sampled sequences into a set first, then perform evaluation over the set. For the equivalence query, the first counterexample encountered is returned (and the remainder is untested). For classification accuracy, we evaluate all the sample sequences. Finally, in Algorithm \ref{alg:unification-a}, we present the unification algorithm used in \ouralgorithm{}. Overall, the algorithm has three sections: (1) creation and merging of equivalence classes and electing representatives, which converts all collected equality relations into equivalence classes (sets of variables which are equivalent), (2) performing unification on the collection of inequalities, performed via substituting each variable with its elected representative, and (3) replacing each variable in the symbolic observation table with the elected representative of the equivalence class of the variable.
\begin{algorithm*}[t]
\caption{\ouralgorithm{}}
\label{alg:full-pref-demo}
\textbf{Input}: Input alphabet $\Sigma^I$, output alphabet $\Sigma^O$, and a teacher $\mathcal{T}$\\
\textbf{Output}: Moore Machine $\mathcal{H} = \langle \hat{Q}, \Sigma^I, \Sigma^O, \hat{q}_0, \hat{\delta}, \hat{L}\rangle$ \\
\begin{algorithmic}[1]
\STATE Initialize observation table $(\langle S,E,T\rangle)$, constraint set $\mathcal{C}$, and context $\Gamma$ as  $\obstable$ with $S=\{\varepsilon\}, E=\{\varepsilon\}$, $\mathcal{C}=\{\}$, $\Gamma=\emptyset$
\STATE $\obstable \longleftarrow$ \textsc{SymbolicFill}$\left(\obstable| \mathcal{T}\right)$
\REPEAT
\WHILE{$\obstable$ is not closed and consistent}
\IF{$\obstable$ not consistent}
    \STATE Find $s_1, s_2\in S$, $\sigma \in \Sigma^I$, and $e\in E$\\s.t. $\orow{s_1} \equiv \orow{s_2}$ and $\orow{s_1\cdot\sigma}\not\equiv\orow{s_2\cdot\sigma}$ and $T(s_1\cdot \sigma \cdot e)\not\equiv T(s_2\cdot \sigma \cdot e)$
    \STATE Add $\sigma \cdot e$ to $E$ via $E := E \cup \{\sigma\cdot e\}$
\ENDIF
\IF{$\obstable$ not closed}
    \STATE Find $s\in S$ and $\sigma\in\Sigma^I$ s.t. $\orow{s\cdot\sigma}\in \orows{S\cdot\Sigma^I}$ and $\orow{s\cdot\sigma} \notin \orows{S}$
    \STATE Add $s\cdot \sigma$ to $S$ via $S := S \cup \{s\cdot \sigma\}$
\ENDIF
\STATE $\obstable \longleftarrow$ \textsc{SymbolicFill}$\left(\obstable| \mathcal{T}\right)$
\ENDWHILE
\STATE $\mathcal{H}\longleftarrow$\textsc{MakeHypothesis}$\left(\obstable, \Sigma^I, \Sigma^O\right)$
\STATE \textit{result} $\longleftarrow$ \textsc{EquivalenceQuery}$(\mathcal{H}| \mathcal{T})$
\IF{\textit{result} $\neq$ \textit{correct}}
    \STATE $(s', r)\longleftarrow$ \textit{result}
    \FORALL{$t\in\text{\textbf{\textit{prefixes}}}(s')$}
        \STATE Add $t$ to $S$ via $S:=S\cup\{t\}$
    \ENDFOR
    \STATE $\obstable \longleftarrow$ \textsc{SymbolicFill}$\left(\obstable| \mathcal{T}\right)$
    \STATE Update $\mathcal{C}$ to include constraint via $\mathcal{C} := \mathcal{C} \cup \{ \text{\textsc{GetVar}$(s', \obstable)$} = r\}$
\ENDIF
\UNTIL{\textit{result} $=$ \textit{correct}}
\RETURN $\mathcal{H}$
\end{algorithmic}
\end{algorithm*}
\begin{algorithm*}[t]
\caption{A Query Efficient Symbolic Fill Procedure}
\label{alg:efficient-symbolic-fill}
\textbf{procedure} \textsc{SymbolicFill}$\left(\obstable| \mathcal{T}\right)$
\begin{algorithmic}[1]
    \STATE \textit{seqs} $= \{\}$; Let $\mathcal{O}=\obstable$; let \textit{oldsortedseqs} be a sorted list of sequences.
    \FORALL{$s\cdot e \in (S\cup (S\cdot \Sigma^I))\cdot E$}
        \IF{$s\cdot e \notin \Gamma$}
            \STATE $\Gamma\left[s\cdot e\right], seqs\longleftarrow$ \textsc{FreshVar}(), $\text{\textit{seqs}}\cup \{s\cdot e\}$
            \STATE $T(s\cdot e)\longleftarrow$ $\Gamma\left[s\cdot e\right]$
        \ENDIF
    \ENDFOR
    \STATE \textit{sortseqs}, $\mathcal{O}$ $\longleftarrow$ \textsc{PrefQsViaRandQuicksort}(\textit{seqs}, $\mathcal{O}|\mathcal{T}$)
    \STATE \textit{oldsortedseqs},~$\mathcal{O}\longleftarrow$~\textsc{PrefQsViaLinearMerge}(\textit{sortedseqs, oldsortedseqs}, $\mathcal{O}|\mathcal{T}$)
    \STATE $\mathcal{O}\longleftarrow$ \textsc{Unification}$(\mathcal{O})$
    \RETURN $\mathcal{O}$
\end{algorithmic}
\end{algorithm*}

\begin{algorithm*}[t]
\caption{A Query Inefficient Symbolic Fill Procedure}
\label{alg:symbolic-fill}
\textbf{procedure} \textsc{SymbolicFill}$\left(\obstable| \mathcal{T}\right)$
\begin{algorithmic}[1]
    \STATE \textit{newentries, oldentries} $= \{\}, \{\}$
    \FORALL{$s\cdot e \in (S\cup (S\cdot \Sigma^I))\cdot E$}
        \IF{$s\cdot e \notin \Gamma$}
            \STATE $\Gamma\left[s\cdot e\right]\longleftarrow$ \textsc{FreshVar}()
            \STATE $T(s\cdot e) \longleftarrow \Gamma\left[s\cdot e\right]$
            \STATE $\text{\textit{newentries}}\longleftarrow\text{\textit{newentries}}\cup \{s\cdot e\}$
        \ELSE
            \STATE $\text{\textit{oldentries}}\longleftarrow\text{\textit{oldentries}}\cup \{s \cdot e\}$
        \ENDIF
    \ENDFOR
    \FORALL{$(s_1\cdot e_1), (s_2 \cdot e_2)\in$ \textsc{PairCombinations}(\textit{newentries}, \textit{newentries} $\cup$ \textit{oldentries})}
        \STATE $p\longleftarrow$ \textsc{PrefQuery}$(s_1\cdot e_1,s_2 \cdot e_2| \mathcal{T})$
        \STATE $\mathcal{C}\longleftarrow$ \textsc{UpdateConstraintSet}$(p, s_1\cdot e_1, s_2\cdot e_2; \obstable)$
    \ENDFOR
    \STATE $\obstable\longleftarrow$ \textsc{Unification}$(\obstable)$
    \RETURN $\obstable$
\end{algorithmic}
\end{algorithm*}
\begin{algorithm*}[t]
\caption{Preference Query Procedure and Constraints Update Procedure}
\label{alg:preference-query}
Preference queries are checked by the teacher executing each sequence using the ground truth transition function $\delta$ and checking the ground truth output value using $L$.\\\\
\textbf{procedure} \textsc{PrefQuery}$(s_1, s_2| \mathcal{T})$
\begin{algorithmic}[1]
\IF{$L(\delta(q_0,s_1)) = L(\delta(q_0,s_2))$}
    \RETURN $0$
\ELSIF{$L(\delta(q_0,s_1)) < L(\delta(q_0,s_2))$}
    \RETURN $-1$
\ELSE
    \RETURN $+1$
\ENDIF
\end{algorithmic}

\textbf{procedure} \textsc{UpdateConstraintSet}$(p, s_1 \cdot e_1, s_2 \cdot e_2; \obstable)$

\begin{algorithmic}[1]
\IF{$p=0$}
    \STATE $\mathcal{C}:=\mathcal{C}\cup \left\{T(s_1\cdot e_1) = T(s_2\cdot e_2)\right\}$
\ELSIF{$p=-1$}
    \STATE $\mathcal{C}:=\mathcal{C}\cup \left\{T(s_1\cdot e_1) < T(s_2\cdot e_2)\right\}$
\ELSE
    \STATE $\mathcal{C}:=\mathcal{C}\cup \left\{T(s_1\cdot e_1) > T(s_2\cdot e_2)\right\}$
\ENDIF
\RETURN $\mathcal{C}$
\end{algorithmic}
\end{algorithm*}
\begin{algorithm*}[t]
\caption{Make Hypothesis from Observation Table Procedure}
\label{alg:make-hypothesis}
\textbf{procedure} \textsc{MakeHypothesis}$\left(\obstable, \Sigma^I, \Sigma^O\right)$
\begin{algorithmic}[1]
\STATE $\hat{Q} = \{\orow{s} | \forall s\in S\}$ is the set of states
\STATE $\hat{q}_0 = \orow{\varepsilon}$ is the initial state
\STATE $\hat{\delta}(\orow{s},\sigma) = \orow{s\cdot \sigma}$ for all $s\in S$ and $\sigma \in \Sigma^I$
\STATE $\Lambda\longleftarrow$\textsc{FindSatisfyingSolution}$(\obstable, \Sigma^O)$
\STATE $\hat{L}(\orow{s}) = \Lambda[T(s\cdot\varepsilon)]$ is the sequence to output function
\RETURN $\langle \hat{Q}, \Sigma^I, \Sigma^O, \hat{q}_0, \hat{\delta}, \hat{L}\rangle$
\end{algorithmic}

\textbf{procedure} \textsc{FindSatisfyingSolution}$(\obstable, \Sigma^O)$

\begin{algorithmic}[1]
\STATE $\mathcal{C}_{val} :=$ \textsc{Select}$(\mathcal{C}, Var \rightarrow Value)$
    \STATE $D = \left\{\left.\displaystyle\bigvee_{r\in\Sigma^O} T(s\cdot e) = r \right| \forall s\cdot e \in (S \cup (S \cdot \Sigma^I))\cdot E \text{ such that }\mathcal{C}_{val}[T(s\cdot e)]\equiv \bot\right\}$
    \STATE Submit to an SMT solver the constraint set $\mathcal{C}\cup D$
    \RETURN The model $\Lambda$ which maps variables to values.
\end{algorithmic}
\end{algorithm*}
\begin{algorithm*}[t]
\caption{Sampling-based Equivalence Query}
\label{alg:io-equiv-query}
Equivalence is checked through sampling by the teacher. We assume a probability distribution $\mathcal{D}$ over all possible sequences $\left(\Sigma^I\right)^*$, and that the teacher can sample random sequences $s\sim\mathcal{D}$. 
\\\\
\textbf{procedure} \textsc{EquivalenceQuery}$\left(\langle \hat{Q}, \Sigma^I, \Sigma^O, \hat{q}_0, \hat{\delta}, \hat{L}\rangle | \mathcal{T}\right)$
\begin{algorithmic}[1]
\REPEAT
\STATE Sample a random sequence $s\sim\mathcal{D}$.
\IF{$L\left(\delta\left(q_0, s\right)\right) \neq \hat{L}\left(\hat{\delta}\left(\hat{q_0}, s\right)\right)$}
\RETURN $\left(s, L\left(\delta\left(q_0, s\right)\right)\right)$
\ENDIF
\UNTIL{up to $r$ times}
\RETURN \textit{correct}
\end{algorithmic}
\end{algorithm*}
\begin{algorithm*}[hbt!]
\caption{Unification Procedure}
\label{alg:unification-a}
Unification is performed by computing equivalence classes.
\\\\
\textbf{procedure} \textsc{Unification}$\left(\obstable\right)$
\begin{algorithmic}[1]
\STATE $\mathcal{C}_{ec} :=$ \textsc{Select}$(\mathcal{C}, Var \rightarrow EquivalenceClass)$
\STATE $\mathcal{C}_{val} :=$ \textsc{Select}$(\mathcal{C}, Var \rightarrow Value)$
\STATE $\mathcal{C}_{eq} :=$ \textsc{Select}$(\mathcal{C}, Var = Var)$
\STATE $\mathcal{C}_{in} :=$ \textsc{Select}$(\mathcal{C}, Var < Var) \cup \text{\textsc{Select}}(\mathcal{C}, Var > Var)$

\WHILE{$|\mathcal{C}_{eq}| > 0$}
    \STATE \textbf{match} \textsc{Pop}$(\mathcal{C}_{eq})$ \textbf{with} $L = R$
    \IF{$L\in \mathcal{C}_{ec} \land R\in \mathcal{C}_{ec}$}
        \IF{$\mathcal{C}_{ec}\left[L\right] \not\equiv \mathcal{C}_{ec}[R]$}
            \STATE $L_{rep}\longleftarrow$ \textsc{GetRepresentative}$(\mathcal{C}_{ec}\left[L\right])$
            \STATE $R_{rep}\longleftarrow$ \textsc{GetRepresentative}$(\mathcal{C}_{ec}\left[R\right])$
            \STATE Update $\mathcal{C}_{ec}\left[L\right] := \mathcal{C}_{ec}\left[L\right] \cup \mathcal{C}_{ec}\left[R\right]$
            \FORALL{$v \in \mathcal{C}_{ec}\left[L\right]$}
                \STATE Set $\mathcal{C}_{ec}\left[v\right]:= \mathcal{C}_{ec}\left[L\right]$
            \ENDFOR
            \IF{$L_{rep}\not\equiv R_{rep} \land R_{rep} \in \mathcal{C}_{val}$}
                \STATE Remove redundant representative via \textsc{Del}$(\mathcal{C}_{val}\left[ R_{rep}\right])$
            \ENDIF
        \ENDIF
    \ELSIF{$L\in \mathcal{C}_{ec} \land R\not\in \mathcal{C}_{ec}$}
        \STATE Update $\mathcal{C}_{ec}\left[L\right] := \mathcal{C}_{ec}\left[L\right] \cup \left\{R\right\}$ and set $\mathcal{C}_{ec}\left[R\right] := \mathcal{C}_{ec}\left[L\right]$
    \ELSIF{$L\not\in \mathcal{C}_{ec} \land R\in \mathcal{C}_{ec}$}
        \STATE Update $\mathcal{C}_{ec}\left[R\right] := \mathcal{C}_{ec}\left[R\right] \cup \left\{L\right\}$ and set $\mathcal{C}_{ec}\left[L\right] := \mathcal{C}_{ec}\left[R\right]$
    \ELSE
        \STATE Set $\mathcal{C}_{ec}\left[L\right] \longleftarrow$ \textsc{EquivalenceClass}$(\{L,R\})$ and $\mathcal{C}_{ec}\left[R\right] := \mathcal{C}_{ec}\left[L\right]$
        \STATE Set $\mathcal{C}_{val}\left[\text{\textsc{GetRepresentative}}\left(\mathcal{C}_{ec}\left[L\right]\right)\right] \longleftarrow \bot$
    \ENDIF
\ENDWHILE
\STATE \textit{subineqs} $\longleftarrow \{\}$
\WHILE{$|\mathcal{C}_{in}| > 0$}
    \STATE \textit{constraint} $\longleftarrow$ \textsc{Pop}$(\mathcal{C}_{in})$
    \FORALL{$v\in$ \textsc{GetVars}$\left(\text{\textit{constraint}}\right)$}
        \IF{$v\not\in \mathcal{C}_{ec}$}
            \STATE Set $\mathcal{C}_{ec}\left[v\right] \longleftarrow$ \textsc{EquivalenceClass}$(\{v\})$
            \STATE Set $\mathcal{C}_{val}\left[\text{\textsc{GetRepresentative}}\left(\mathcal{C}_{ec}\left[v\right]\right)\right] \longleftarrow \bot$
        \ENDIF
        \STATE Substitution via \textit{constraint} $:=$ \textit{constraint}$[\text{\textsc{GetRepresentative}}\left(\mathcal{C}_{ec}\left[v\right]\right)/v]$
    \ENDFOR
    \STATE \textit{subineqs} $:= $ \textit{subineqs} $\cup$ $\{\text{\textit{constraint}}\}$
\ENDWHILE
\STATE $\mathcal{C}_{in} := $ \textit{subineqs}
\FORALL{$s\cdot e \in (S\cup (S\cdot \Sigma^I))\cdot E$}
    \STATE Update $T[s\cdot e]$ via $T[s\cdot e] :=  \text{\textsc{GetRepresentative}}\left(\mathcal{C}_{ec}\left[T[s\cdot e]\right]\right)$
\ENDFOR
\FORALL{$s\cdot e \in \Gamma$}
    \STATE Update $\Gamma[s\cdot e]$ via $\Gamma[s\cdot e] := \text{\textsc{GetRepresentative}}\left(\mathcal{C}_{ec}\left[\Gamma[s\cdot e]\right]\right)$
\ENDFOR
\end{algorithmic}
\end{algorithm*}

\subsection{Technical Proofs}\label{proofs}
We first establish the correctness of constructing a hypothesis deterministic finite automaton from a symbolic observation table via Algorithm \ref{alg:make-hypothesis}, in Theorem 1 via Propositions 1 through 5. The proof strategies for Propositions 3 through 5 generally follow those of \citet{Angluin87}, with appropriate adjustments to account for a set of possible hypotheses, compared to just a single hypothesis.

\begin{definition} Let $h_k=\langle Q_k, q_{0,k}, \Sigma^I,\Sigma^O, \delta_k, L_k\rangle$ be a Moore machine for some integer $k$. Two Moore machines $h_1$ and $h_2$ are \textbf{structurally identical} if a bijection $B:Q_1\rightarrow Q_2$ exists, and the transition functions $\delta_1$ and $\delta_2$ are consistent with $B$. That is, $B(\delta_1(q_{0,1},s))=\delta_2(q_{0,2},s)$ for all $s\in(\Sigma^I)^*$ where $\delta_1$ and $\delta_2$ are the extended transition functions.
\end{definition}
\begin{proposition}
Let $\mathcal{H}$ be the set of all hypotheses that can be returned from \\\textsc{MakeHypothesis}$(\obstable)$. All hypotheses in $\mathcal{H}$ are structurally identical to each other, and differ only by their labeling function.
\end{proposition}
\begin{proof}
We first show that all hypotheses in $\mathcal{H}$ must be \textit{structurally identical} by construction, but have different $\hat{L}$ functions. Identical structure for all hypotheses in $\mathcal{H}$ can be shown by observing that lines 1-3 of Algorithm \ref{alg:make-hypothesis} are the same for each possible hypothesis in $\mathcal{H}$ for a given $\obstable$ input. Line 1 is the construction for the set of states. This procedure is identical for all hypotheses in $\mathcal{H}$, so all hypotheses in $\mathcal{H}$ have the same set of states. Bijection between sets of states for any pair of hypotheses is satisfied by the identity bijection function. Furthermore, the transition function construction specified on Line 3 indicates that all transitions functions for all hypotheses within $\mathcal{H}$ are identical. Since the transition functions are identical, and since bijection between states is satisfied by the identify function, we have that $B(\delta_k(q_{0,k},s))=\delta_k(q_{0,k},s)$ by identity bijection, and furthermore, $\delta_k(q_{0,k},s)=\delta_{k'}(q_{0,k'},s)$ by Line 3 construction, since $\delta_k$ and $\delta_{k'}$ are identical. This shows that all hypotheses in $\mathcal{H}$ are structurally identical.

Next, we show that each hypothesis in $\mathcal{H}$ has a unique labeling function. Note that in constructing the hypotheses, the only differences in the output hypothesis are due to lines 4 and 5. Lines 4 and 5 together select a satisfying solution for the free variables in $\obstable$, subject to the constraints $\mathcal{C}$. Therefore, each unique hypothesis in $\mathcal{H}$ is corresponds to a unique satisfying solution; the mapping between unique satisfying solutions and unique hypotheses is bijective.\qed
\end{proof}
\begin{proposition}
States in a Moore machine are represented by sets of sequences. Specifically, a mapping can always be constructed from sets of sequences to states, given an initial state.
\end{proposition}
\begin{proof}
We proceed by showing that a bijection always exists, simply by proposing a valid bijection for every case in an inductive argument. Suppose we have a finite alphabet $\Sigma$, and a Moore machine, with set of states $Q$, initial state $q_0\in Q$, and transition function $\delta: Q \times \Sigma \rightarrow Q$. We define the initial state $q_0$ of the Moore machine to correspond with the empty sequence $\varepsilon$ which has length $0$, so in fact we have $q_0 = \delta(q_0, \varepsilon)$, and for any $q\in Q$, $q = \delta(q, \varepsilon)$. The transition function can be extended recursively to $\delta: Q \times \left(\Sigma^*\right)\rightarrow Q$ by observing that $\delta(q_0, \sigma \cdot \omega) = \delta(\delta(q_0, \sigma), \omega)$, where $\sigma$ is a sequence of length of 1, and $\omega$ is a sequence of length of at least $1$. Similarly, $\delta(q_0, \omega \cdot \sigma) = \delta(\delta(q_0, \omega), \sigma)$. Based on this extended transition function, we can now consider the set $Q' = \{\delta(q_0, \sigma) | \sigma \in \Sigma\}$ to be a subset of $Q$.

First, we consider the case where $|Q'| = |\Sigma|$, where all transitions from $q_0$ have led to unique states. We can therefore construct a mapping: $M: 2^\Sigma \rightarrow Q'$. We know that $|2^\Sigma| = 2^{|\Sigma|} > |Q'|$ if $|Q'| = |\Sigma|\ge 1$, so we can consider a subset $K\subset 2^\Sigma$ such that $M$ is bijective. By constructing $K$, we can show that the elements of $K$ uniquely correspond to the elements of $Q'$ because $M: K \rightarrow Q'$ will be bijective. We note that if we construct $K = \{\{\sigma\} | \sigma \in \Sigma\}$, then clearly $|K| = |\Sigma| = |Q'|$, satisfying the current case. Furthermore, the mapping $M(\{\sigma\}) = \delta(q_0, \sigma) \forall \sigma \in \Sigma$ is clearly a bijective mapping between $K$ and $Q'$.

Next, we consider the case where $|Q'| < |\Sigma|$, which implies that there exists at least one pair $\sigma_i, \sigma_j \in \Sigma$ which lead to the same state (via the pidgeonhole principle). That is, $\delta(q_0, \sigma_i) = \delta(q_0, \sigma_j)$. If this is the case, then in order to make $M$ bijective, let $K=\{ k | k=\bigcup_{i,j} \{\sigma_i\}\cup\{\sigma_j\}\forall (\sigma_i,\sigma_j) \in \Sigma\times\Sigma \text{ such that } \delta(q_0, \sigma_i)=\delta(q_0,\sigma_j)\}$. The following mapping for $M$ is bijective: $M(k) = \delta(q_0, \sigma) \forall k \in K \text{ and } \forall \sigma \in k$, and that $K\subseteq 2^\Sigma$. Note each element of $K$ is a set of sequences of length 1.

Now, consider $Q'_n=\{\delta(q_0,\omega)|\omega\in (\Sigma)^n\}$ where $(\Sigma)^n$ denotes sequences of length at most $n$. Assume we can construct a bijective mapping $M:2^{(\Sigma)^n}\longrightarrow Q'_n$ for all $1\le n \le N$ for some fixed $N$. We have already shown this for $N=1$ above. We now proceed to show that we can also construct a bijective mapping $M:2^{(\Sigma)^{N+1}}\longrightarrow Q'_{N+1}$. Similar to the $N=1$ case, consider the case for when all sequences $\omega\in(\Sigma)^{N+1}$ lead to unique states---that is, when $|Q'_{N+1}|=|\Sigma|^{N+1}$. Then the bijective mapping in this case is $M: K\longrightarrow Q'_{N+1}$ where $K=\{\{\omega\} | \omega \in (\Sigma)^{N+1}\}$, and where $M(\{\omega\}) = \delta(q_0,\omega)$ for all $\omega \in (\Sigma)^{N+1}$, and $K\subset 2^{(\Sigma)^{N+1}}$.

Now, if $|Q'_{N+1}| < |\Sigma|^{N+1}$, then by the pidgeonhole principle, multiple $\omega$ must lead to the same state. Hence, let $K=\{ k | k=\bigcup_{i,j} \{\omega_i\}\cup\{\omega_j\}\forall (\omega_i,\omega_j) \in (\Sigma)^{N+1}\times(\Sigma)^{N+1} \text{ such that } \delta(q_0, \omega_i)=\delta(q_0,\omega_j)\}$. Then the bijective function is $M:K\longrightarrow Q'_{N+1}$, where $M(k)=\delta(q_0,\omega)$ for all $k\in K$ and for all $\omega\in k$, and note that $K\subseteq 2^{(\Sigma)^{N+1}}$.

Since we can construct bijective mappings from $M:2^{(\Sigma)^n}\longrightarrow Q'_n$ for all $1\le n \le N$, and we have also shown this is the case for $n=N+1$, this must now also hold for all $n=N+d$, for integer $d\ge 0$, and thus this holds as $n$ tends towards infinity. Thus, states in a Moore machine are represented by sets of sequences, where each state corresponds to a set in $K$. Each state is therefore synonymous with an equivalence class of sequences---a set of sequences that are equivalent according to the extended transition function.\qed
\end{proof}
\begin{proposition}
The rows of a unified, closed, and consistent $\obstable$ represent the states in a Moore machine consistent with the constraints $\mathcal{C}$. The hypothesis \textsc{MakeHypothesis}$(\obstable)$ satisfies $\delta(q_0, s) = \textbf{row}(s)$ for all $s\in(S\cup(S\cdot\Sigma))$.
\end{proposition}
\begin{proof}
We utilize the inductive proof of Lemma 2 from \citet{Angluin87}. For the case of $s=\varepsilon$, with length $0$, we have $\delta(q_0, \varepsilon) = q_0 = \textbf{row}(\varepsilon)$, which is true by definition. Now, let us assume that $\delta(q_0, s) = \textbf{row}(s)$ holds for all $s \in (S \cup (S \cdot \Sigma))$ with lengths no greater than $k$. Let $s'\in (S \cup (S \cdot \Sigma))$ be a sequence of length $k+1$ such that $s'=s\cdot \sigma$. If $s' \in S$, then $s\in S$ because $S$ is prefix closed. If $s'\in S\cdot \Sigma$, then $s\in S$. We can then show that $\delta(q_0, s') = \delta(q_0, s\cdot\sigma) = \delta(\delta(q_0, s), \sigma) = \delta(\textbf{row}(s),\sigma) = \textbf{row}(s\cdot\sigma) = \textbf{row}(s')$. This sequence of equalities follows as specified in Lemma 2 from \citet{Angluin87}.\qed
\end{proof}
\begin{proposition}The entries of a unified, closed, and consistent symbolic observation table correspond to sequence classification consistent with constraints $\mathcal{C}$. Specifically, for a unified, closed, and consistent $\obstable$, let $\Sigma^*$ be partitioned into at most $k\leq|\Sigma^O|$ disjoint sets $F_1, F_2,..., F_k$, and let $\Sigma^O_k$ be a specific subset of $k$ distinct elements of $\Sigma^O$. A sequence $s\in \Sigma^*$ is a member of $F_j$ if and only if it is classified as $\sigma_j\in \Sigma^O_k$. The hypothesis output by \textsc{MakeHypothesis}$(\obstable)$ satisfies $\delta(q_0, s\cdot e) \in F_j$ if and only if $T(s\cdot e) = \sigma_j$ for every $s\in (S\cup (S\cdot\Sigma)) \text{ and every } e\in E$.
\end{proposition}
\begin{proof}
We adapt the inductive proof of Lemma 3 from \citet{Angluin87}, but make appropriate adjustments for sequence classification. Let $s\in (S\cup (S\cdot \Sigma))$ and $e=\varepsilon$ be the base case. Clearly, $\delta(q_0, s\cdot \varepsilon) = \delta(q_0, s) = \textbf{row}(s)$ as shown previously. If $s\in S$, then $\textbf{row}(s)\in F_j$ if and only if $T(s)=\sigma_j$. If $s\in S\cdot \Sigma$, then $\textbf{row}(s) \in \hat{Q}$, since the symbolic observation table is closed. A $\textbf{row}(s') \in \hat{Q}$ is in $F_j$ if and only if $T(s')=\sigma_j$. 

Next, without loss of generality, assume that $\delta(q_0, s\cdot e) \in F_j$ if and only if $T(s\cdot e) = \sigma_j$ for all sequences $e$ with length at most $k$. Let $e'\in E$ be a sequence with length $k+1$ and let $s\in S\cup (S\cdot \Sigma)$. The sequence $e'=\sigma \cdot e_0$ for some $e_0\in E$ of length $k$ and some $\sigma\in\Sigma$, because $E$ is suffix-closed. Furthermore, there is a sequence $s_0\in S$ such that $\textbf{row}(s)=\textbf{row}(s_0)$ because the symbolic observation table is closed. Next, we show that if two sequences share a common suffix, but have different prefixes that end at the same state, then the two sequences also end at the same state. We observe that $\delta(q_0, s\cdot e') = \delta(q_0, s\cdot \sigma \cdot e_0) = \delta(\delta(q_0, s), \sigma\cdot e_0) = \delta(\textbf{row}(s), \sigma\cdot e_0) = \delta(\textbf{row}(s_0), \sigma\cdot e_0) = \delta(\delta(\textbf{row}(s_0), \sigma) e_0) = \delta(\textbf{row}(s_0\cdot\sigma), e_0) = \delta(\delta(q_0,s_0\cdot \sigma),  e_0) = \delta(q_0, s_0\cdot\sigma\cdot e_0)$. This indicates that the sequences $s\cdot e'$ and $s_0\cdot e'$ share a common suffix, but potentially have different prefixes, the two sequences end at the same state because their prefixes end at the same state. Since $e_0$ has length $k$, and $s_0\cdot \sigma \in S \cup (S\cdot \Sigma)$, we can use our initial assumption that $\delta(q_0, s_0\cdot\sigma\cdot e_0)\in F_j$ if and only if $T(s_0\cdot \sigma \cdot e_0)=\sigma_j$. Because $\textbf{row}(s)=\textbf{row}(s_0)$, this means that $\textbf{row}(s)(e)=\textbf{row}(s_0)(e) \forall e \in E$. Note that by definition, $T(s\cdot e') = \textbf{row}(s)(e')$, so $\textbf{row}(s_0)(\sigma\cdot e_0) = \textbf{row}(s)(\sigma\cdot e_0)$ implies $T(s_0\cdot \sigma\cdot e_0)=T(s\cdot \sigma\cdot e_0)=T(s\cdot e')$, which means $\delta(s\cdot e')\in F_j$ if and only if $T(s\cdot e')=\sigma_j$.\qed
\end{proof}
\begin{proposition}Suppose $\obstable$ is a unified, closed, and consistent symbolic observation table. If the hypothesis $\hat{h}=\langle \hat{Q}, \Sigma^I, \Sigma^O, \hat{q}_0, \hat{\delta}, \hat{L}\rangle$ from the function \textsc{MakeHypothesis}$(\obstable)$ has $n$ states, and $\hat{h}$ was generated using a satisfying solution $\Lambda$ to constraints $\mathcal{C}$, then any other Moore machine $h=\langle Q, \Sigma^I, \Sigma^O, q_0, \delta, L\rangle$ with $n$ or fewer states that is also consistent with $\Lambda$ and $T$ is isomorphic to $\hat{h}$.
\end{proposition}
\begin{proof}
We make additions and appropriate adjustments to the proof of Lemma 4 from \citet{Angluin87}. First, let us partition $\hat{Q}$ into $k=|\Sigma^O|$ disjoint sets $\hat{F}_1,...,\hat{F}_k$ such that for all $\sigma_j\in\Sigma^O$, $\hat{q}\in \hat{F}_j$ if and only if $\hat{L}(\hat{q}) = \sigma_j$. Note that it is possible for some of the $\hat{F}_j$ to be empty. Similarly, let us also partition $Q$ into $k$ disjoint sets $F_1,...,F_k$ such that for all $\sigma_j\in\Sigma^O$, $q\in F_j$ if and only if $L(q) = \sigma_j$. Next, we can define for all $q\in Q$ and for all $\sigma_j\in\Sigma^O$ the function $\textbf{row}^\wedge(q):E \rightarrow \Sigma^O$ such that $\textbf{row}^\wedge(q)(e) = \sigma_j$ if and only if $\delta(q, e) \in F_j$.

Because $h=\langle Q, \Sigma^I, \Sigma^O, q_0, \delta, L\rangle$ is consistent with $\Lambda$ and $T$, we know that for each $s\in S\cup (S\cdot \Sigma)$ and for each $e\in E$, $\delta(q_0, s\cdot e) \in F_j$ if and only if $T(s\cdot e)=\sigma_j$. Therefore, for all $e\in E$,  $\textbf{row}^\wedge(q)(e) = \textbf{row}(s)(e)$ if $q=\delta(q_0, s)$ implies $\textbf{row}^\wedge(\delta(q_0,s)) \equiv \textbf{row}(s)$. Next, because by definition $\hat{Q} = \{\textbf{row}(s) | \forall s\in S\}$, we have via substitution $\hat{Q} = \{\textbf{row}^\wedge(\delta(q_0,s)) | \forall s\in S\}$, which implies that $|Q|\geq n = |\hat{Q}|$. This is because in order for $\{\textbf{row}^\wedge(\delta(q_0,s)) | \forall s\in S\}$ to contain $n$ elements, $\delta(q_0,s)$ must range over at least $n$ states as $s$ ranges over $S$. However, because we have assumed that $h$ contains $n$ or fewer states, $h$ must contain exactly $n=|\hat{Q}|=|Q|$ states. 

Next, we can consider bijective mappings between $\hat{Q}$ and $Q$. Since $Q$ and $\hat{Q}$ have the same cardinality, and because $\textbf{row}^\wedge(\delta(q_0,s)) \equiv \textbf{row}(s)$, we know that for every $s\in S$ corresponds to a unique $q\in Q$, specifically, $q=\delta(q_0,s)$. We can define the bijective mapping $\textbf{row}^{-\wedge}: \hat{Q}\rightarrow Q$ via for all $s\in S$, $\textbf{row}^{-\wedge}(\textbf{row}(s)) = \delta(q_0, s)$. From this mapping, we observe that $\textbf{row}^{-\wedge}(\textbf{row}(\varepsilon)) = \textbf{row}^{-\wedge}(\hat{q}_0)=\delta(q_0,\varepsilon) = q_0$. Additionally, for all $s\in S$ and for all $\sigma\in \Sigma$, $\textbf{row}^{-\wedge}(\hat{\delta}(\textbf{row}(s),\sigma)) = \textbf{row}^{-\wedge}(\textbf{row}(s\cdot\sigma)) = \delta(q_0, s\cdot\sigma) = \delta(\delta(q_0, s), \sigma) = \delta(\textbf{row}^{-\wedge}(\textbf{row}(s)),\sigma)$, which implies for all $s\in S$ and $\sigma\in\Sigma$, $\textbf{row}^{-\wedge}(\hat{\delta}(\textbf{row}(s),\sigma)) = \delta(\textbf{row}^{-\wedge}(\textbf{row}(s)),\sigma)$.

Finally, we show that $\forall i \in \{k | k\in\mathbb{N}\And 1\leq k \leq |\Sigma^O|\}, \textbf{row}^{-\wedge}: \hat{F}_i \rightarrow F_i$. Specifically, if $s\in S$ has $\textbf{row}(s)\in \hat{F}_i$, this means that $\textbf{row}(s)(\varepsilon) = \sigma_i$, and therefore $T(s)=T(s\cdot\varepsilon) = \sigma_i$. Furthermore, suppose $\textbf{row}^{-\wedge}(\textbf{row}(s)) = \delta(q_0, s) = q$, and therefore $\textbf{row}^{\wedge}(q) = \textbf{row}^{\wedge}(\delta(q_0, s)) \equiv \textbf{row}(s)$, which implies that $\textbf{row}^{\wedge}(q)(\varepsilon) = \textbf{row}(s)(\varepsilon) = T(s\cdot \varepsilon) = T(s) = \sigma_i$. This means that $q\in F_i$ and $\textbf{row}(s)\in \hat{F}_i$, and that each element of $\hat{F}_i$ bijectively maps to an element of $F_i$.\qed
\end{proof}
\begin{proposition}
Suppose $\obstable$ is a unified, closed, and consistent symbolic observation table. Let $n$ be the number of states in the hypothesis returned from \textsc{MakeHypothesis}$(\obstable)$. Any Moore machine consistent with $\Gamma$ and $\mathcal{C}$ must have at least $n$ states.
\end{proposition}
\begin{proof}
Let $h=\langle Q, \Sigma^I, \Sigma^O, q_0, \delta, L\rangle$ be any Moore machine consistent with $T, \Lambda$, and $\mathcal{C}$, for the satisfying solution $\Lambda$ that was used to generate \textsc{MakeHypothesis}$(\obstable)$. Then, $\delta(q_0, s)\in F_i$ if and only if $T(s)=\sigma_i$. Let $s_1$ and $s_2$ be distinct elements of $S$ such that $\orow{s_1}\neq\orow{s_2}$. This means there exists an $e\in E$ such that $T(s_1\cdot e)\neq T(s_2\cdot e)$. This means that if $T(s_1\cdot e)=\sigma_i$, then $T(s_2\cdot e)\neq \sigma_i$. Because $h$ is consistent with $T$, this means that $\delta(q_0, s_1\cdot e)\in F_i$ and $\delta(q_0, s_2\cdot e)\not\in F_i$. This means that $\delta(q_0, s_1\cdot e)$ and $\delta(q_0, s_2\cdot e)$ must be distinct since they must belong to different classes. Because $\delta(q_0, s_1\cdot e)\neq\delta(q_0, s_2\cdot e)$, and because the transition function must be consistent, then $\delta(q_0, s_1)$ and $\delta(q_0,s_2)$ must be distinct states: if $\delta(q_0, s_1)$ and $\delta(q_0,s_2)$ were the same state, then it is impossible to transition to two different states using the same sequence $e$. Since \textsc{MakeHypothesis}$(\obstable)$ contains $n$ states, there is at least one subset $P\subseteq S$ with $P=\{s_1,...,s_n\}$ containing $n$ elements such that all the elements of $P'=\{\orow{s_1},...,\orow{s_n}\}$ are distinct from one another. Since every pair $\orow{s_i}\neq\orow{s_j}$ for $i\neq j$ in $P'$, then it follows that each element of $\{\delta(q_0, s_1),...,\delta(q_0,s_n)\}$ must be distinct to remain consistent with $T$. However, for a given $T$, there is a corresponding satisfying solution $\Lambda$ that specifies the values for $T$. This means that the mapping from $\Lambda$ to $T$ is one-to-one, and so for a given Moore machine to be consistent with a $(\Lambda, T)$ pair, the Moore machine must contain at least $n$ states. But since $\Lambda$ can be any satisfying solution to $\mathcal{C}$ and $\Gamma$, the above statement about Moore machines must be true for any $\Lambda$ that satisfies $\mathcal{C}$ and $\Gamma$. Thus, any Moore machine consistent with $\mathcal{C}$ and $\Gamma$ must have at least $n$ states.\qed
\end{proof}

\begin{theorem}If the symbolic observation table $\obstable$ is unified, closed, and consistent, and $\mathcal{H}$ is the set of all hypotheses that can be returned from \textsc{MakeHypothesis}$(\obstable)$, then every hypothesis $h \in \mathcal{H}$ is consistent with constraints $\mathcal{C}$. Any other Moore machine consistent with $\mathcal{C}$, but not contained in $\mathcal{H}$, must have more states.
\end{theorem} We first provide a sketch, followed by the proof.
\begin{proof} (Sketch) A given unified, closed, and consistent symbolic observation table $\obstable$ corresponds to $(\mathcal{S}, \mathcal{R}, \mathcal{C})$, where $\mathcal{S}$ is a symbolic hypothesis, $\mathcal{R}$ is the set of representatives used in the table, and $\mathcal{C}$ are the constraints expressed over $\mathcal{R}$. All hypotheses in $\mathcal{H}$ have states and transitions identical to $\mathcal{S}$. Each satisfying solution $\Lambda$ to $\mathcal{C}$ corresponds to a unique concrete hypothesis in $\mathcal{H}$. Therefore every concrete hypothesis in $\mathcal{H}$ is consistent with $\mathcal{C}$. Let $|h|$ represent the number of states in $h$. We know for all $h\in \mathcal{H}$, $|h|=|\mathcal{S}|$. Let $\overline{\mathcal{H}}$ be the set of concrete hypotheses \emph{not in} $\mathcal{H}$. Note $\overline{\mathcal{H}}$ can be partitioned into three sets---concrete hypotheses with (a) fewer states than $\mathcal{S}$, (b) more states than $\mathcal{S}$, and (c) same number of states as $\mathcal{S}$ but inconsistent with $\mathcal{C}$. We ignore (c) because we care only about hypotheses consistent with $\mathcal{C}$. Consider any concrete hypothesis $h$ in $\mathcal{H}$ and its corresponding satisfying solution $\Lambda$. Suppose we desire another hypothesis $h'$ to be consistent with $h$. If $|h'|<|h|$, then $h'$ cannot be consistent with $h$ because at least one sequence will be misclassified. Therefore, if $h'$ must be consistent with $h$, then we require $|h'|\geq |h|$. Thus, any other hypothesis consistent with $\mathcal{C}$, but not in $\mathcal{H}$, must have more states.\qed\end{proof}
\begin{proof} Proposition 1 establishes that all hypotheses in $\mathcal{H}$ have equivalent states and equivalent transitions, which implies that every hypothesis in $\mathcal{H}$ has the same number of states. Propositions 2 and 4 together establish that in a hypothesis, if a specific sequence $s$ is classified correctly, then all other sequences which start at the same initial state as $s$ and end in the same state as $s$ will be classified the same as $s$. Thus, if a state is classified correctly, then all sequences terminating at that state are classified correctly. Finally, Proposition 5 establishes that any Moore machine $d$ consistent with $\Lambda$ and $T$ from $\hat{h}=\textsc{MakeHypothesis}(\obstable)$ must be isomorphic to $\hat{h}$; otherwise $d$ must contain at least one more state than $\hat{h}$. This implies that $\hat{h}$ is the smallest Moore machine consistent with $\Lambda$ and $T$, and because $\Lambda$ is taken from all possible satisfying solutions to $\mathcal{C}$, then by Proposition 1, every hypothesis $h\in\mathcal{H}$ consistent with $\mathcal{C}$ has that same smallest number of states. Therefore, any other Moore machine not in $\mathcal{H}$, but still consistent with $\mathcal{C}$, must have more states.\qed
\end{proof}

Theorem 1 establishes that the output of Algorithm \ref{alg:make-hypothesis} will be consistent with all the constraints in $\mathcal{C}$. In order to establish correctness and termination, we will present a few additional lemmas, and then the main result.

\begin{proposition}
The number of unique constraints in $\mathcal{C}$ is finite, and is upper bounded by $\binom{|\Gamma|}{2}$.
\end{proposition}
\begin{proof}
We will show that the number unique constraints in $\mathcal{C}$ is finite. In fact, the upper bound on the total number of constraints in $\mathcal{C}$ is a function of $|\Gamma|$, the total number of unique sequences recorded in the observation table. The total number of preference queries that are executed is $\binom{|\Gamma|}{2}$, which is the maximum number of unique constraints in $\mathcal{C}$.

This is because whenever a \textsc{SymbolicFill} is performed at some time $t$, there will be $N$ new, unique sequences in the table which have never been used in a preference query, and there are $O$ old sequences, representing sequences which have already been queried. At all times, $|\Gamma|=N+O$, and therefore, during a \textsc{SymbolicFill}, $\binom{N}{2} + NO$ preference queries are performed. Now, consider $r$ to represent the round number---the $r$th time that a \textsc{SymbolicFill} has been performed. Let $n_r$ represent the number of new, unique sequences in round $r$. Let $o_{r+1}$ represent all sequences in $\Gamma$ which have been queried over the past $r$ rounds; that is, \[o_{r+1}=\sum_{k=0}^r n_k,\] where the initial conditions are $o_0=0$, and $n_0\ge 1$ is the initial size of $\Gamma$. Suppose that $R$ rounds have occurred. How many preference queries have occurred? Clearly, the total number of unique sequences after $R$ rounds is $n_0 + n_1 + \cdots + n_R = |\Gamma|$, which is the same as $o_R + n_R = |\Gamma|$. We can show that $$\binom{n_0+n_1+\cdots+n_R}{2} = \sum_{k=0}^R \binom{n_r}{2} + n_ro_r.$$
Clearly, the RHS is justified, because in round $r$, $\binom{n_r}{2} + n_ro_r$ preference queries are performed. Therefore, we just need to show the LHS is equivalent to the RHS via some algebra:
\begin{align*}
\binom{n_0+\cdots+n_R}{2} &= \binom{o_R+n_R}{2}\\
&=\frac{(o_R+n_R)(o_R+n_R-1)}{2}\\
&=\frac{(o_R+n_R)^2 - (o_R+n_R)}{2}\\
&=\frac{o_R^2 + n_R^2 + 2o_Rn_R-o_R-n_R}{2}\\
&=\frac{(o_R^2-o_R) + (n_R^2-n_R) + 2o_Rn_R}{2}\\
\binom{o_R+n_R}{2}&=\binom{o_R}{2} + \binom{n_R}{2} + o_Rn_R
\end{align*}
\begin{align*}
\binom{o_R+n_R}{2}&=\binom{o_{R-1}+n_{R-1}}{2} + \binom{n_R}{2} + o_Rn_R\\
&=\left[\binom{o_{R-2}+n_{R-2}}{2} + o_{R-1}n_{R-1}\right.\\&\phantom{{}=}\left.+ \binom{n_{R-1}}{2}\right] + \binom{n_R}{2} + o_Rn_R
\end{align*}
Now, note
\begin{align*}
\binom{o_{R-d} + n_{R-d}}{2} &= \binom{o_{R-d-1} + n_{R-d-1}}{2}\\&\phantom{{}=}+ \binom{n_{R-d}}{2} + o_{R-d}n_{R-d}
\end{align*}
for $0\le d\le R-1$ is an integer. Hence, by recursively splitting the above binomial term, we end up with the terms $o_0n_0 + \cdots + o_Rn_R$, $\binom{n_0}{2} + \cdots + \binom{n_R}{2}$, and $\binom{o_0}{2}$ in the expression, and in since $o_0=0$, we have:
\begin{align*}
\binom{o_R+n_R}{2}&=\binom{o_0}{2} + \left[\binom{n_0}{2} + \cdots + \binom{n_R}{2}\right]\\&\phantom{{}=}+ (o_0n_0 + \cdots + o_Rn_R)\\
&=\binom{o_0}{2} + \sum_{k=0}^R \binom{n_r}{2} + o_rn_r\\
&=\sum_{k=0}^R \binom{n_r}{2} + o_rn_r
\end{align*}

The number of unique sequences in the observation table is a function of the number of table closure and table consistency operations performed; but eventually, when the table becomes closed and consistent, a finite number of prefixes and suffixes will exist in the table. This implies there is only a finite number of unique possible sequences that can be created from the prefixes and suffixes in the table. The upper bound on the number of unique constraints is $\binom{|\Gamma|}{2}$.\qed
\end{proof}

\begin{proposition}
The number of representatives, $|\mathcal{R}|$, in a unified, closed, and consistent symbolic observation table $\obstable$ is finite, and is bounded by $1 \leq |\mathcal{R}| \leq |\Sigma^O|$. Also, the number of \textbf{unique inequalities} in $\mathcal{C}$ is $\binom{|\mathcal{R}|}{2}$.
\end{proposition}
\begin{proof}
Since the size of the output alphabet $\Sigma^O$ is finite, this means there is an upper bound of $|\Sigma^O|$ possible classification classes for all the sequences. If all sequences from $(S \cup (S\cdot \Sigma^I))\cdot E$ must be classified into $\Sigma^O$, then there are at most $|\Sigma^O|$ equivalence classes of sequences.

Because each entry of a unified, closed, and consistent symbolic observation table $\obstable$ is a variable, and that variable must have as its value one of the elements in $\Sigma^O$, the number of representatives $|\mathcal{R}|$ is bounded via $1 \leq |\mathcal{R}| \leq |\Sigma^O|$. We define unique representatives to be representatives that are known have distinct values; in other words, the number of unique representatives at any given in time is the same as the number of equivalence classes over variables at that point in time; the unique representatives are the representatives of the equivalence classes of variables. Thus, if there are $|\mathcal{R}|$ unique representatives, and they satisfy a total ordering, then $\mathcal{C}$ will eventually contain a total of $\binom{|\mathcal{R}|}{2}$ unique inequalities. This total number of unique inequalities is populated in $\mathcal{C}$ via preference queries over pairs of unique sequences in the table. The quantity $|\Gamma|$ of unique sequences in the table is increased by prefix expansions (via closure tests and counterexamples), and suffix expansions (via consistency tests). Furthermore, during an equivalence query, whenever a counterexample $c$ is presented, $c$ and all its prefixes are added to the prefix set.\qed
\end{proof}

We now define properties of a satisfying solution $\Lambda$ to the constraint set $C$ involving $|\mathcal{R}|$ representatives, and how many of those representatives have correct assignments. If $|\mathcal{R}| < V^*$ for some upper bound $V^*\le |\Sigma^O|$, then we say that $\Lambda$ is \textit{\textbf{incomplete}}. If $\Lambda$ contains at least 1 incorrect representative-value assignment, then we say $\Lambda$ is \textit{\textbf{incorrect}}. If $\Lambda$ has assigned correct values to all $|\mathcal{R}|$ representatives, then we say $\Lambda$ is \textit{\textbf{correct}}. If $\Lambda$ is both incomplete and correct, then we say $\Lambda$ is \textit{\textbf{partially correct}}.

\begin{lemma}
Whenever a counterexample $c$ is processed, either $0$ or $1$ additional representative values becomes known.
\end{lemma}
\begin{proof}
    This is proven in the proof for Theorem 2.\qed
\end{proof}
\begin{theorem}
Suppose $\obstable$ is a unified, closed, and consistent symbolic observation table. Let $\hat{h}=\textsc{MakeHypothesis}(\obstable)$ be the hypothesis induced by $\Lambda$, a satisfying solution to $\mathcal{C}$. If the teacher returns a counterexample $c$ as the result of an equivalence query on $\hat{h}$, then at least one of the following statements about hypothesis $\hat{h}$ must be true: (a) $\hat{h}$ contains too few states, or (b) the satisfying solution $\Lambda$ inducing $\hat{h}$ is either incomplete or incorrect.
\end{theorem}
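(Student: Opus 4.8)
The plan is to argue by cases on whether the satisfying solution $\Lambda$ that produced $\hat h$ assigns the correct value to every representative. Fix notation: let $M^*$ be the (minimal) Moore machine that computes $f$ --- it exists because the teacher's task is representable by a reward machine --- and let $n^*$ be its number of states. By Lemma~1 the state set of $\hat h$ is determined structurally by $\obstable$ alone; write $n$ for that number. I will read ``$\hat h$ contains too few states'' as $n<n^*$. Since the teacher returns a counterexample $c$ on $\hat h$, we have $\hat h(c)\neq f(c)$, so $\hat h$ does not compute $f$; the goal is to force either $n<n^*$ or ``$\Lambda$ incorrect'' (the latter implies the weaker disjunct ``incomplete or incorrect'' of (b)).

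If $\Lambda$ is incorrect, statement (b) holds and we are done, so assume $\Lambda$ is correct, i.e. for every representative $r\in\mathcal R$ and every $s\cdot e$ with $T(s\cdot e)=r$ we have $\Lambda[r]=f(s\cdot e)$. The key observation is that because \textsc{SymbolicFill} submits a preference query for every pair of table sequences and the teacher is consistent, two table entries carry the same representative iff their true $f$-values coincide. Applying Lemmas~3 and~4 to the given unified, closed, and consistent $\obstable$ then shows that $\hat h$ classifies each $s\cdot e\in(S\cup(S\cdot\Sigma^I))\cdot E$ according to the value $\Lambda$ assigns to $T(s\cdot e)$, which by correctness is $f(s\cdot e)$; hence $\hat h$ agrees with $f$ on every table entry.

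Next I would bound $n$. Each state of $\hat h$ is a distinct row $\orow{s}$, $s\in S$, and by the key observation two sequences $s_1,s_2\in S$ with $f(s_1\cdot e)=f(s_2\cdot e)$ for all $e\in E$ get identical symbolic rows; therefore the number of distinct rows is at most the number of right-congruence (Myhill--Nerode-style) classes of $f$, namely $n\le n^*$. If $n<n^*$, statement (a) holds and we are done. If instead $n=n^*$, then $M^*$ is a Moore machine with exactly $n$ states that is consistent with $\Lambda$ and $T$ in the sense required by Lemma~5 (for every $s\cdot e$, the output of $\delta_{M^*}(q_0,s\cdot e)$ is $f(s\cdot e)=\Lambda[T(s\cdot e)]$), so Lemma~5 gives $M^*\cong\hat h$; but then $\hat h$ computes $f$, contradicting the existence of the counterexample $c$. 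Thus $n=n^*$ is impossible in this case, forcing $n<n^*$, i.e. (a). In all cases (a) or (b) holds.

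The main obstacle I anticipate is the bookkeeping tying the informal terms in the statement to the lemmas: pinning down ``too few states'' as a $\Lambda$-independent structural quantity (justified by Lemma~1), checking that under a correct $\Lambda$ the symbolic notions of closed/consistent already suffice to invoke Lemmas~3--4 without re-deriving a concrete table, and verifying that $M^*$ meets exactly the consistency hypothesis of Lemma~5. A minor point: the ``incomplete'' disjunct of (b) is not actually needed for this argument --- ``incorrect'' alone suffices in Case~1 --- but retaining it gives the weaker statement that dovetails with the two-sided upper-bound bookkeeping used in Corollary~1, where both the state bound and the representative-count bound must be attained.
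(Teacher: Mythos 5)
Your argument is correct, but it takes a genuinely different route from the paper's. You split at the top level on whether the satisfying solution $\Lambda$ assigns every representative its true value: if not, disjunct (b) holds immediately; if so, you use the fact that every pair of table sequences has been preference-compared (so representatives coincide exactly when $f$-values do) to get the Myhill--Nerode bound $n\le n^*$, and then apply Lemma~5 with $h=M^*$ to rule out $n=n^*$, forcing (a). The paper instead runs an exhaustive case analysis on the counterexample itself --- whether $c$ already indexes a table entry, and if not, whether $T(c)$ merges into an equivalence class whose value $\Lambda$ got right, one it got wrong, or a brand-new class --- tracking the triple $(|\mathcal{R}|, n_\circ, n_\bullet)$ through each case; cases (1) and (2b) yield an incorrect $\Lambda$ directly, and cases (2a)/(2c) with a partially correct $\Lambda$ are closed out via Theorem~1 (uniqueness of the minimal consistent hypothesis), which is morally the same use of Lemma~5 you make explicitly. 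Your version is shorter and, as you note, proves the slightly stronger disjunction ``(a) or $\Lambda$ is incorrect''; what it does not deliver is the per-case transformation of $(|\mathcal{R}|, n_\circ, n_\bullet)$ that the paper's casework produces as a byproduct, which is exactly what Lemma~2 asserts and what Corollary~1's monotone-progress/termination argument consumes. So if you adopted your proof in place of the paper's, Lemma~2 and the progress bookkeeping behind Corollary~1 would need to be argued separately. One small caveat: your reduction of (b) to ``incorrect'' alone is legitimate only because your correct-$\Lambda$ branch lands in (a) regardless of whether $\Lambda$ is complete; it is worth stating explicitly that ``complete and correct'' is the actual negation of (b), so the reader sees that nothing is lost.
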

We first present a sketch, then a proof.
\begin{proof} (Sketch) This sketch applies to the above lemma, theorem, and below corollary about termination. Consider the sequence $\dots,h_{k-1}, h_k,\dots$ of hypotheses that \ouralgorithm{} makes. For a given pair of consecutive hypotheses $(h_{k-1}, h_k)$, consider how the number of states $n$, and the number of \emph{known} representative values $n_\bullet$ changes. Let $n^*$ be the number of states of the minimal Moore machine correctly classifying all sequences. Let $V^*\leq |\Sigma^O|$ be the upper bound on $|\mathcal{R}|$. Note that $0\leq n_\bullet \leq |\mathcal{R}| \leq V^* \leq |\Sigma^O|$ always holds. Through detailed case analysis on returned counterexamples, we can show that the change in $n_\bullet$, denoted by $\Delta n_\bullet$, must always be either $0$ or $1$, and furthermore, if $\Delta n_\bullet = 0$, then we must have $\Delta n \geq 1$. By the case analysis and tracking $n$ and $n_\bullet$, observe that if a counterexample $c$ is received from the teacher due to hypothesis $h$, then \emph{at least one of} (a) $n < n^*$ or (b) $n_\bullet < V^*$ must be true. Since $\Delta n_\bullet$ and $\Delta n$ cannot simultaneously be $0$, whenever a new hypothesis is made, progress must be made towards the upper bound of $(n^*, V^*)$. If the upper bound is reached, then the algorithm must terminate, since it is impossible to progress from the point $(n^*, V^*)$.\qed
\end{proof}
\begin{proof}
Our strategy for this proof is to show that the number of known representative values and number of states in the hypothesis will increase up to their upper limits, at which point the algorithm must terminate. Let $\mathbb{EC}[v]$ denote the set of variables in $\Gamma$ which are known to be equivalent to $v$ according to the equality constraints that were gathered so far in $\mathcal{C}$. Let $|\mathcal{R}|$ denote the number of representatives present in the unified, closed, and consistent observation table $\obstable$. That is, $|\mathcal{R}|$ is the number of elements in the set $\{\mathbb{EC}[\Gamma(s\cdot e)] | \forall s\cdot e \in (S\cup(S\cdot \Sigma))\cdot E\}$. Let $n_{\circ}$ represent the number of unknown representative values, and let $n_{\bullet}$ represent the number of known representative values. We have, at all times, the invariant $|\mathcal{R}|=n_{\circ}+n_{\bullet}$. Furthermore, at the very beginning of the algorithm, they have the initial values of $|\mathcal{R}|=1$, $n_{\circ}=|\mathcal{R}|$ and $n_{\bullet}=0$. Finally, let $V^*$ represent the ground truth number of classes that sequences can be classified into, according to the teacher. We will show that if the teacher returns a counterexample, then at least one of the following two scenarios must be true about the learner's current hypothesis $\hat{h}=\textsc{MakeHypothesis}(\obstable)$: (\textbf{A}) $\hat{h}$ contains too few states, or (\textbf{B}) the satisfying solution $\Lambda$ inducing $\hat{h}$ is incorrect. We enumerate cases by describing the pre-conditions and post-conditions when executing the equivalence query.
\begin{case} Prior to the equivalence query, $c\in (S \cup (S\cdot\Sigma^I))\cdot E$. This means there exists  $s_1 \in (S\cup (S\cdot \Sigma^I))$ and $e_1 \in E$ such that $c=s_1\cdot e_1$ and the variable located at $T(c)$ was assigned an incorrect value $w$. This also means that for every $s\cdot e \in (S\cup(S\cdot\Sigma))\cdot E$ such that $\Gamma[s\cdot e]\in\mathbb{EC}[T(c)]$, it follows that each $T(s\cdot e)$ is also the same incorrect $w$. However, after the equivalence query, the teacher returns the counterexample $c$ with feedback $f(c)=r$, where $r\neq~w$. Then, for each variable $v\in \mathbb{EC}[T(c)]$, the learner will assign $v$ the value $r$. If $c\in S$, then all of its prefixes are already in $S$ because $S$ is prefixed closed. If $c\in S\cdot \Sigma$, then all of its prefixes are already in $S$. If $c$ is not in $S\cdot \Sigma$, then the prefixes of $c$ which are not in $S$ will be added to $S$. In this latter case, there is a possibility that the number of unique representatives $|\mathcal{R}|$ will be increased by $k$, where $0\leq k \leq V^*-|\mathcal{R}|$. However, the number of known variables will always increase by exactly 1. Thus, under this case, we have the following transformation: $(|\mathcal{R}|, n_{\circ}, n_{\bullet}) \longrightarrow (|\mathcal{R}| + k, n_{\circ} - 1 + k, n_{\bullet} + 1)$, where $0\leq k \leq V^* - |\mathcal{R}|$.
\end{case}
\begin{case}
Prior to the equivalence query, $c\not\in (S \cup (S\cdot\Sigma^I))\cdot E$, which means $T(c)$ does not exist in the symbolic observation table. If $c$ was returned as a counterexample, then the learner incorrectly hypothesized that $\hat{\delta}(\hat{q}_0, c) \in \hat{F}_i$. The post-conditions for this case can be broken into three possibilities:
\begin{enumerate}[(a)]
    \item There is some $s\cdot e \in (S\cup (S\cdot \Sigma^I))\cdot E$ for which $\mathbb{EC}\left[\Gamma[s\cdot e]\right]\overset{\mathrm{val}}{=}r$ according to $\Lambda$, and $\left|\mathbb{EC}[\Gamma[s\cdot e]]\right| \geq 1$. The new variable $T(c)$ is added to $\mathbb{EC}[\Gamma[s\cdot e]]$, and subsequently, $\mathbb{EC}[T(c)]=\mathbb{EC}[\Gamma[s\cdot e]]$ and $|\mathbb{EC}[T(c)]| > 1$, and all the variables in that class have a value $r$. While $c$ itself does not induce $|\mathcal{R}|$ to increase, it is possible that its prefixes might induce an increase in $|\mathcal{R}|$. So while the number of unknown representative values might increase, the number of known representative values stays constant. Therefore, we have the following transformation: $(|\mathcal{R}|, n_{\circ}, n_{\bullet}) \longrightarrow (|\mathcal{R}| + k, n_{\circ} + k, n_{\bullet})$, where $0 \leq k \leq V^* - |\mathcal{R}|$.
    \item There is some $s\cdot e \in (S\cup (S\cdot \Sigma^I))\cdot E$ for which $\mathbb{EC}\left[\Gamma[s\cdot e]\right]\overset{\mathrm{val}}{\neq}r$ according to $\Lambda$ and $\left|\mathbb{EC}[\Gamma[s\cdot e]]\right| \geq 1$. The new variable $T(c)$ is added to $\mathbb{EC}[\Gamma[s\cdot e]]$, and subsequently, $\mathbb{EC}[T(c)]=\mathbb{EC}[\Gamma[s\cdot e]]$ and $|\mathbb{EC}[T(c)]| > 1$, and $\mathbb{EC}\left[T(c)\right]\overset{\mathrm{val}}{=}\mathbb{EC}\left[\Gamma[s\cdot e]\right]\overset{\mathrm{val}}{=}r$ . Again, the number of unknown representative values might increase, but this time, the number of known representative values increases by 1. Therefore, we have the following transformation: $(|\mathcal{R}|, n_{\circ}, n_{\bullet}) \longrightarrow (|\mathcal{R}| + k, n_{\circ} - 1 + k, n_{\bullet}+1)$, where $0 \leq k \leq V^* - |\mathcal{R}|$.
    \item The variable $T(c)$ is not added to any pre-existing equivalence class; therefore it is added to its own equivalence class, and its value is known to be $r$. Thus, we have the following transformation: $(|\mathcal{R}|, n_{\circ}, n_{\bullet}) \longrightarrow (|\mathcal{R}| + 1 + k, n_{\circ} + k, n_{\bullet}+1)$, where $0\leq k \leq V^* - |\mathcal{R}| - 1$.
\end{enumerate}
\end{case}
We note that cases (1), (2b), and (2c) can occur a finite number of times. This is because $n_{\bullet}$ increases by exactly $1$ each time one of those cases occurs. The maximum value $n_{\bullet}$ can take on is $|\mathcal{R}|$; the upper bound on $|\mathcal{R}|$ is $V^*$, and the upper bound on $V^*$ is $|\Sigma^O|$. Furthermore, cases (1), (2b), and (2c) cannot occur if $n_{\bullet} = V^*$. Thus, if case (1), (2b), or (2c) occur, they increase the number of known variables by 1, and they will cease to occur once $n_{\bullet} = V^*$. These cases fall under the umbrella of an incorrect satisfying solution $\Lambda$.

Clearly, case (1) falls under Scenario (\textbf{B}), an incorrect satisfying solution $\Lambda$, since the value of $T(c)$ for a known $c$ in the table is incorrect. Case (2b) also falls under Scenario (\textbf{B}), because $\hat{\delta}(\hat{q}_0,c) = \hat{\delta}(\hat{q}_0, s')$ for some $s'\in S$ was incorrectly classified, implying that the value of $T(s')$ was incorrect.

For cases (2a) and (2c), it is not necessarily true that the satisfying solution $\Lambda$ used to generate $\hat{h}$ was incorrect, with respect to the current number of representatives $|\mathcal{R}|$. It is possible that $\Lambda$ was indeed incorrect, where at least one of the $|\mathcal{R}|$ representatives was assigned the incorrect value, in which case Scenario (\textbf{B}) is true. It is also possible that $\Lambda$ was incomplete, but partially correct, where all existing known representatives had correct values, but where $|\mathcal{R}| < V^*$. If a hypothesis $\hat{h}$ is partially correct, then $\mathcal{C}$ contains only $|\mathcal{R}|$ representatives, with $|\mathcal{R}| < V^*$, $n_{\bullet}=|\mathcal{R}|$, and $n_{\circ}=0$. This means that $\hat{h}=\textsc{MakeHypothesis}(\obstable)$ is the \textit{only} partially correct hypothesis the algorithm could have generated under these conditions (any others must be incorrect), and is therefore unique; the size of $\mathcal{H}$ must be 1. Then, by Theorem 1, any other Moore machine consistent with $\mathcal{C}$, but not equivalent to $\hat{h}$ must contain more states; hence Scenario (\textbf{A}) is true. Thus, in our case work, we have shown that Scenario (A) or Scenario (B) holds.\qed
\end{proof}

\begin{corollary}[Termination] \ouralgorithm{} must terminate when the number of states and number of known representative values in a concrete hypothesis reach their respective upper bounds.
\end{corollary}
\begin{proof} Since cases (1-2c) cover all the cases, and in each case at least one of the scenarios, Scenario (\textbf{A}) and Scenario (\textbf{B}), is true, then for each pair of consecutive hypotheses $\hat{h}_{j-1}$ and $\hat{h}_{j}$ generated by consecutive equivalence queries in the algorithm, one of the following is true: (a) the latter hypothesis $\hat{h}_{j}$ contains at least one more state than the prior hypothesis $\hat{h}_{j-1}$, (b) the latter hypothesis $\hat{h}_{j}$ contains at least one more known variable value compared to the prior hypothesis $\hat{h}_{j-1}$, or (c) both (a) and (b) are true. Thus, in a sequence of hypotheses generated by equivalence queries, both the number of states and number of known representatives increase monotonically. There can be at most $V^*$ discoveries. If the $n^*$ is the number of states in the minimum Moore machine which correctly classifies all sequences, then the number of states $n$ in each hypothesis will increase monotonically to $n^*$. Then, clearly, the algorithm must terminate when $n=n^*$ and $|\mathcal{R}|=V^*$.\qed
\end{proof}

Corollary 1 indicates that \ouralgorithm{} must terminate, since every hypothesis made makes progress towards the upper bound.
\begin{theorem}[Query Complexity] If $n$ is the number of states of the minimal automaton isomorphic to the target automaton to be learned, and $m$ is the maximum length of any counterexample sequence that the teacher returns, then (a) the upper bound on the number of equivalence queries that \ouralgorithm{} executes is $n+|\Sigma^O|-1$, and (b) the preference query complexity is $\mathcal{O}(mn^2 \ln (mn^2))$, which is polynomial in the number of unique sequences that the learner performs queries on.
\end{theorem}
\begin{proof} Based on Theorem 2, we know that the maximum number of equivalence queries is the taxi distance from the point $(1,0)$ to $(n, |\Sigma^O|)$, which is $n+|\Sigma^O|-1$. From counterexample processing, we know there will be at most $m(n+|\Sigma^O|-1)$ sequences added to the prefix set $S$, since a counterexample $c$ of length $m$ results in at most $m$ sequences added to the prefix set $S$. The maximum number of times the table can be found inconsistent is at most $n-1$ times, since there can be at most $n$ states, and the learner starts with $1$ state. Whenever a sequence is added to the suffix set $E$, the maximum length of sequences in $E$ increases by at most $1$, implying the maximum sequence length in $E$ is $n-1$. Similarly, closure operations can be performed at most $n-1$ times, so the total number of sequences in $E$ is at most $n$; the maximum number of sequences in $S$ is $n+m(n+|\Sigma^O|-1)$. The maximum number of unique sequences queried in the table is the maximum cardinality of $(S\cup S\cdot \Sigma^I)\cdot E$, which is $$(n+m(n+|\Sigma^O|-1))(1+|\Sigma^I|)n = \mathcal{O}(mn^2).$$
Therefore, the preference query complexity of \ouralgorithm{} is $\mathcal{O}(mn^2\ln(mn^2))$ due to randomized quicksort.\qed
\end{proof}
Next, we show that \ouralgorithm{} probably approximately correctly identifies the minimal automaton isomorphic to the target.
\begin{definition}[Probably Approximately Correct Identification] \ Given an arbitrary Moore machine $M=\langle Q, q_0, \Sigma^I, \Sigma^O, \delta, L\rangle$, let the regular language classification function $f:(\Sigma^I)^*\rightarrow\Sigma^O$ be represented by $f(s)=L(\delta(q_0,s))$ for all $s\in(\Sigma^I)^*$. Let $\mathcal{D}$ be an any probability distribution over $(\Sigma^I)^*$. An algorithm $\mathcal{A}$ probably approximately correctly identifies $f$ if and only if for any choice of $0< \epsilon < 1$ and $0 < d < 1$, $\mathcal{A}$ always terminates and outputs an $\epsilon$-approximate sequence classifier $\hat{f}:(\Sigma^I)^*\rightarrow\Sigma^O$, such that with probability at least $1-d$, the probability of misclassification is $P(\hat{f}(s)\neq f(s)) \le \epsilon$ when $s$ is drawn according to the distribution $\mathcal{D}$.
\end{definition}
\begin{theorem} \ouralgorithm{} achieves probably approximately correct identification of any Moore machine when the teacher $\mathcal{T}$ uses sampling-based equivalence queries with at least $m_k \geq \left\lceil\frac{1}{\epsilon}\left(\ln\frac{1}{d}+k\ln 2\right)\right\rceil$ samples drawn i.i.d. from $\mathcal{D}$ for the $k$th equivalence query.
\end{theorem}
We first present a proof sketch, then the full proof.
\begin{proof} (Sketch) The probability $1-\epsilon_k$ of a sequence sampled from an arbitrary distribution $\mathcal{D}$ over $(\Sigma^I)^*$ depends on the distribution and the intersections of sets of sequences of the teacher and the learner's $k$th hypothesis with the same classification values. The probability that the $k$th hypothesis misclassifies a sequence is $\epsilon_k$. If the teacher samples $m_k$ samples for the $k$th equivalence query, then an upper bound can be established for the case when $\epsilon_k\leq \epsilon$ for a given $\epsilon$. Since we know \ouralgorithm{} executes at most $n+|\Sigma^O|-1$ equivalence queries, one can upper bound the probability that \ouralgorithm{} terminates with an error by summing all probabilities of events that the teacher does not detect an error in at most $n+|\Sigma^O|-1$ equivalence queries. An exponential decaying upper bound can be found, and a lower bound for $m_k$ can be found in terms of $\epsilon,d,$ and $k$.\qed
\end{proof}
\begin{proof}
Suppose the teacher $\mathcal{T}$ has Moore machine $\langle Q, q_0, \Sigma^I, \Sigma^O, \delta, L\rangle$ representing $f$ which classifies all sequences in the set $(\Sigma^I)^*$ into $|\Sigma^O|$ disjoint sets $\{L_c|c\in\Sigma^O\}$ such that for all $c\in\Sigma^O, s \in L_c, f(s)=L(\delta(q_0,s))=c$, we have:
\begin{align*}
(\Sigma^I)^* = \bigcup_{c\in\Sigma^O} L_c.
\end{align*}
Similarly, suppose the learner has proposed a Moore machine $\langle \hat{Q}, \hat{q}_0, \Sigma^I, \Sigma^O, \hat{\delta}, \hat{L}\rangle$ representing the $k$th hypothesis $\hat{f}_k$ which classifies all sequences in the set $(\Sigma^I)^*$ into $|C^k_L|$ disjoint subsets $\{L'^k_c|c\in C^k_L\}$ where $C^k_L\subseteq \Sigma^O$ such that for all $c\in C^k_L$ and for all $s \in L'^k_c, \hat{f}(s)=\hat{L}(\hat{\delta}(\hat{q}_0,s))=c$ we have:
\begin{align*}
(\Sigma^I)^* = \bigcup_{c\in C^k_L} L'^k_c
\end{align*} 
If $\mathcal{D}$ is a distribution over $(\Sigma^I)^*$, and if $S\sim\mathcal{D}$ is a random variable representing the sequence $s$ drawn according to $\mathcal{D}$, and if the set of intersections $I_k$ is defined by $$I_k=\displaystyle\bigcup_{c\in C^k_L\cap \Sigma^O} L_c \cap L'^k_c$$
then the probability that the $k$th hypothesis $\hat{f}_k$ classifies $s$ correctly is 
$$P(\hat{f}_k(S)=f(S))=\displaystyle\sum_{s \in I_k} p_{S}(s)$$ and therefore the probability $\epsilon_k$ that $\hat{f}_k(s)\neq f(s)$ is 
$$\epsilon_k = P(\hat{f}_k(S)\neq f(S))=1-\displaystyle\sum_{s \in I_{k}} p_{S}(s).$$
Suppose for the $k$th equivalence query, the teacher $\mathcal{T}$ samples $m_k$ sequences i.i.d. according to the distribution $\mathcal{D}$ over $(\Sigma^I)^*$.
The probability $\mathcal{T}$ accepts $\hat{f}_k$ because it detects no misclassification for any of the $m_k$ samples (represented by the random variables $S_1,\cdots, S_{m_k}\sim\mathcal{D}$) is $p_k$, given by
$$p_k=P(\sum_{i=1}^{m_k}(\hat{f}_k(S_i)-f(S_i))=0)=(1-\epsilon_k)^{m_k},$$ so if $\epsilon_k\leq \epsilon$ for a chosen $\epsilon$, then 
$$P(\sum_{i=1}^{m_k}(\hat{f}_k(S_i)-f(S_i))=0|\epsilon_k\leq\epsilon)\geq(1-\epsilon)^{m_k}$$ and if $\epsilon_k\geq\epsilon$, then 
$$P(\sum_{i=1}^{m_k}(\hat{f}_k(S_i)-f(S_i))=0|\epsilon_k\geq\epsilon)\leq(1-\epsilon)^{m_k}.$$
We know that \ouralgorithm{} will execute at most $n+|\Sigma^O|-1$ equivalence queries, so the probability that \ouralgorithm{} terminates with an error in the set 
$\{\epsilon_1,\cdots,\epsilon_{n+|\Sigma^O|-1}\}$
is given by
\begin{align*}
p_1+(1-p_1)p_2+\cdots+p_{n+|\Sigma^O|-1}\prod_{k=1}^{n+|\Sigma^O|-2}(1-p_k)\\
=\sum_{k=1}^{n+|\Sigma^O|-1}p_k\prod_{i=0}^{k-1}(1-p_i)
\end{align*}
 where $p_0=0$. Each term in the summation on the right hand side is less than or equal to $p_k$, so
$$\sum_{k=1}^{n+|\Sigma^O|-1}p_k\prod_{i=0}^{k-1}(1-p_i)\leq\sum_{k=1}^{n+|\Sigma^O|-1}p_k.$$
Furthermore, we know that $\epsilon_k$ monotonically decreases as $k$ increases because $|C_{L}^{k}\cap \Sigma^O|$ monotonically increases. Considering the progress transformation from Theorem 2, if $|C_L^k\cap \Sigma^O|$ monotonically increases, then this means the cardinality of the set $\{{L'}_{c}^{k}|\forall c\in C_{L}^{k}\}$ monotonically increases. If any sequence in any of the intersections $L_{c}\cap {L'}_{c}^{k}$ is misclassified, then the feedback from the counterexample will move the misclassified sequence to the correct class, which means the error decreases monotonically. Specifically, by Theorem 2, the following events can occur: (1) the value of an existing state (or a set of existing equivalently valued states) will obtain their correct values; (2) at least 1 new states will be created, either increasing the size of one of the ${L'}_{c}^{k}$; or (3) increasing the cardinality of $|C_{L}^{k}\cap \Sigma^O|$ and creating an additional ${L'}_{c}^{k}$ set (corresponds with at least one new states and one new value). Other non-counterexample sequences which were previously misclassified either stay misclassified (perhaps with a different, but still incorrect value), or they will become classified correctly. A sequence which was already known to be classified correctly will not become misclassified. Thus, all three possibilities monotonically decrease the error. This means $\epsilon_j\geq \epsilon_{j+1}$, so the terminal hypothesis will have an error of at most the error of the previous hypothesis. If we choose a desired error $\epsilon$, then the probability that the error of the terminal hypothesis is greater than $\epsilon$ is at most
$$\sum_{k=1}^{n+|\Sigma^O|-1}p_k \leq \sum_{k=1}^{n+|\Sigma^O|-1}(1-\epsilon)^{m_k}\leq \sum_{k=1}^{n+|\Sigma^O|-1}e^{-\epsilon m_k}$$
since $$P(\sum_{i=1}^{m_k}(\hat{f}_k(S_i)-f(S_i))=0|\epsilon_k\geq\epsilon)\leq(1-\epsilon)^{m_k},$$ and since $1+x\leq e^x$ for all real $x$. If we take each term in the rightmost summation and require it to be at most $\frac{d}{2^k}$ for some chosen value of $0<d<1$, then we have a lower bound on the number of samples for the $k$th equivalence query for $k=1,\cdots,(n+|\Sigma^O|-1)$
$$m_k \geq \frac{1}{\epsilon}(\ln\frac{1}{d}+k\ln 2)$$
and
$$\sum_{k=1}^{n+|\Sigma^O|-1}e^{-\epsilon m_k} \leq \sum_{k=1}^{n+|\Sigma^O|-1}\frac{d}{2^k}\leq \sum_{k=1}^{\infty}\frac{d}{2^k} \leq d$$
which implies shows that the probability that \ouralgorithm{} terminates with a hypothesis with error at least $\epsilon$ is at most $d$.\qed
\end{proof}
\begin{theorem} To achieve PAC-identification under \ouralgorithm{}, if the teacher $\mathcal{T}$ has chosen parameters $\epsilon$ and $d$, and if $f$ can be represented by a minimal Moore machine with $n$ states and $|\Sigma^O|$ classes, then $\mathcal{T}$ needs to sample at least $$\mathcal{O}(n+|\Sigma^O| + \frac{1}{\epsilon}((n+|\Sigma^O|)\ln\frac{1}{d} + (n+|\Sigma^O|)^2))$$ sequences i.i.d. from $\mathcal{D}$ over the entire run of \ouralgorithm{}.
\end{theorem}
\begin{proof}
Since for the $k$th equivalence query, the teacher must sample at least $m_k\geq \left\lceil \frac{1}{\epsilon}(\ln\frac{1}{d} + k\ln2)\right\rceil$ sequences in order to achieve PAC-identification, if the total number of samples is to be minimized while still achieving PAC-identification, then the teacher can just sample a total of 
\begin{align*}&\sum_{k=1}^{n+|\Sigma^O|-1}\left[\frac{1}{\epsilon}(\ln\frac{1}{d} + k\ln2) + 1\right]\\=&n+|\Sigma^O|-1\\&+\frac{1}{\epsilon}\left[(\ln\frac{1}{d})(n+|\Sigma^O|-1) + \ln2\sum_{k=1}^{n+|\Sigma^O|-1}k\right]\\
=&\mathcal{O}\left((n+|\Sigma^O|)+\frac{1}{\epsilon}((n+|\Sigma^O|)\ln\frac{1}{d} + (n+|\Sigma^O|)^2)\right)
\end{align*}\qed
\end{proof}

\subsection{Miscellaneous Experiments}
In addition to learning reward machines, we also evaluated \ouralgorithm{} on sequence classification problems where the classification model can be represented as a Moore machine. Since reward machines are Mealy machines, and Moore machines and Mealy machines are equivalent, each of these Moore machines can be converted into a reward machine. We specified the reference Moore machines using regular expressions. The regular expressions we used to construct the Moore machines were: $a^*b$ represents sequences of zero or more $a$'s followed by a single $b$; $b^*a$ represents sequences of zero or more $b$'s followed by a single $a$; $(a|b)^*$ represents sequences with zero or more elements, where each element is either $a$ or $b$; $(ab)^*$ represents sequences of zero or of even lengths, where every $a$ is immediately followed by a $b$, and non-zero length sequences start with an $a$; $(ba^*)$ represents sequences of zero or of even lengths, where every $b$ is immediately followed by a $a$, and non-zero length sequences start with a $b$.

These components were OR'ed together. If $M_i$ represents a regular expression component, and $$M= \displaystyle\left|_{i=1}^N M_i\right. = M_1 | M_2 |\cdots | M_N$$ represents $N$ distinct regular expression components OR'ed together, then the classifier $f$ classifies a sequence $s$ in the following manner: $f(s)=k$ if $s\in M_k$, and is $0$ otherwise. If $s$ happens to be contained in multiple $M_{k_1}$,$M_{k_2},...$, then the lowest matching $k$ value prevails. This structure implies the size of the output alphabet is $1+N$.

We measured accuracy, number of preference queries, number of equivalence queries, and number of unique sequences as functions of samples per equivalence query (Figure \ref{fig:violin_plots}) and Figure \ref{fig:comparison_plots}. We also measured these attributes as functions of alphabet size (Figure \ref{fig:comparison_alpha_size_plots}). Finally, we also created some termination phase diagrams (Figure \ref{fig:termination_plots}).

\begin{figure*}
    \centering\scriptsize
    \begin{tblr}{colspec = {X[c]X[c,h]X[c,h]X[c,h]X[c,h]},
  stretch = 0,
  rowsep = 6pt,}
     \textbf{Regex} & \textbf{Accuracy} & \textbf{Num of EQ} & \textbf{Num of PQ} & \textbf{Num of Seq} \\
    $(a^*b)$ & \includegraphics[width=0.19\textwidth]{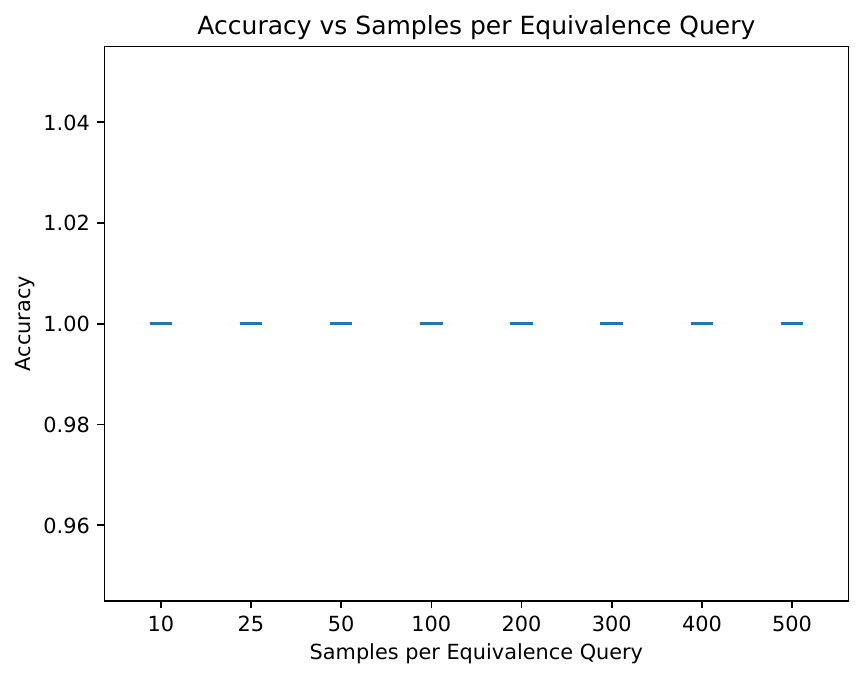} &
    \includegraphics[width=0.19\textwidth]{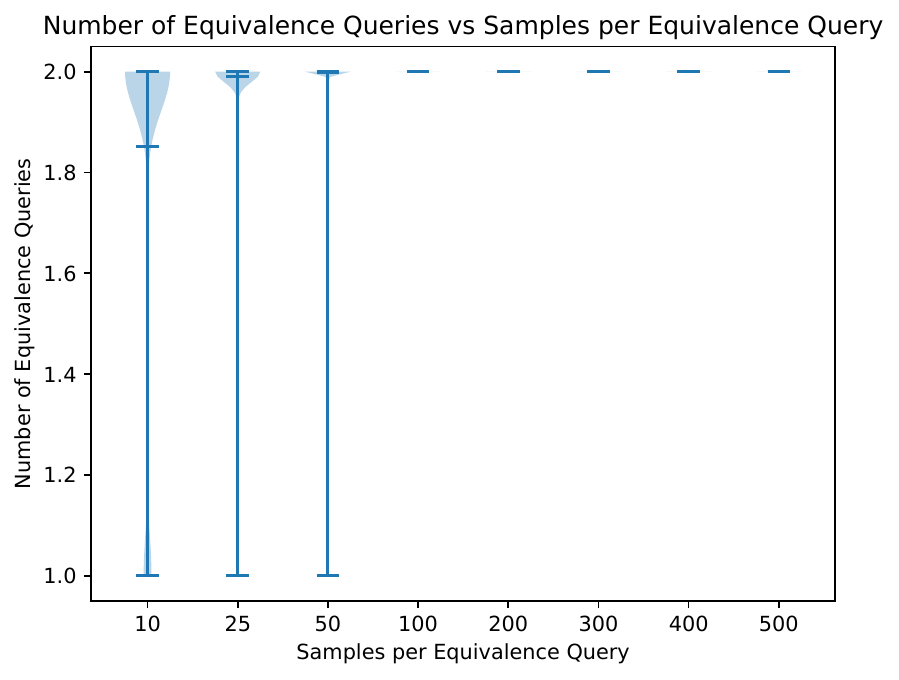} &
    \includegraphics[width=0.19\textwidth]{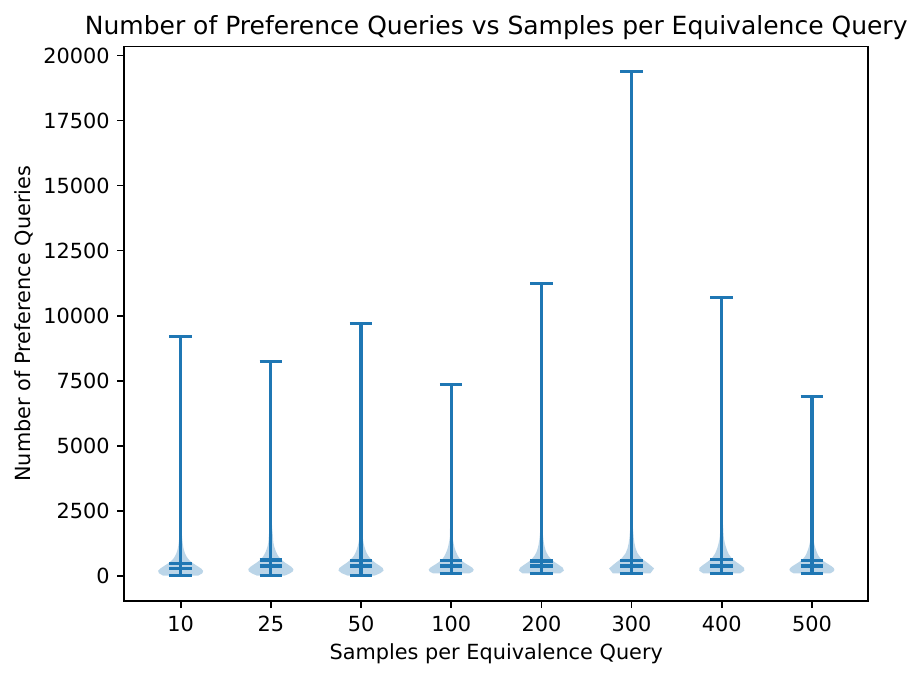} &
    \includegraphics[width=0.19\textwidth]{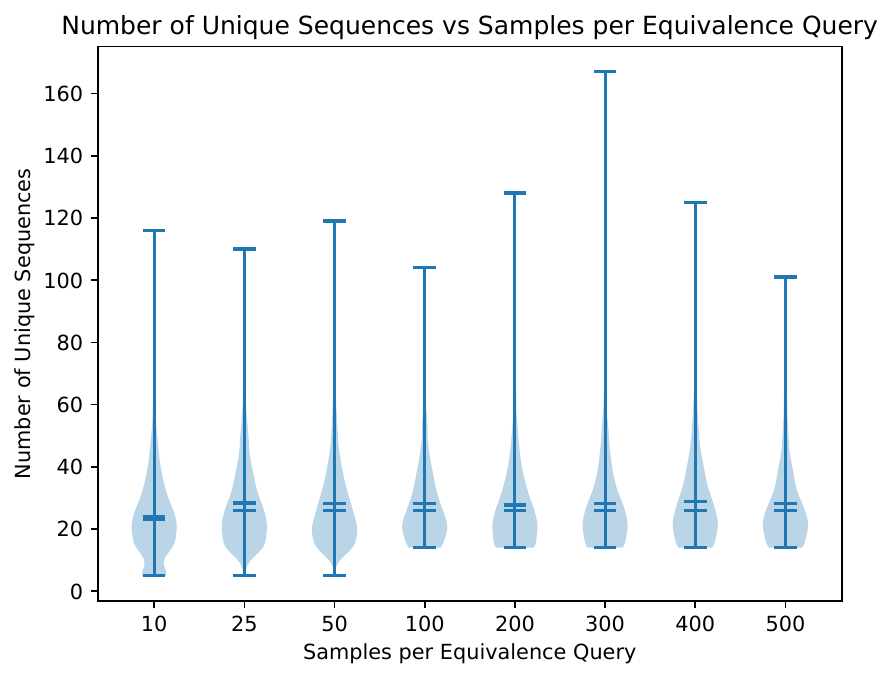}\\
    $(a^*b) | (b^*a)$ & 
    \includegraphics[width=0.19\textwidth]{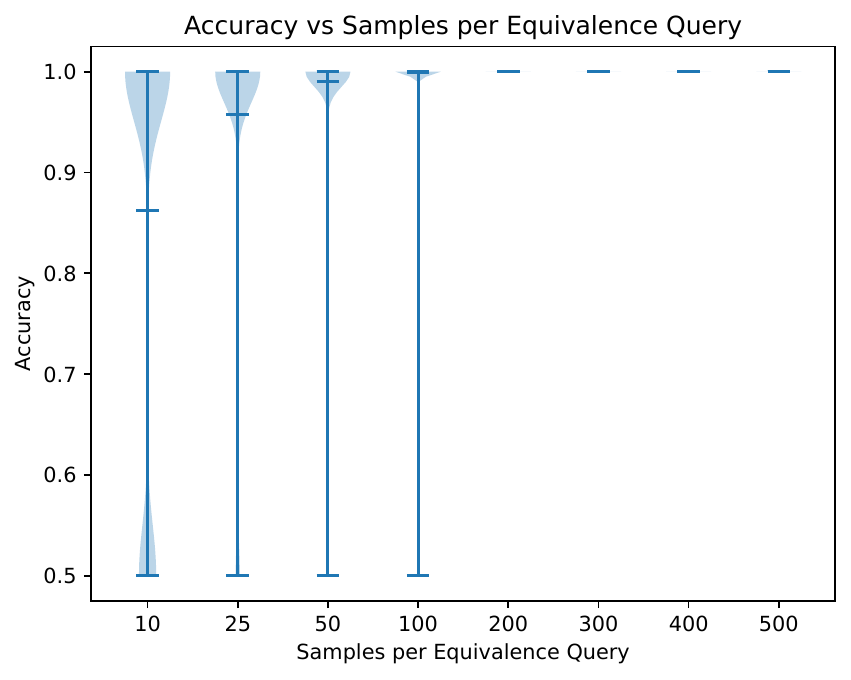} &
    \includegraphics[width=0.19\textwidth]{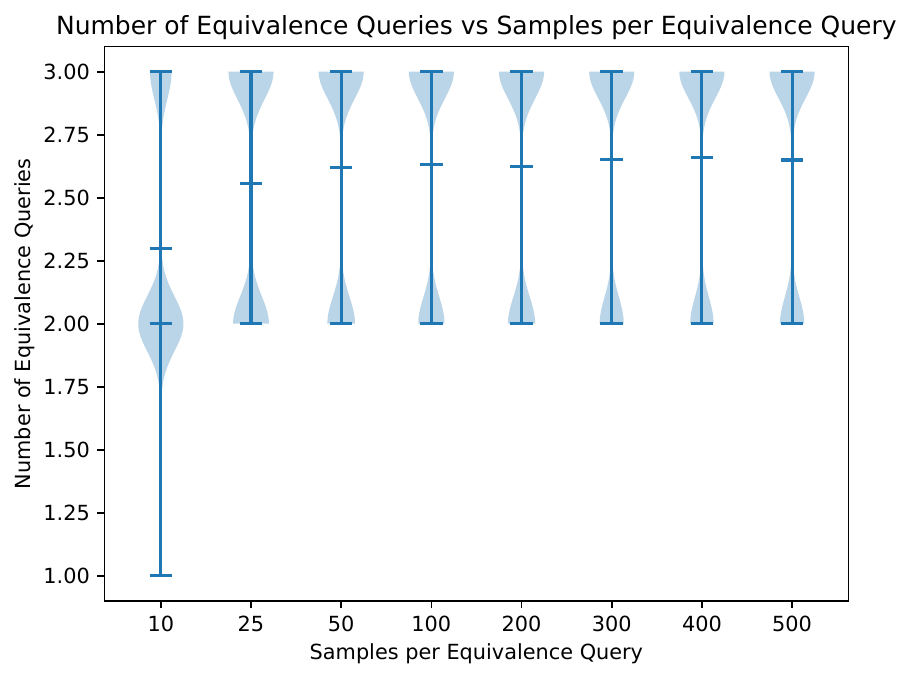} &
    \includegraphics[width=0.19\textwidth]{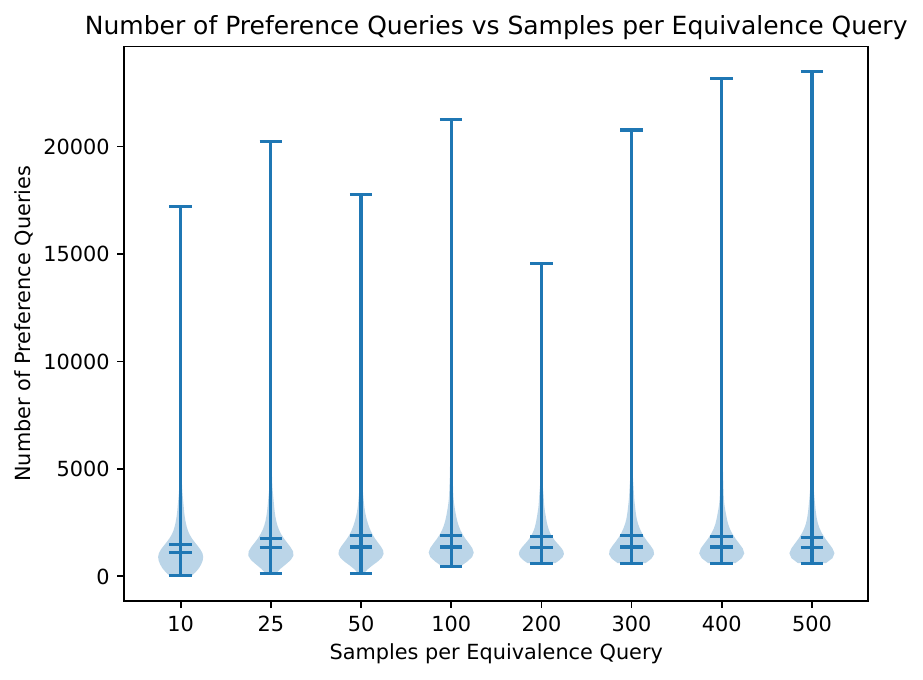} &
    \includegraphics[width=0.19\textwidth]{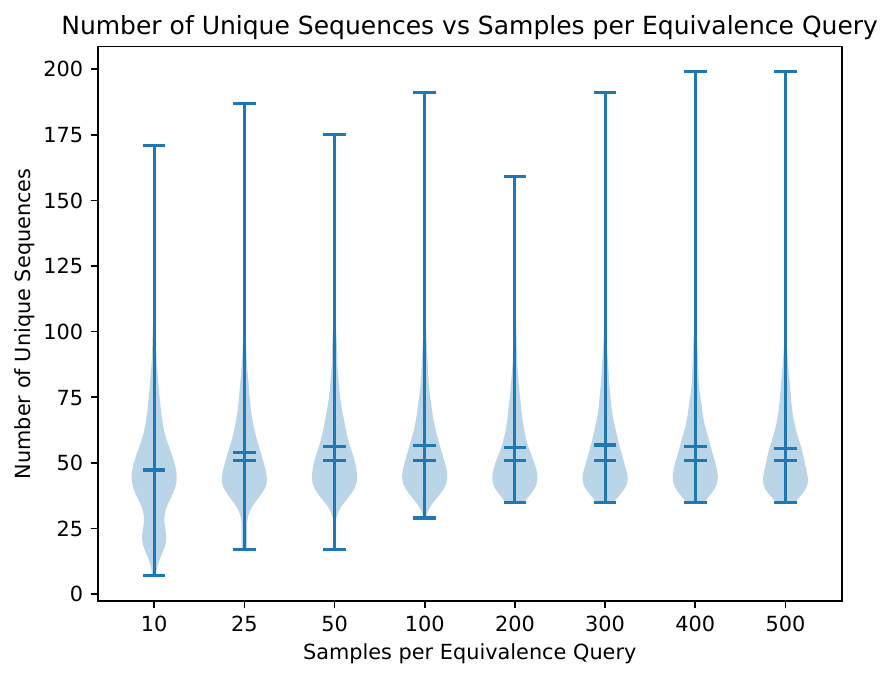}\\
    $(a^*b) | (b^*a) | (a|b)^*$ & 
    \includegraphics[width=0.19\textwidth]{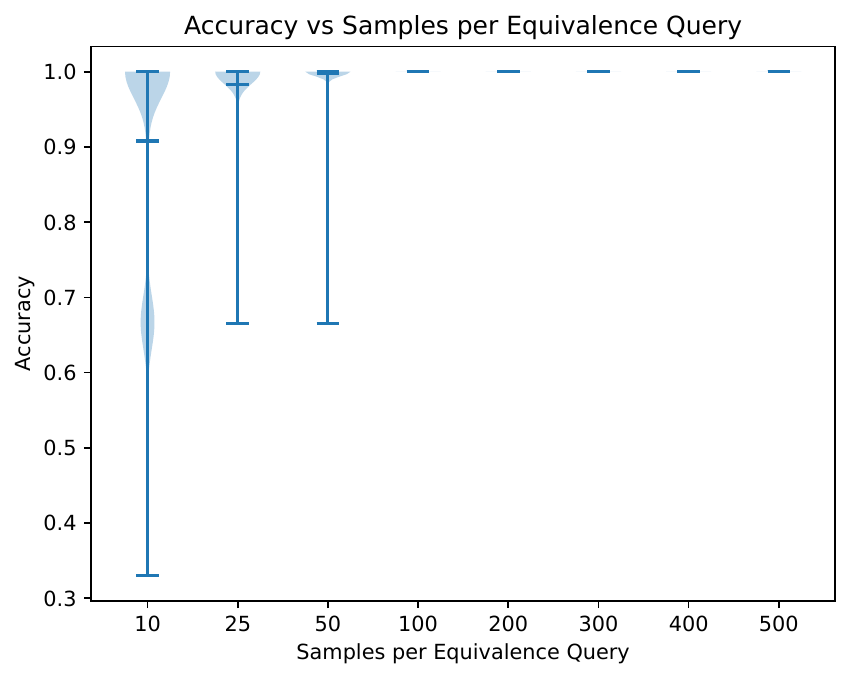} &
    \includegraphics[width=0.19\textwidth]{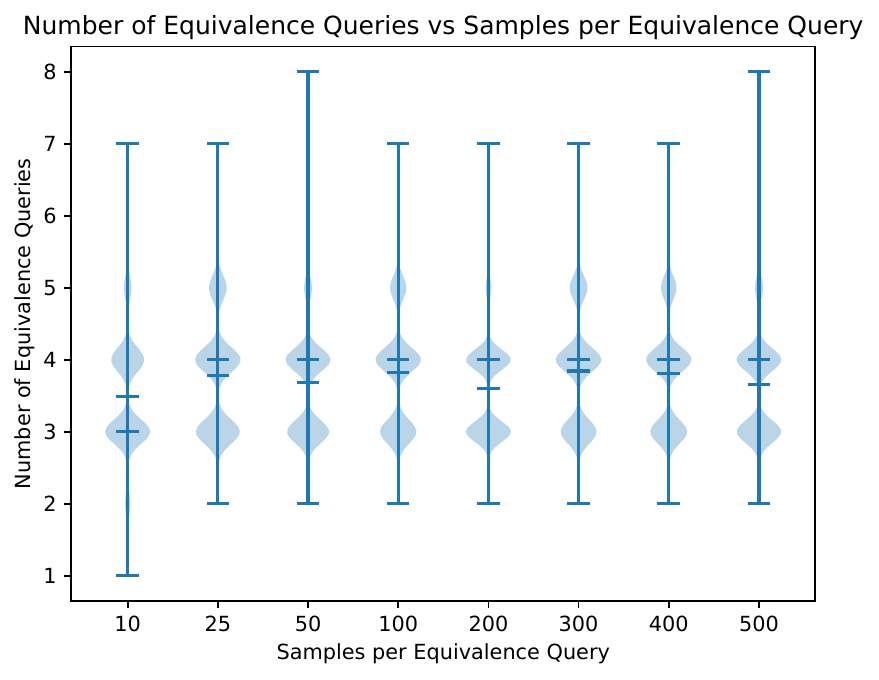} &
    \includegraphics[width=0.19\textwidth]{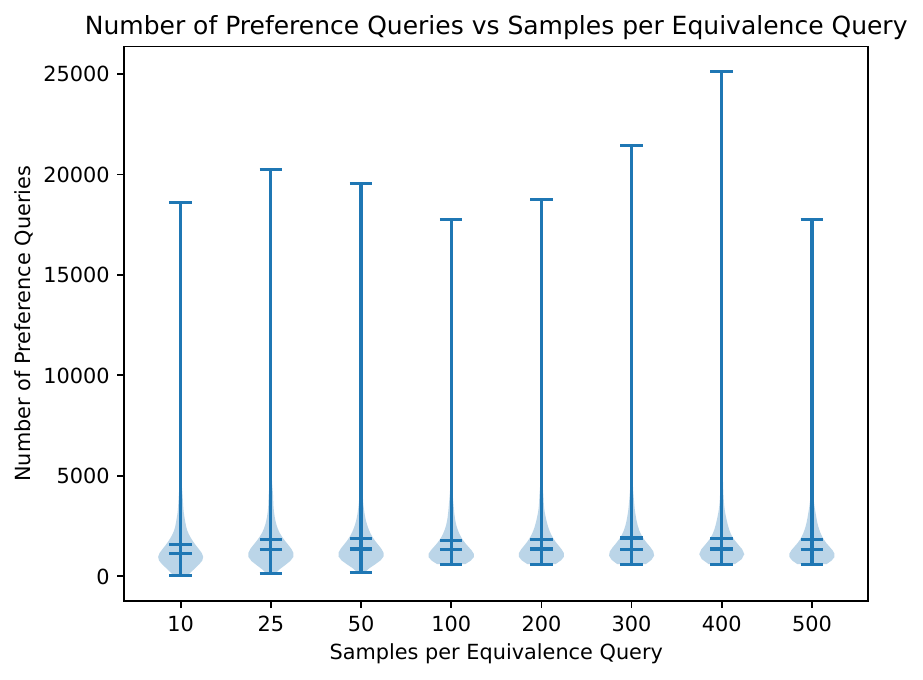} &
    \includegraphics[width=0.19\textwidth]{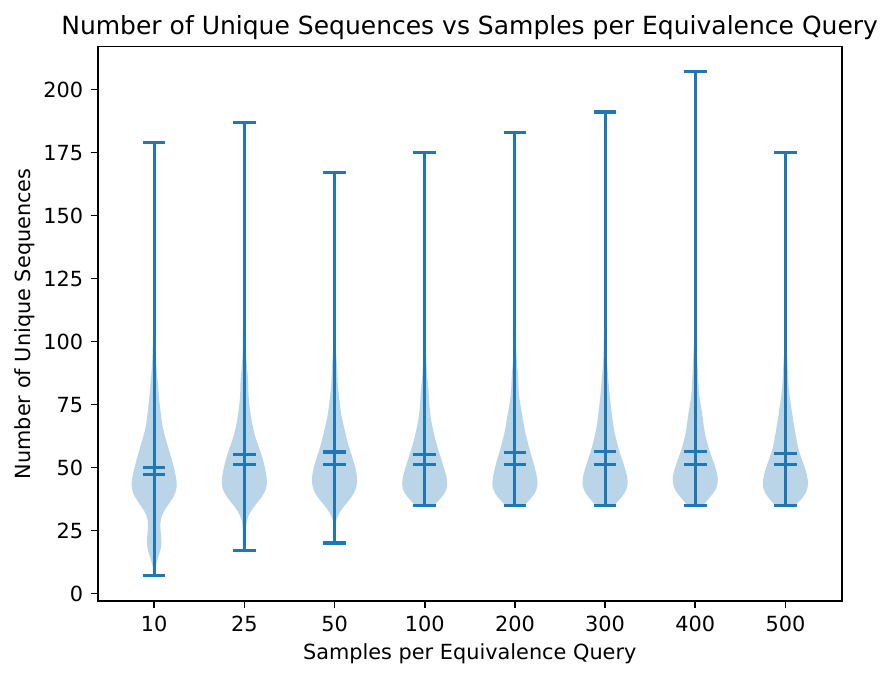}\\
    $(a^*b) | (b^*a) | (ab)^*$ & 
    \includegraphics[width=0.19\textwidth]{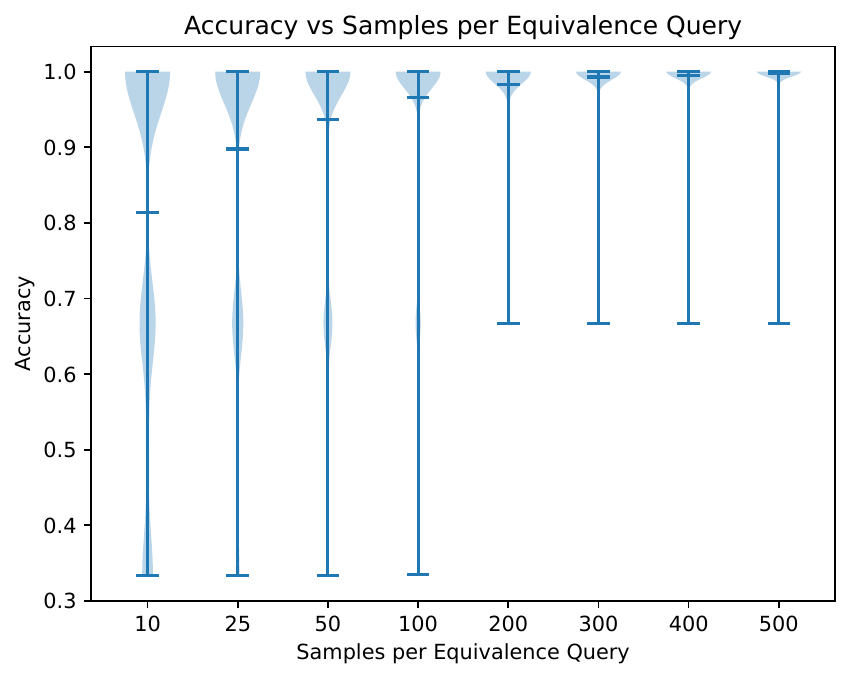} &
    \includegraphics[width=0.19\textwidth]{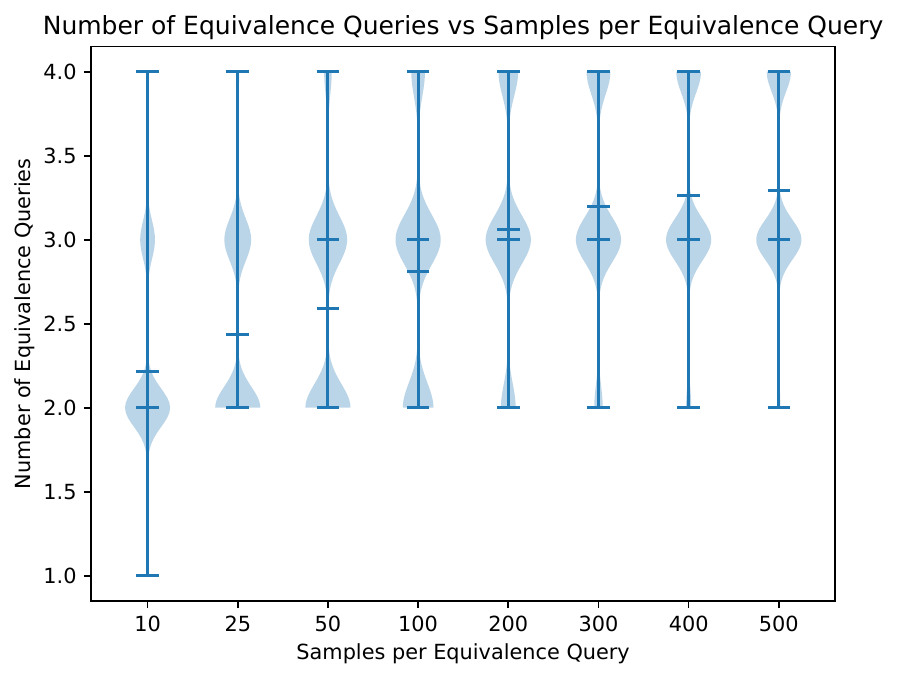} &
    \includegraphics[width=0.19\textwidth]{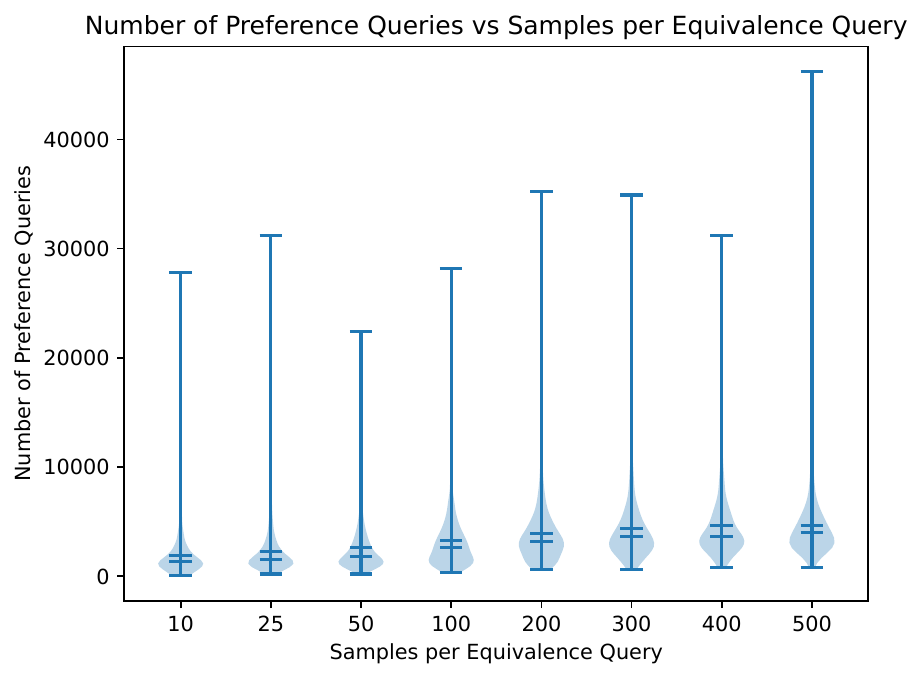} &
    \includegraphics[width=0.19\textwidth]{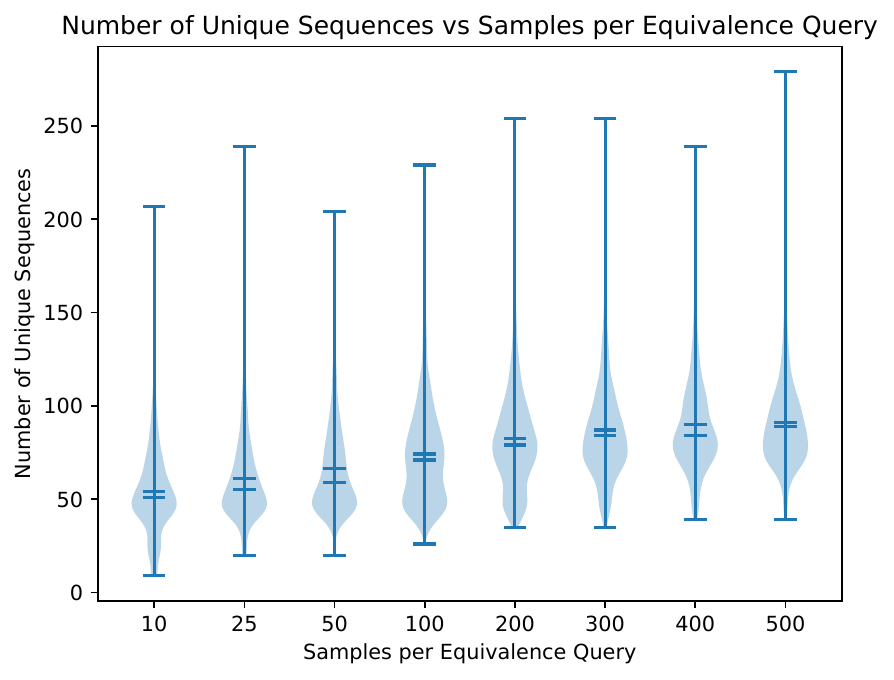}\\
    $(a^*b) | (b^*a) | (ab)^* | (ba)^*$ &
    \includegraphics[width=0.19\textwidth]{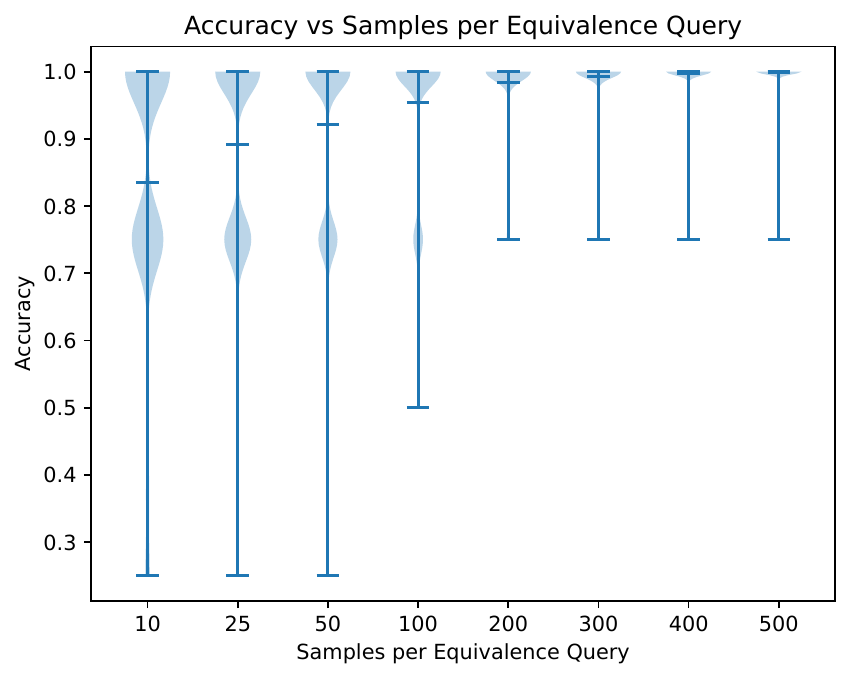} &
    \includegraphics[width=0.19\textwidth]{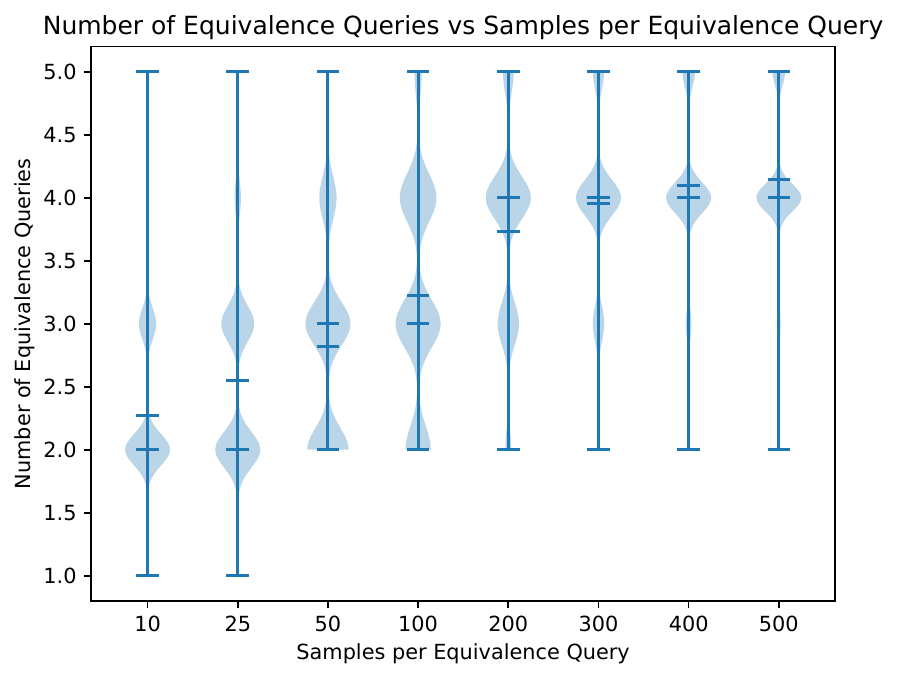} &
    \includegraphics[width=0.19\textwidth]{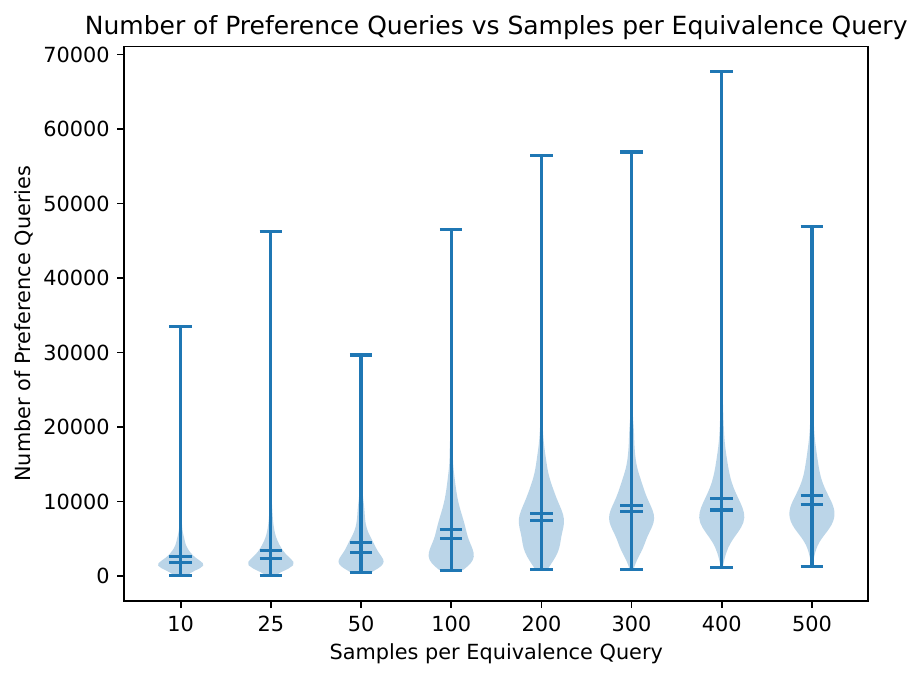} &
    \includegraphics[width=0.19\textwidth]{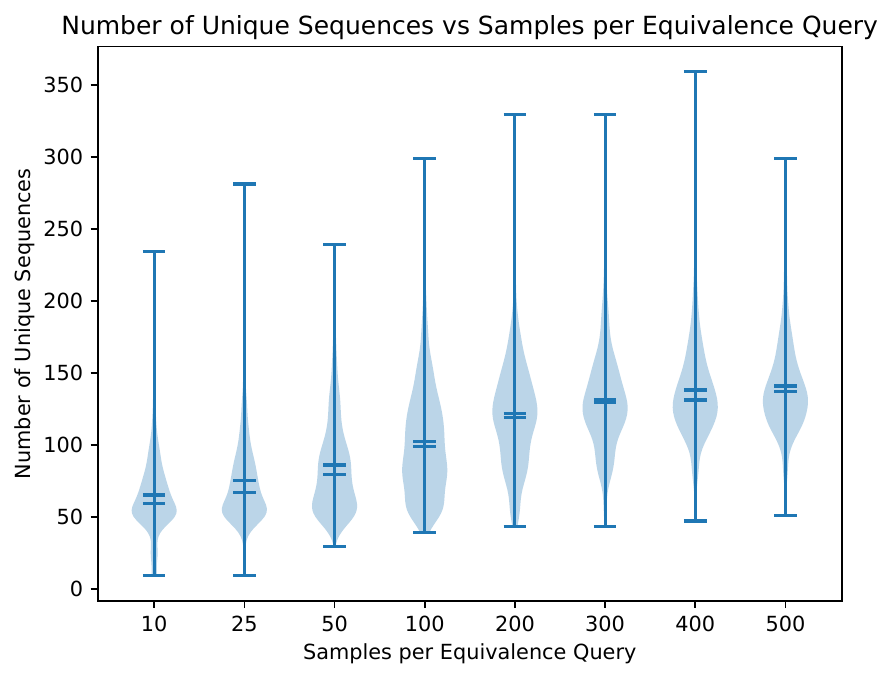}\\
    \end{tblr}
    \caption{Violin Plots of Terminal Hypothesis Accuracy, Number of Equivalence Queries executed, Number of Preference Queries executed, and Number of Unique Sequences Tested, as a function of Maximum Number of Random Tests performed per Equivalence Query, for learning a variety of regexes using a two-letter alphabet. During an Equivalence Query, the teacher samples $N$ random sequences, with the length of each sequence drawn iid from a geometric distribution with termination probability $0.2$. Given a sequence test length, each element of the sequence is chosen iid from a uniform distribution over the alphabet. To compute the terminal hypothesis accuracy, the terminal hypothesis is tested on $200$ random sequences per regex class. A total of $2000$ trials were conducted per $N$ value; the plots show the resulting empirical distributions of those trials.}
    \label{fig:violin_plots}
\end{figure*}

\begin{figure*}
    \centering
    \includegraphics[width=\textwidth]{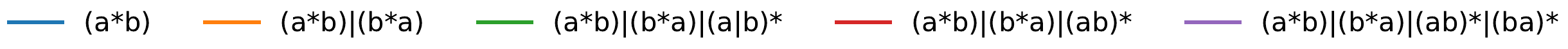}\\
    \includegraphics[height=0.245\textwidth]{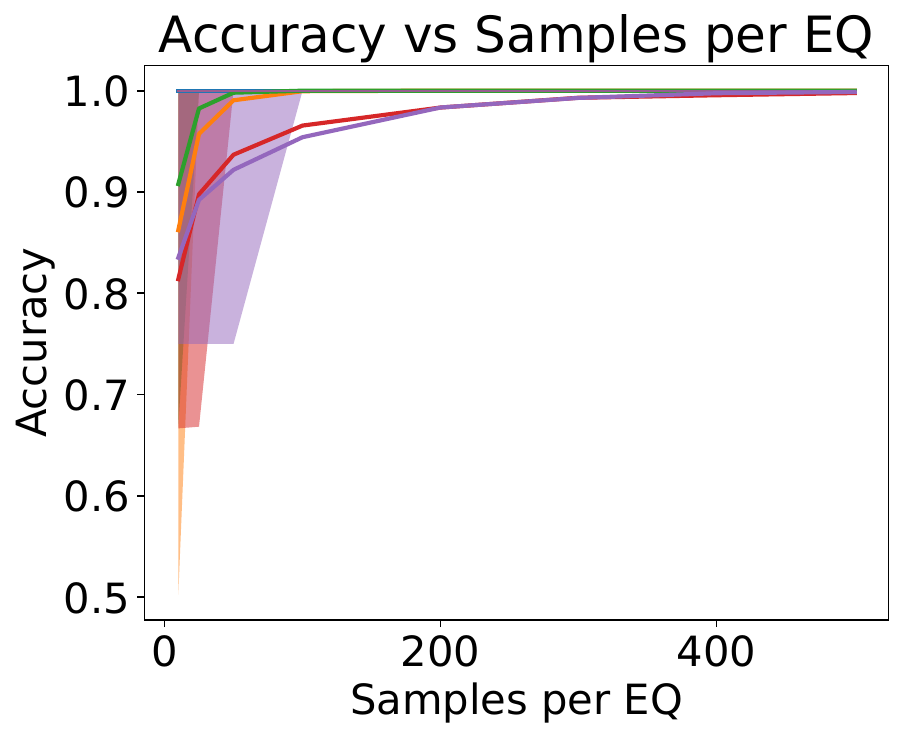}
    \includegraphics[height=0.245\textwidth]{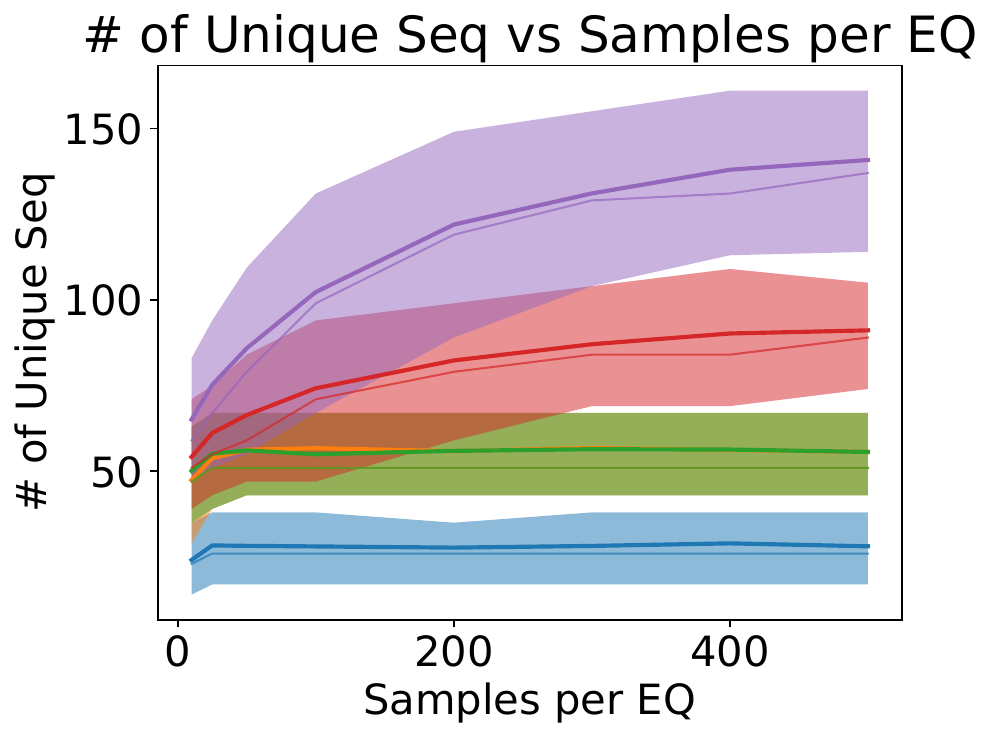}\\
    \includegraphics[height=0.245\textwidth]{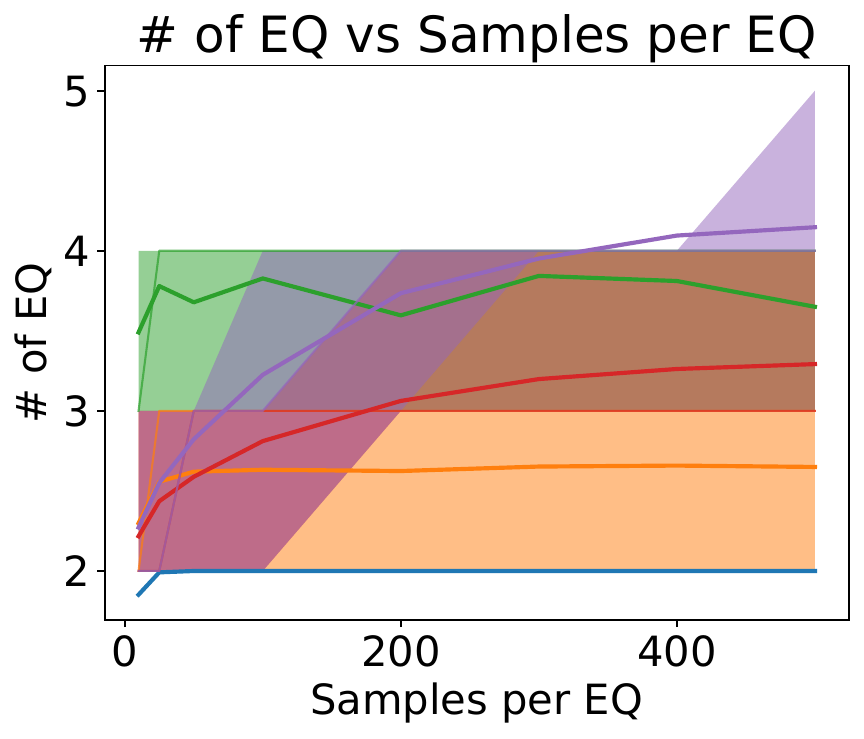}
    \includegraphics[height=0.245\textwidth]{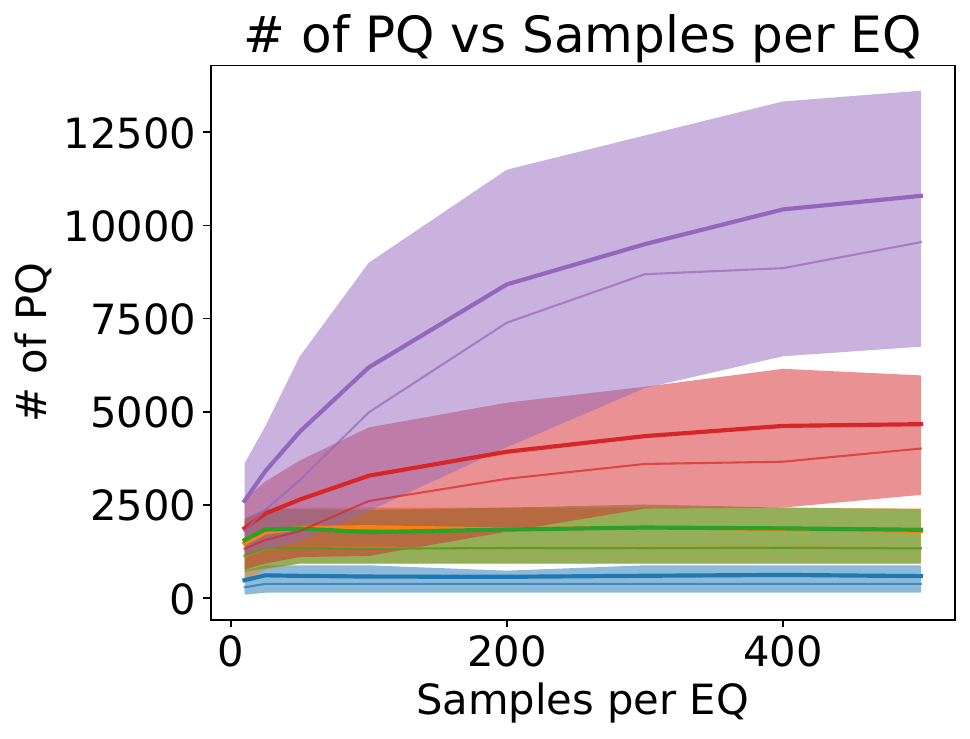}
    \caption{Plots of the same data as in Figure \ref{fig:violin_plots}. Mean values are represented by thick lines, median values are represented by thin lines, and values lying between the $20^{th}$ and $80^{th}$ percentiles are shown as shaded.}
    \label{fig:comparison_plots}
\end{figure*}

\begin{figure*}
    \centering
    \includegraphics[width=0.8\textwidth]{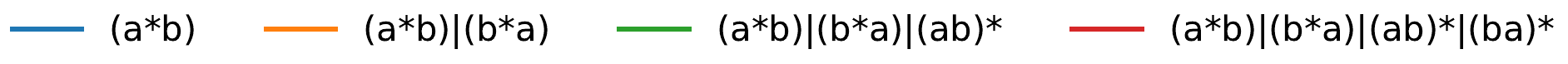}\\
    \includegraphics[height=0.245\textwidth]{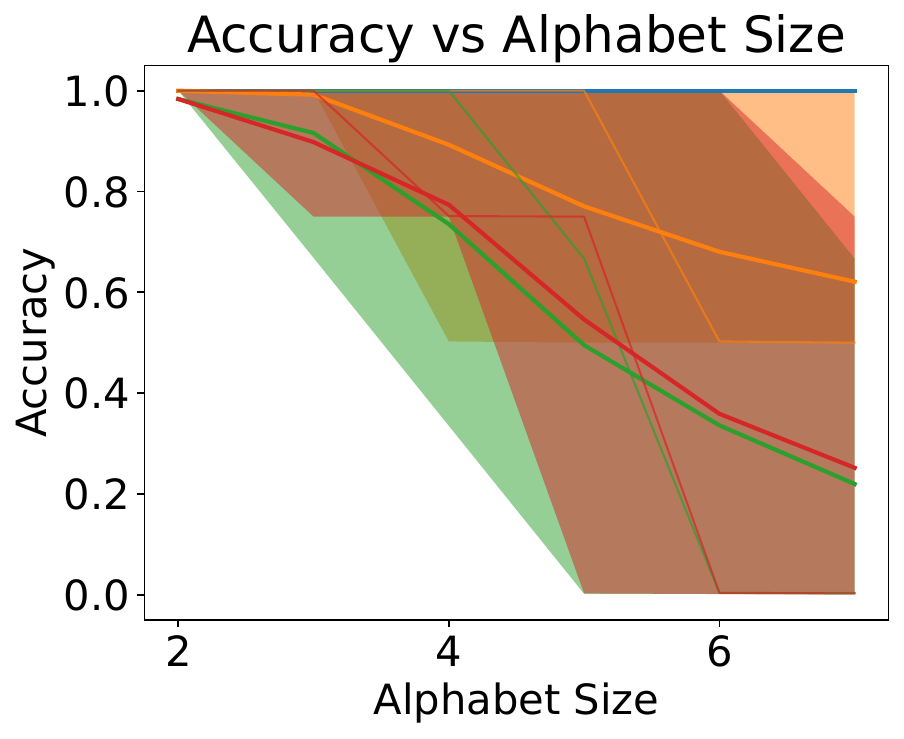}
    \includegraphics[height=0.245\textwidth]{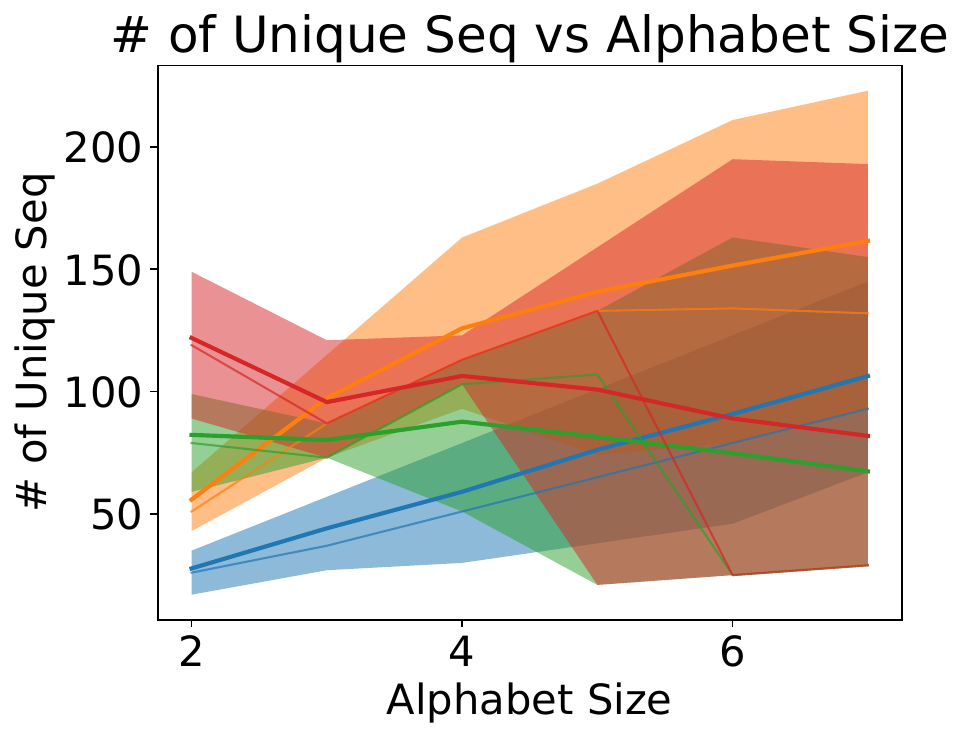}\\
    \includegraphics[height=0.245\textwidth]{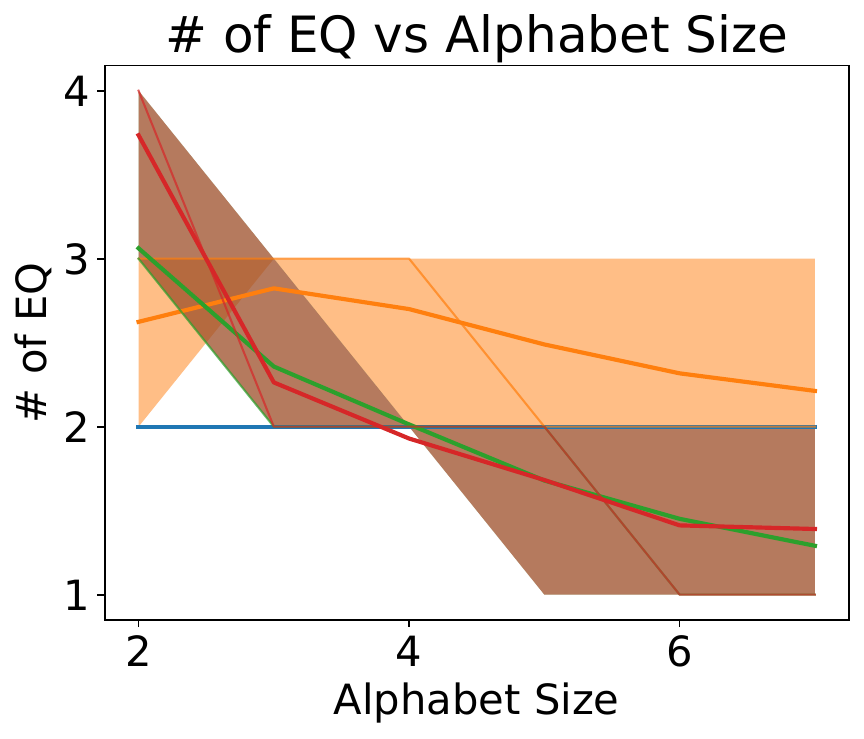}
    \includegraphics[height=0.245\textwidth]{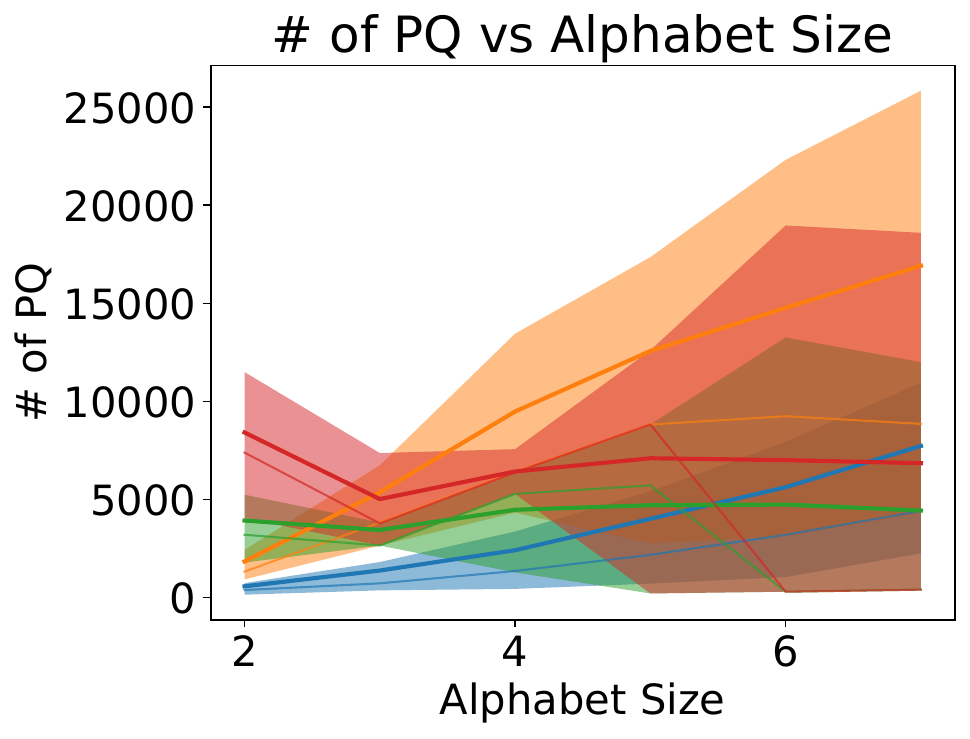}
    \caption{Plots accuracy, number of unique sequences tested, number of equivalence queries performed, and number of preference queries made, as a function of alphabet size. These plots are set at 200 samples per EQ. Mean values are represented by thick lines, median values are represented by thin lines, and values lying between the $20^{th}$ and $80^{th}$ percentiles are shown as shaded.}
    \label{fig:comparison_alpha_size_plots}
\end{figure*}

\begin{figure*}
    \centering
    \includegraphics[height=0.24\textwidth]{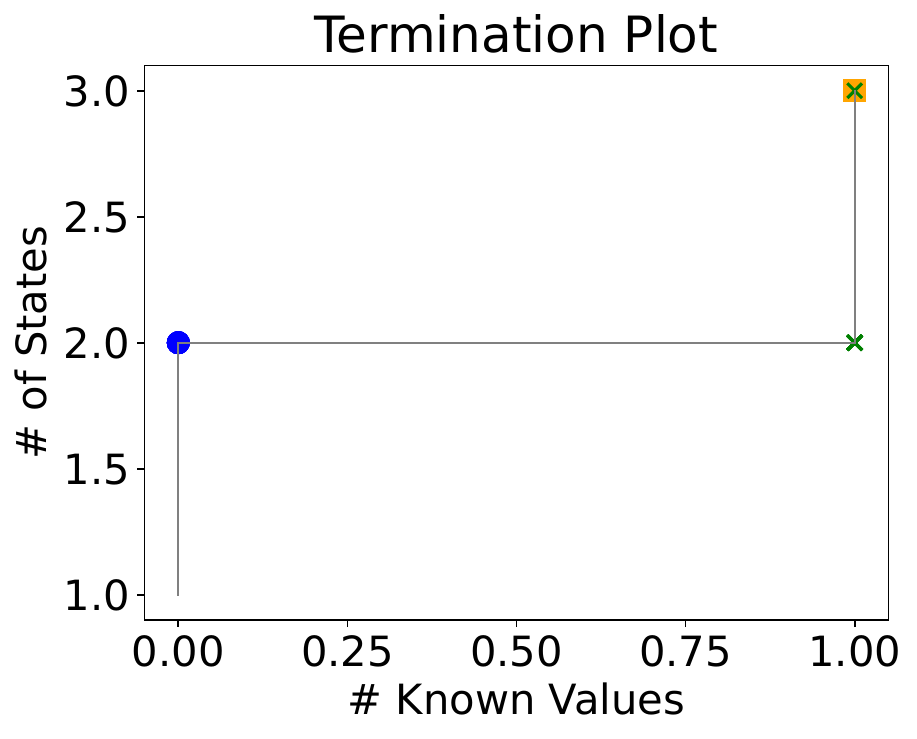}
    \includegraphics[height=0.24\textwidth]{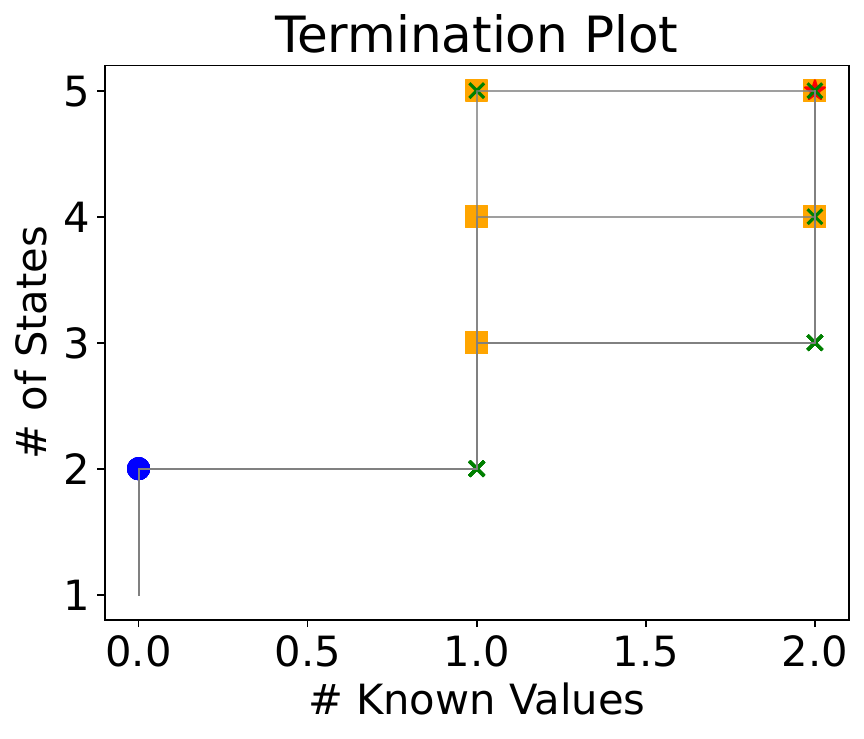}
    \includegraphics[height=0.24\textwidth]{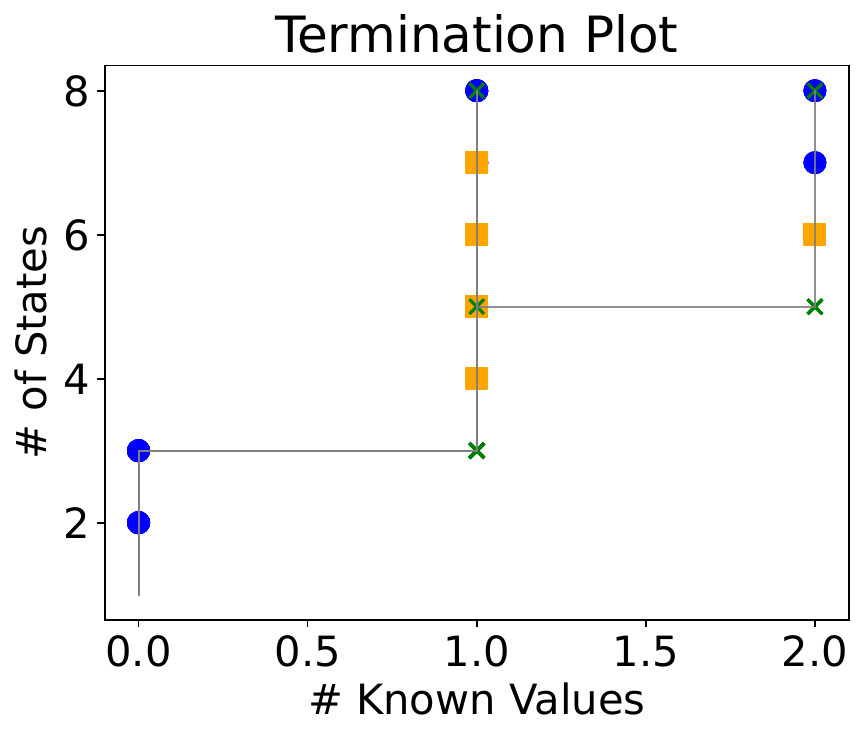}
    \includegraphics[height=0.24\textwidth]{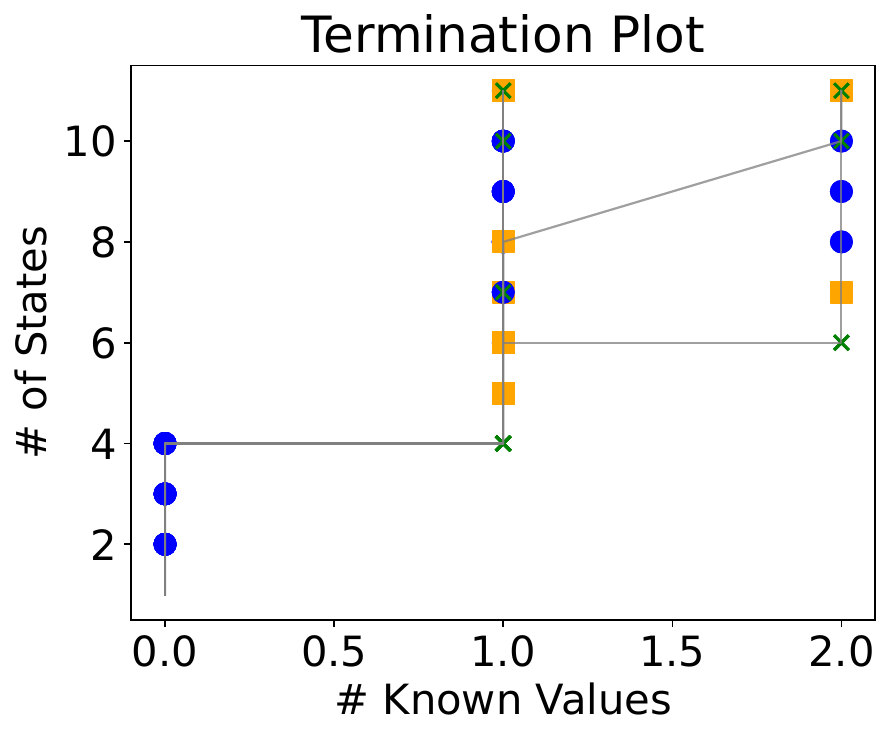}
    \includegraphics[height=0.24\textwidth]{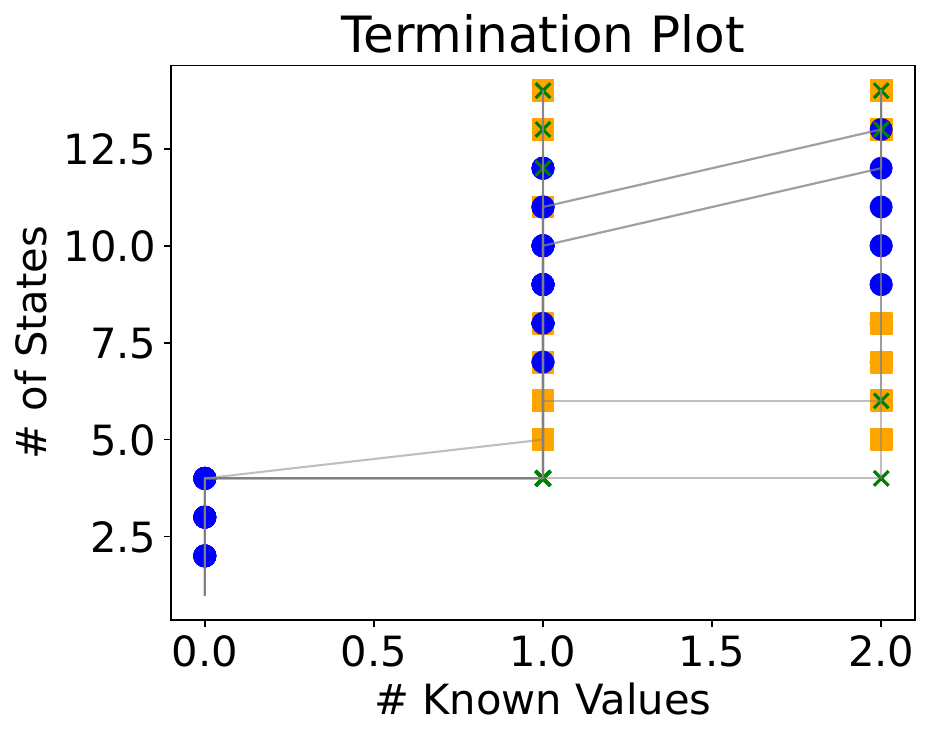}
    \caption{Termination plots illustrating that number of state and number of known variables increase monotonically. The blue circles represent closure operations, orange squares represent consistency operations, and green Xs represent equivalence queries. The number of states in the automata from left to right is 3, 5, 8, 11, 14. We observe that for termination to occur, the number of states must be correct. The number of explicitly known values does not need to match the number of classes to terminate, however, because the learner can simply ``guess'' the correct values for early termination.}
    \label{fig:termination_plots}
\end{figure*}
\end{document}